\documentclass[10pt]{article}
\usepackage[letterpaper,margin=1in]{geometry}
\usepackage{amsmath,amssymb,amsfonts,amsthm}
\usepackage{mathtools}
\usepackage{graphicx}
\usepackage{epstopdf}
\usepackage{multirow,multicol}
\usepackage{algorithm}
\usepackage{algorithmic}
\usepackage{enumitem}
\usepackage{microtype}
\usepackage[hidelinks]{hyperref}
\usepackage[nameinlink,capitalise,noabbrev]{cleveref}
\DeclareGraphicsExtensions{.pdf,.png,.jpg,.jpeg}

\setlist[enumerate]{leftmargin=.5in}
\setlist[itemize]{leftmargin=.5in}

\newcommand{\email}[1]{\href{mailto:#1}{#1}}

\numberwithin{equation}{section}
\theoremstyle{plain}
\newtheorem{theorem}{Theorem}[section]
\newtheorem{proposition}[theorem]{Proposition}
\newtheorem{lemma}[theorem]{Lemma}
\newtheorem{corollary}[theorem]{Corollary}

\theoremstyle{definition}
\newtheorem{assumption}[theorem]{Assumption}

\theoremstyle{remark}
\newtheorem{remark}[theorem]{Remark}

\crefname{hypothesis}{Hypothesis}{Hypotheses}
\crefname{fact}{Fact}{Facts}
\crefname{assumption}{Assumption}{Assumptions}

\title{Diffusion Based Unpaired Data Learning for Inverse Problems}

\author{Chenglong Bao\thanks{Yau Mathematical Sciences Center, Tsinghua University, Beijing, China
  (\email{clbao@tsinghua.edu.cn}).}
\and Yiming Dang\thanks{Department of Mathematical Sciences, Tsinghua University, Beijing, China 
  (\email{dym22@mails.tsinghua.edu.cn}).}
\and Chenguang Duan\thanks{Institut f{\"u}r Geometrie und Praktische Mathematik, RWTH Aachen University, Aachen, Germany
  (\email{duan@igpm.rwth-aachen.de}).}
\and Yuling Jiao\thanks{School of Artificial Intelligence, Wuhan University, Wuhan, Hubei, China
(\email{yulingjiaomath@whu.edu.cn}).}
\and Defeng Sun\thanks{Department of Applied Mathematics, The Hong Kong Polytechnic University, Hong Kong, China
  (\email{defeng.sun@polyu.edu.hk}).}
}
\date{}

\hypersetup{
  pdftitle={Diffusion Based Unpaired Data Learning for Inverse Problems},
  pdfauthor={Chenglong Bao, Yiming Dang, Chenguang Duan, Yuling Jiao, Defeng Sun}
}

\begin{document}

\maketitle

\begin{abstract}
    Data is important in many deep learning-based inverse problem solvers. However, obtaining sufficient paired data in many scenarios remains highly challenging, while unpaired data is cheap. To maximize data utilization, this paper proposes LUD-DIF, a diffusion-based approach for solving inverse problems with unpaired data. Starting from the evidence lower bound (ELBO) of the joint distribution, we decouple it into two independent diffusion processes under the weak-coupling assumption. The method provides theoretical support from a variational inference perspective, derives the loss function, quantitatively analyzes the error bound introduced by the assumption, and offers a theorem-motivated heuristic for hyperparameter selection. Experimental results demonstrate that LUD-DIF achieves outstanding performance on multiple image inverse problems, validating its effectiveness and generalization capability in unpaired inverse problem settings.
\end{abstract}

\noindent\textbf{Keywords:} unpaired data, diffusion, noise modeling, image restoration

\medskip
\noindent\textbf{2020 Mathematics Subject Classification:} 65J22, 68T07, 94A08

\section{Introduction}\label{intro}
Inverse problems are widely encountered across numerous disciplines including computer vision~\cite{ribes2008linear}, geophysical exploration~\cite{wiggins1972general}, and biomedical imaging~\cite{bertero2006inverse}. These problems share a classical mathematical framework:
\[y = \mathcal{A}(x) + \Xi,\]
where $\mathcal{A}$ denotes the forward observation operator and $\Xi$ represents the observation noise. The goal of solving inverse problems is to robustly recover the unknown state $x$ from the observed data $y$. However, in real-world scenarios, the noise distribution is often unknown, and the observation operators typically exhibit highly nonlinear or low-rank characteristics, leading to the typical ill-posedness of inverse problems~\cite{kabanikhin2011inverse}. Traditional methods typically rely on certain prior assumptions (such as sparsity, smoothness, or low-rank structures)~\cite{tropp2010computational, spantini2015optimal} or employ maximum a posteriori (MAP) estimation and Bayesian inference under the assumption of Gaussian noise~\cite{stuart2010inverse}.

In recent years, deep learning techniques have brought a paradigm shift to solving inverse problems by training deep neural networks to learn complex mappings from observations to the underlying states directly from data~\cite{kamyab2022deep}.  
These learning-based approaches have surpassed traditional model-based regularization methods in many fields. They are capable of implicitly capturing high-dimensional and nonlinear data distributions and have achieved remarkable success in classical inverse problems such as image processing~\cite{ongie2020deep}, seismic wave inversion~\cite{adler2021deep}, and X-ray imaging~\cite{genzel2022near}.

Nevertheless, the exceptional performance of such end-to-end learning methods heavily relies on a prerequisite that is difficult to satisfy in practical applications: the availability of a large number of paired samples $\{(x_i, y_i)\}\sim p(x,y)$ for supervised training. In some real-world scenarios, such as medical or astronomical imaging, acquiring paired data is prohibitively expensive or entirely infeasible. Researchers are often restricted to accessing only independent ground-truth data $\{x_i\}_{i=1}^N \sim p(x)$ and observation data $\{y_j\}_{j=1}^M \sim p(y)$. This limitation has emerged as the primary bottleneck restricting the widespread application of deep learning in complex inverse problems. To alleviate this issue, some studies have attempted to perform self-supervised or unsupervised learning relying exclusively on the observation data $\{y_j\}$~\cite{lehtinen2018noise2noise, krull2019noise2void}; however, these approaches typically depend on strong heuristic priors. Another line of research utilizes only the ground-truth data $\{x_i\}$ to pre-train deep generative priors $p(x)$, which are subsequently combined with posterior sampling for problem resolution~\cite{daras2024survey}. Yet, such methods generally require exact knowledge of both the forward observation operator and the noise model.

Unpaired learning attempts to simultaneously exploit information from the marginal distributions $p(x)$ and $p(y)$ but without paired samples. From a probabilistic perspective, the essence of unpaired learning lies in inferring and modeling the joint distribution $p(x,y)$ under the constraint of these marginal distributions. Researchers often formulate unpaired learning as a generative problem, i.e., the generation of the
corresponding $y$ from a given $x$. Following this line, Generative Adversarial Networks (GANs) have been extensively applied due to their powerful cross-domain generation capabilities~\cite{li2019asymmetric, wei2020semi, arhire2025learned}. Nevertheless, these methods suffer from mode collapse and training instability, and they lack a rigorous probabilistic explanation. Variational Autoencoder (VAE)-based methods offer enhanced interpretability by introducing latent variables and optimizing the Evidence Lower Bound (ELBO)~\cite{zheng_learn_2023}; however, constrained by strong assumptions regarding the latent space distribution, their generation quality is often significantly compromised. Recent diffusion-based methods have begun to address conditional generation from unpaired data. OTCS constructs an optimal-transport coupling and trains a conditional score model using unpaired or partially paired data~\cite{gu_optimal_2023}. Diffusion Distribution Matching learns an unknown forward operator from unpaired clean and degraded images through conditional flow matching~\cite{meanti_unsupervised_2025}, whereas RealDGen synthesizes paired super-resolution data from unpaired HR and LR images using a content--degradation-decoupled diffusion model~\cite{peng_towards_2025}. Schrödinger-bridge methods provide another coupling-based approach to unpaired translation, with recent bridge-flow formulations reducing repeated diffusion-model training~\cite{liu_generalized_2024,gushchin_adversarial_2024,kim2023unpaired,de_bortoli_schrodinger_2024}. Relatedly, DRDD interprets Gaussian perturbation as a domain-harmonization mechanism for data-efficient image translation~\cite{lin_decoupled_2026}.

Despite this progress, a principled framework for deriving a likelihood-based conditional diffusion objective from fully unpaired marginals and quantifying the bias induced by the surrogate coupling remains underexplored. To address this gap, we propose LUD-DIF (\textbf{L}earning from \textbf{U}npaired \textbf{D}ata via \textbf{Dif}fusion Model) for inverse problems with unpaired data. LUD-DIF constructs proxy pairs through diffusion alignment and derives a trainable dual-path conditional diffusion objective from a marginal--conditional decomposition of the joint ELBO. This formulation relies solely on unpaired marginal samples and explicitly quantifies the bias induced by the weak-coupling approximation. The main contributions are summarized as follows:

\begin{itemize}
    \item From the perspective of variational inference, we decouple the ELBO of the joint distribution under a reasonable data prior assumption. This provides a feasible training algorithm for solving inverse problems with unpaired data using diffusion models.
    \item We quantitatively analyze the theoretical error bound introduced by our assumptions. Based on this analysis, we propose a theorem-motivated heuristic for selecting the hyperparameters $(S,V)$, establishing a theoretically grounded, practical strategy for model tuning.
    \item We validate the proposed method on natural image denoising and super-resolution datasets. Experimental results demonstrate that LUD-DIF achieves outstanding performance across these tasks.
\end{itemize}

\section{Preliminaries}\label{preli}

\par Diffusion models~\cite{ho_denoising_2020,song2020score} learn to approximate an unknown data distribution $p_{U_0}$ by gradually adding noise to the data and then learning to reverse this process. We briefly introduce the main components of diffusion models.

\paragraph{Forward process}
Let $T\in\mathbb{N}_{+}$. The forward process adds noise via a fixed Markovian chain:
\begin{align}\label{eq:forward:X}
p_{U_{1:T}|U_{0}}(u_{1:T}|u_{0}) \coloneq \prod_{t=1}^{T}p_{U_{t}|U_{t-1}}(u_{t}|u_{t-1}), 
p_{U_{t}|U_{t-1}}(\cdot|u_{t-1}) \coloneq \mathcal{N}(\sqrt{\alpha_{t}}u_{t-1},(1-\alpha_{t})I_{d}),
\end{align}
where $\{\alpha_t\}_{t=1}^T \subset (0, 1)$ is a prescribed variance schedule~\cite{ho_denoising_2020}. Let $\bar{\alpha}_{0}\coloneqq 1$ and $\bar{\alpha}_t \coloneq \prod_{i=0}^{t}\alpha_{i}$. Iterating this recursion yields the closed-form conditional marginal distribution for $U_{t}$ given $U_{0}=u_{0}$:
\begin{equation}\label{eq:forward:X:t}
p_{U_{t}|U_{0}}(\cdot|u_{0}) = \mathcal{N}(\sqrt{\bar{\alpha}_{t}}u_{0},(1-\bar{\alpha}_{t})I_{d}).
\end{equation}
As $\bar{\alpha}_T \to 0$, the final distribution $p_{U_{T}|U_{0}}$ converges to a standard Gaussian $\mathcal{N}(0, I_d)$.

\paragraph{Reverse process} 
To construct a generative model, the reverse process is parameterized as a Markovian chain with learned transition kernels starting from $p_{U_{T}}(u_{T}) \coloneq \mathcal{N}(0, I_d)$:
\begin{equation}\label{eq:reverse:X}
p_{U_{t-1}|U_{t}}^{\theta}(\cdot|u_{t}) \coloneq \mathcal{N}(\mu_{t}^{\theta}(u_{t}),\sigma_{t}^{2}I_{d}),
\end{equation}
where $\mu_t^{\theta}:\mathbb{R}^{d}\to\mathbb{R}^{d}$ is a neural network and $\sigma_{t}^{2}$ is a prescribed variance schedule. 

\paragraph{ELBO formulation}
Diffusion models are trained by optimizing the Evidence Lower Bound (ELBO) of the log-likelihood. Through Markovian properties, the ELBO can be rewritten as a tractable decomposition~\cite[Appendix A]{ho_denoising_2020}:
\begin{equation}\label{eq:elbo:diffusion}
\begin{aligned}
\mathrm{ELBO}(\theta;u_{0}) 
&= -D_{\mathrm{KL}}(p_{U_{T}|U_{0}}(\cdot|u_{0})\|p_{U_{T}}) + \mathbb{E}\big[\log p_{U_{0}|U_{1}}^{\theta}(u_0|U_{1})\big|U_{0}=u_{0}\big] \\
&\quad - \sum_{t=2}^{T}\mathbb{E}\big[D_{\mathrm{KL}}\big(p_{U_{t-1}|U_{t},U_{0}}(\cdot|U_{t},u_{0})\| p_{U_{t-1}|U_{t}}^{\theta}(\cdot|U_{t})\big)\big|U_{0}=u_{0}\big].
\end{aligned}
\end{equation}
The first term is constant due to the construction of the forward process, and the second term is the reconstruction term.
Moreover, we have $p_{U_{t-1}|U_{t},U_{0}}(\cdot|u_{t},u_{0})=\mathcal{N}(\tilde{\mu}_{t}(u_{t},u_0),\tilde{\beta}_{t}I_{d})$, where
\begin{equation}\label{eq:mu:beta}
\tilde{\mu}_{t}(u_{t},u_{0}) = \frac{\sqrt{\bar{\alpha}_{t-1}}\beta_{t}}{1-\bar{\alpha}_t}u_0 + \frac{\sqrt{\alpha_t}(1 - \bar{\alpha}_{t-1})}{1 - \bar{\alpha}_t}u_t, \quad  \tilde{\beta}_t = \frac{(1 - \bar{\alpha}_{t-1})\beta_t}{1 - \bar{\alpha}_t},
\end{equation}
where $\beta_t \coloneq 1-\alpha_t$. Substituting~\eqref{eq:mu:beta} into~\eqref{eq:elbo:diffusion} yields the mean-squared error loss function:
\begin{equation*}
\mathcal{L}(\theta) = \mathbb{E}\left[\sum_{t=1}^{T}\frac{1}{2\sigma_{t}^{2}}\| \mu_{t}^{\theta}(U_{t})-\tilde{\mu}_{t}(U_{t},U_{0})\|_{2}^{2}\right],
\end{equation*}
where $\tilde{\mu}_{1}(u_{1},u_{0})\coloneq u_{0}$.

\section{The LUD-DIF method}\label{method}
\subsection{Problem Formulation}\label{prob reduction}

Consider a general inverse problem modeled as
\begin{equation}
    \label{eq:inverse_problem_model}
    Y_{0} = \mathcal{A}(X_{0}) + \Xi,
\end{equation}
where $\mathcal{A}$ is the forward operator and $\Xi$ denotes the noise. In the unpaired data setting, we are given samples from the marginal distributions
\[
X_{0}^{1}, \ldots, X_{0}^{N} \overset{\mathrm{i.i.d.}}{\sim} p_{X_{0}}, 
\qquad
Y_{0}^{1}, \ldots, Y_{0}^{N} \overset{\mathrm{i.i.d.}}{\sim} p_{Y_{0}},
\]
and the goal is to recover the joint distribution $p_{X_0, Y_0}$. If the noise distribution were known, one could directly characterize the conditional distribution $p_{Y_0 \mid X_0}$ via the forward model \eqref{eq:inverse_problem_model}, and thereby recover the joint distribution. However, in practice, the noise distribution is typically unknown and may be highly complex, which poses the main challenge in estimating $p_{X_0, Y_0}$.

In this work, we model the noise conditionally as $\Xi \sim p_{\Xi \mid \mathcal{A}(X_{0})}$. By treating $\mathcal{A}(X_0)$ as an effective signal variable and relabeling it as $X_0$, the forward model can be reduced to
\begin{align}
\label{eq:noise modeling}
    Y_{0} = X_{0} + \Xi, 
\qquad 
\Xi \sim p_{\Xi \mid X_{0}}.
\end{align}
This reformulation reduces the problem to estimating the conditional noise distribution, which we refer to as the \emph{noise modeling problem}. 

\begin{remark}
If the forward operator is unknown, it can be replaced by an approximate unbiased operator $\widehat{\mathcal{A}}$, i.e., $\mathbb{E}[\mathcal{A}(X_{0})]=\mathbb{E}[\widehat{\mathcal{A}}(X_{0})]$, that incorporates prior information. In this case, the model can be written as
\[
Y_0=\widehat{\mathcal A}(X_0)+\widetilde \Xi,
\qquad
\widetilde \Xi
=
\Xi+\mathcal A(X_0)-\widehat{\mathcal A}(X_0).
\]
The residual term is absorbed into the effective noise $\widetilde \Xi$, and the model is again reduced to the form in~\eqref{eq:noise modeling} with $\mathbb{E}[\widetilde{\Xi}]=\mathbb{E}[\Xi]$.
\end{remark}

\par Accordingly, in the following, we focus on this \emph{noise modeling problem}.

{\bf Data preprocessing.}  Let $\sigma_{X}^2 = \frac{1}{d}\mathbb{E}[\|X_0 - \mu_X\|_2^2]$ and $\sigma_{Y}^2 = \frac{1}{d}\mathbb{E}[\|Y_0 - \mu_Y\|_2^2]$, where $\mu_X=\mathbb{E}[X_0]$ and $\mu_Y=\mathbb{E}[Y_0]$. We normalize the data by
\begin{equation}\label{eq:data_prep}
\tilde{X}_0 = \frac{X_0}{\sigma_X},\quad \tilde{Y}_0 = \frac{Y_0}{\sigma_Y}.
\end{equation}
This normalization aligns the scales of the two marginal distributions, making their amplitudes comparable when constructing a joint diffusion process for $X_0$ and $Y_0$. 
For notational simplicity, we assume that this preprocessing step has been applied and, in the remainder of the paper, use $X_0$ and $Y_0$ to denote the normalized variables $\tilde{X}_0$ and $\tilde{Y}_0$.

\subsection{Learning joint distribution via diffusion models}

Let $Z_{0} \coloneq (X_{0}, Y_{0}) \sim p_{X_{0}, Y_{0}}$ denote the true but unobserved joint data distribution. We construct a diffusion process on the joint variable $Z_0$ by specifying an appropriate reverse process.

\paragraph{Forward process}
Treating the joint variable $Z_t \coloneq (X_t, Y_t)$ as the counterpart of $U_t$ in~\eqref{eq:forward:X}, we define the joint forward diffusion process as
\begin{equation}\label{eq:forward:joint}
X_{t} \coloneq \sqrt{\alpha_t} X_{t-1} + \sqrt{1-\alpha_t}\,\varepsilon_{t-1}^{X},
\quad 
Y_{t} \coloneq \sqrt{\alpha_t} Y_{t-1} + \sqrt{1-\alpha_t}\,\varepsilon_{t-1}^{Y},
\end{equation}
where $\varepsilon_{t-1}^{X}, \varepsilon_{t-1}^{Y} \sim \mathcal{N}(0, I_d)$ are independent standard Gaussian noise variables that are independent of $X_{t-1}$ and $Y_{t-1}$. We establish the following factorization result that decouples across $X_0$ and $Y_0$.
\begin{proposition}\label{prop:factorization} Let $Z_t=(X_t,Y_t)$ follow the forward process~\eqref{eq:forward:joint}. Then we have
\begin{subequations}
\begin{align}
p_{Z_{t}|Z_{0}}(z_{t}|z_{0}) &= p_{X_{t}|X_{0}}(x_{t}|x_{0})p_{Y_{t}|Y_{0}}(y_t|y_0), \label{eq:prop-forward} \\
p_{Z_{t-1}|Z_{t},Z_{0}}(z_{t-1}|z_{t},z_{0}) &= p_{X_{t-1}|X_{t},X_{0}}(x_{t-1}|x_{t},x_{0})p_{Y_{t-1}|Y_{t},Y_{0}}(y_{t-1}|y_{t},y_{0}), \label{eq:prop-posterior}
\end{align}
\end{subequations}
where $z_{t}\coloneq(x_{t},y_{t})$.
\end{proposition}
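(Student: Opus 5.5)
The plan is to use the Markov structure of the joint forward chain~\eqref{eq:forward:joint} together with the mutual independence of the two noise families $\{\varepsilon^{X}_{s}\}$ and $\{\varepsilon^{Y}_{s}\}$. This independence holds even though $X_0$ and $Y_0$ are themselves dependent.

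\textbf{First identity.} Unrolling~\eqref{eq:forward:joint} gives
\[
X_t=\sqrt{\bar\alpha_t}X_0+\textstyle\sum_{s}c_{s,t}\varepsilon^X_{s},\qquad Y_t=\sqrt{\bar\alpha_t}Y_0+\textstyle\sum_s c_{s,t}\varepsilon^Y_s,
\]
with deterministic coefficients $c_{s,t}$. Conditionally on $Z_0=z_0$, $X_t$ is therefore a function of $x_0$ and the $X$-noises only, and $Y_t$ is a function of $y_0$ and the $Y$-noises only. These two noise families are independent of each other and of $Z_0$. Hence, given $Z_0$, the variables $X_t$ and $Y_t$ are conditionally independent. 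Moreover, the conditional law of $X_t$ given $Z_0=(x_0,y_0)$ depends only on $x_0$, and equals $\mathcal N(\sqrt{\bar\alpha_t}x_0,(1-\bar\alpha_t)I_d)=p_{X_t|X_0}(\cdot|x_0)$ by~\eqref{eq:forward:X:t}; symmetrically for $Y_t$. This yields~\eqref{eq:prop-forward}.

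\textbf{Second identity.} Apply the first identity to the one-step kernel (the case $t=1$ with the chain started at $Z_{t-1}$) to get
\[
p_{Z_t|Z_{t-1}}(z_t|z_{t-1})=p_{X_t|X_{t-1}}(x_t|x_{t-1})\,p_{Y_t|Y_{t-1}}(y_t|y_{t-1}).
\]
By the Markov property of the joint chain, $p_{Z_t|Z_{t-1},Z_0}=p_{Z_t|Z_{t-1}}$. Bayes' rule then gives
\[
p_{Z_{t-1}|Z_t,Z_0}(z_{t-1}|z_t,z_0)=\frac{p_{Z_t|Z_{t-1}}(z_t|z_{t-1})\,p_{Z_{t-1}|Z_0}(z_{t-1}|z_0)}{p_{Z_t|Z_0}(z_t|z_0)}.
\]
Substitute the factorizations into all three factors and regroup the $X$-terms and the $Y$-terms. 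Each group is exactly Bayes' rule for the marginal chains $X$ and $Y$, which are themselves Markov with kernels~\eqref{eq:forward:X}. This gives $p_{X_{t-1}|X_t,X_0}\,p_{Y_{t-1}|Y_t,Y_0}$, i.e.~\eqref{eq:prop-posterior}; each factor is the Gaussian~\eqref{eq:mu:beta}.

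\textbf{Main obstacle.} The only delicate point is justifying that the conditional laws depend only on the matching coordinate, despite $X_0$ and $Y_0$ being coupled. Concretely, we need $p_{X_t|X_0,Y_0}=p_{X_t|X_0}$ and that the marginal $X$-process is Markov. I will handle this directly from the representation: $(\varepsilon^X_{0:t-1})$ is independent of $(Z_0,\varepsilon^Y_{0:t-1})$. Consequently, for every $s<t$, conditioning $X_t$ on $(X_{0:s},Y_{0:s})$ reduces to conditioning on $X_s$ alone. All densities are Gaussian and positive, so the Bayes manipulations are legitimate.
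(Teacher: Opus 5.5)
Your proposal is correct and follows the paper's proof essentially step for step. The first identity comes from conditional independence of $X_t$ and $Y_t$ given $Z_0$, which follows from the independent noise families; the second comes from Bayes' rule, the Markov property, the factorized one-step kernel, and then Bayes' rule on each marginal chain. Your explicit observation that $(\varepsilon^{X}_{0:t-1})$ is independent of $(Z_0,\varepsilon^{Y}_{0:t-1})$ gives $p_{X_t|X_0,Y_0}=p_{X_t|X_0}$ and shows the marginal chains are Markov, which makes precise what the paper uses implicitly in its final Bayes step.
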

\begin{proof}
    See Appendix~\ref{proof:factorization}.
\end{proof}

\paragraph{Reverse process}
Since paired samples $(X_0,Y_0)\sim p_{X_0,Y_0}$ are unavailable in the unpaired setting, it is difficult to directly adopt the reverse process parameterization~\eqref{eq:reverse:X} used in classical diffusion models. 
To construct a tractable variational reverse process, we first note that, 
for the $X$-first autoregressive ordering, the exact reverse transition kernel 
admits the chain-rule factorization
\begin{equation*}
p_{Z_{t-1}\mid Z_t}(z_{t-1}\mid z_t)
=
p_{X_{t-1}\mid X_t,Y_t}(x_{t-1}\mid x_t,y_t)\,
p_{Y_{t-1}\mid X_{t-1},X_t,Y_t}(y_{t-1}\mid x_{t-1},x_t,y_t).
\end{equation*}
This identity only motivates the autoregressive ordering. In the variational 
model, we do not attempt to represent the two exact conditionals above. 
Instead, we restrict the reverse model to a tractable factored Gaussian family:
\begin{equation}
\label{eq:prop-reverse}
\begin{aligned}
p_{Z_{t-1}\mid Z_t}^{\theta}(z_{t-1}\mid z_t)
&\coloneqq
p_{X_{t-1}\mid X_t}^{\theta}(x_{t-1}\mid x_t)\,
p_{Y_{t-1}\mid X_{t-1},Y_t}^{\theta}(y_{t-1}\mid x_{t-1},y_t) \\
&=
\mathcal{N}\bigl(x_{t-1};\mu_{X,t}^{\theta}(x_t),\sigma_t^2 I_d\bigr)
\,
\mathcal{N}\bigl(y_{t-1};\mu_{Y\mid X,t}^{\theta}(y_t, x_{t-1}),\sigma_t^2 I_d\bigr).
\end{aligned}
\end{equation}
The omission of $y_t$ in the first factor and $x_t$ in the second factor 
should be understood as a restriction of the variational family, rather than 
as a conditional-independence assumption on the true reverse process. 
Intuitively, the $X$-reverse kernel is used to model the clean structural 
marginal, for which $x_t$ is the most direct noisy observation, while the 
$Y$-reverse kernel is conditioned on the less diffused structural state 
$x_{t-1}$ together with the noisy observation $y_t$. This design keeps the 
conditional diffusion model trainable from the weak-coupling samples introduced 
below. Since both factors in \eqref{eq:prop-reverse} are normalized conditional 
Gaussian densities, their product defines a valid conditional density over 
$(x_{t-1},y_{t-1})$ given $(x_t,y_t)$.

Based on the above parameterization, we show that the proposed reverse process admits a corresponding decomposition of the ELBO.

\paragraph{ELBO formulation}
Analogously to~\eqref{eq:elbo:diffusion}, we identify $Z_t$ with $U_t$ and define the objective function as the negative-ELBO loss function:
\begin{equation}\label{eq:objective:joint}
\mathcal{L}_{\mathrm{joint}}(\theta) \coloneq -\mathbb{E}_{Z_{0}}\left[\mathrm{ELBO}_{Z}(\theta; Z_{0})\right],
\end{equation}
where $\mathrm{ELBO}_Z$ is given by
\begin{equation}\label{eq:ELBO:joint}
\begin{aligned}
\mathrm{ELBO}_Z(\theta;z_{0}) \coloneq & -D_{\mathrm{KL}}(p_{Z_{T}|Z_{0}}(\cdot|z_{0})\|p_{Z_{T}})+\mathbb{E}\left[\log p_{Z_{0}|Z_{1}}^{\theta}(z_0|Z_{1})|Z_{0}=z_{0}\right] \\
&- \sum_{t=2}^{T}\mathbb{E}\left[D_{\mathrm{KL}}\big(p_{Z_{t-1}|Z_{t},Z_{0}}(\cdot|Z_{t},z_{0})\|p_{Z_{t-1}|Z_{t}}^{\theta}(\cdot|Z_{t})\big)|Z_{0}=z_{0}\right].
\end{aligned}
\end{equation}
Since paired samples $z_0=(x_0,y_0)\sim p_{X_0,Y_0}$ are unavailable, we decompose the joint ELBO under the variational family in~\eqref{eq:prop-reverse} into an $X$-marginal ELBO and a conditional $Y|X$ ELBO.

\begin{proposition}[Decoupling of joint ELBO]\label{proposition:objective:groundtruth}
Let the joint diffusion process $\{Z_t\}_{t=0}^T$ be defined by~\eqref{eq:forward:joint} and~\eqref{eq:prop-reverse}, and let $\mathrm{ELBO}_{Z}$ be given by~\eqref{eq:ELBO:joint}. Then the following decomposition holds:
\begin{equation*}
\mathrm{ELBO}_{Z}(\theta; z_{0})
= \mathrm{ELBO}_{X}(\theta; x_{0})
+ \mathrm{ELBO}_{Y \mid X}(\theta; z_{0})
+ C(z_{0}),
\end{equation*}
where $C(z_{0})$ is a constant independent of $\theta$. The marginal ELBO $\mathrm{ELBO}_X$ is
\begin{align}
\label{eq:marginal-ELBO}
\mathrm{ELBO}_{X}(\theta;x_{0}) 
&= \mathbb{E}\left[\log p_{X_{0}|X_{1}}^{\theta}(x_{0}|X_{1})|X_{0}=x_{0}\right] \nonumber\\
    &\quad - \sum_{t=2}^{T}\mathbb{E}\left[D_{\rm KL}(p_{X_{t-1}|X_{t},X_{0}}(\cdot|X_{t},x_{0}) \| p_{X_{t-1}|X_{t}}^{\theta}(\cdot|X_{t})) \Big| X_{0}=x_{0}\right].
\end{align}
The conditional ELBO $\mathrm{ELBO}_{Y|X}$ is
\begin{align}
\label{eq:conditional-ELBO}
\mathrm{ELBO}_{Y|X}(\theta;z_0) 
&= \mathbb E\bigl[\log p^\theta_{Y_0\mid X_0,Y_1}(y_0\mid x_0,Y_1)\mid Z_0=z_0\bigr] \nonumber\\
&\quad - \sum_{t=2}^{T}\mathbb{E}\Bigl[D_{\rm KL}\Bigl(p_{Y_{t-1}|Y_{t},Y_{0}}(\cdot|Y_{t},y_{0}) \;\Big\|\; p_{Y_{t-1}\mid X_{t-1},Y_{t}}^{\theta}(\cdot \mid X_{t-1},Y_{t})\Bigr) \;\Big|\; Z_{0}=z_{0}\Bigr].
\end{align}
\end{proposition}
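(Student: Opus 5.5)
Following the structure of~\eqref{eq:ELBO:joint}, I would treat its three terms separately, using Proposition~\ref{prop:factorization} for the forward quantities and~\eqref{eq:prop-reverse} for the reverse model.

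\textbf{Prior term.} The first term $-D_{\mathrm{KL}}(p_{Z_T|Z_0}(\cdot|z_0)\|p_{Z_T})$ involves no $\theta$. By~\eqref{eq:prop-forward} and the product prior $p_{Z_T}=\mathcal N(0,I_{2d})$, it equals the sum of the $X$- and $Y$-prior KL terms. These are put into $C(z_0)$.

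\textbf{Reconstruction term.} Under~\eqref{eq:prop-reverse} at $t=1$ we have
\[
\log p^\theta_{Z_0|Z_1}(z_0|Z_1)=\log p^\theta_{X_0|X_1}(x_0|X_1)+\log p^\theta_{Y_0|X_0,Y_1}(y_0|x_0,Y_1),
\]
with $x_0$ held fixed because we condition on $Z_0=z_0$. Taking the conditional expectation given $Z_0=z_0$ needs one observation. By~\eqref{eq:prop-forward}, $X_1$ and $Y_1$ are conditionally independent given $Z_0$, and $X_1$ given $Z_0$ depends only on $x_0$. So the first expectation equals the expectation given $X_0=x_0$. This gives the reconstruction parts of~\eqref{eq:marginal-ELBO} and~\eqref{eq:conditional-ELBO}.

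\textbf{KL terms, $t\ge2$.} This is the main step. Write $q_X=p_{X_{t-1}|X_t,X_0}(\cdot|x_t,x_0)$ and $q_Y=p_{Y_{t-1}|Y_t,Y_0}(\cdot|y_t,y_0)$. By~\eqref{eq:prop-posterior}, the forward posterior is the product $q_X(x_{t-1})q_Y(y_{t-1})$. The model is $p^\theta_X(x_{t-1}|x_t)\,p^\theta_{Y|X}(y_{t-1}|x_{t-1},y_t)$. I would apply the chain rule for KL divergence:
\[
D_{\mathrm{KL}}(q_Xq_Y\|p^\theta_Xp^\theta_{Y|X})=D_{\mathrm{KL}}(q_X\|p^\theta_X(\cdot|x_t))+\mathbb E_{X_{t-1}\sim q_X}\bigl[D_{\mathrm{KL}}(q_Y\|p^\theta_{Y|X}(\cdot|X_{t-1},y_t))\bigr].
\]
This works because the first marginal of the product posterior is $q_X$ and its conditional law of $y_{t-1}$ given $x_{t-1}$ is $q_Y$, which does not depend on $x_{t-1}$.

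Next I take the outer expectation over $Z_t$ given $Z_0=z_0$. For the first term, the law of $X_t$ given $Z_0$ is that given $X_0=x_0$, again by~\eqref{eq:prop-forward}. This yields the $t$-th summand of~\eqref{eq:marginal-ELBO}.

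For the second term, the nested expectation over $Z_t$ and then $X_{t-1}\sim q_X$ must be identified with $\mathbb E[\,\cdot\,|Z_0=z_0]$ of $D_{\mathrm{KL}}(p_{Y_{t-1}|Y_t,Y_0}(\cdot|Y_t,y_0)\|p^\theta(\cdot|X_{t-1},Y_t))$ as written in~\eqref{eq:conditional-ELBO}. The sampling is $Z_t$ from the forward kernel and then $X_{t-1}$ from the posterior given $(X_t,x_0)$. This reproduces the joint forward law of $(X_{t-1},X_t,Y_t)$ given $Z_0=z_0$. To check this, I would use the Markov property of the product chain~\eqref{eq:forward:joint} together with the independence of the $X$ and $Y$ noises. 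This gives $p(x_{t-1},x_t,y_t|z_0)=p(x_t|x_0)\,p(x_{t-1}|x_t,x_0)\,p(y_t|y_0)$. The integrand does not depend on $X_t$, so after marginalizing $X_t$ the result matches the stated expectation.

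Summing over $t$ and collecting the $\theta$-free prior term into $C(z_0)$ proves the decomposition. I expect the main obstacle to be this last bookkeeping of conditional laws, namely verifying that sampling $X_{t-1}$ from the posterior reproduces the joint forward law. The Gaussian form of the kernels is never needed; only the factorization structure is used.
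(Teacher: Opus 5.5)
Your proposal is correct and follows essentially the same route as the paper. The prior KL goes into $C(z_0)$, the reconstruction log-density splits via the reverse factorization, and the $t\ge 2$ KL terms split via the product forward posterior and the factored reverse model; your KL chain rule is exactly what the paper computes by expanding the log-ratio, and your extra bookkeeping (the conditional law of $X_{t-1}$ given $(Z_t,Z_0)$ is $p_{X_{t-1}|X_t,X_0}$, so the tower property gives the expectation given $Z_0=z_0$, and $X_1$ given $Z_0$ depends only on $x_0$) simply makes explicit what the paper leaves implicit.
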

\begin{proof}
    See Appendix~\ref{proof:objective:groundtruth}.
\end{proof}
Using the parameterizations in~\eqref{eq:prop-reverse}, we obtain, up to a constant,
\begin{align}
     & -\mathbb{E}[\mathrm{ELBO}_X(\theta;x_0)] = \mathbb{E}\sum_{t=1}^{T}\frac{1}{2\sigma_{t}^{2}}\|\mu_{X,t}^{\theta}(X_{t})-\tilde{\mu}_{t}(X_{t},X_{0})\|_{2}^{2}\eqqcolon L_{\mathrm{marg}}(\theta), \label{eq:Loss-marginal} \\
     & -\mathbb{E}[\mathrm{ELBO}_{Y|X}(\theta;z_0)] = \mathbb{E}\sum_{t=1}^{T}\frac{1}{2\sigma_{t}^{2}}\|\mu_{Y\mid X,t}^{\theta}(Y_{t}, X_{t-1})-\tilde{\mu}_{t}(Y_{t},Y_{0})\|_{2}^{2}\eqqcolon L_{\mathrm{cond}}(\theta). \label{eq:Loss-condition}
\end{align}
Collecting \(-\mathbb{E}[C(Z_0)]\) and all other terms independent of \(\theta\) into a constant \(C\), the overall loss function becomes
\begin{equation}
\label{eq:joint:objective}
    \mathcal{L}_{\mathrm{joint}}(\theta) 
\coloneq -\mathbb{E}\left[\mathrm{ELBO}_{Z}(\theta;Z_{0})\right] = L_{\mathrm{marg}}(\theta) + L_{\mathrm{cond}}(\theta) + C.
\end{equation}

This formulation serves as the ideal loss for learning the joint distribution. Notably, $L_{\mathrm{marg}}(\theta)$ can be estimated using samples from $p_{X_0}$, whereas $L_{\mathrm{cond}}(\theta)$ requires samples from the joint distribution of $(X_{t-1}, Y_t, Y_0)$, which depends on
\begin{equation*}
p_{X_{t-1},Y_{t},Y_{0}}(x_{t-1},y_{t},y_{0})
= \int p_{X_{t-1}|X_{0}}(x_{t-1}|x_{0})\, p_{X_{0}|Y_{0}}(x_{0}|y_{0})\, p_{Y_{t},Y_{0}}(y_{t},y_{0})\, \mathrm{d}x_{0}.
\end{equation*}
However, the available marginal distributions $p_{X_0}$ and $p_{Y_0}$ alone do not uniquely determine the joint distribution. As a result, the conditional distribution $p_{X_0 \mid Y_0}$ remains inaccessible in the unpaired setting. In many inverse problems, high-dimensional noisy signals are more easily obscured by additional Gaussian perturbations than are the underlying dominant structures. Motivated by this observation, we introduce the following weak-coupling assumption.

\begin{assumption}[Weak coupling]\label{assum:weak:coupling}
There exist time steps $0\leq V, S \leq T$ such that 
\begin{equation*}
p_{X_{0}|Y_{0}}(\cdot|y_{0}) = p_{X_{0}|Y_{0}}^{\rm weak}(\cdot|y_{0}), \quad y_{0}\in\mathbb{R}^{d},
\end{equation*}
where $p_{X_{0}|Y_{0}}^{\rm weak}$ and $p_{X_{0}|Y_{S}}^{\rm weak}$ are defined as
\begin{align}
    p_{X_{0}|Y_{0}}^{\rm weak}(\cdot|y_{0}) & \coloneqq \mathbb{E}_{\xi\sim\mathcal{N}(0,I_{d})}\left[p_{X_{0}|Y_{S}}^{\rm weak}(\cdot|\sqrt{\bar{\alpha}_{S}}y_{0}+\sqrt{1-\bar{\alpha}_{S}}\xi)\right], \label{eq:weak:coupling:denoising} \\
    p_{X_{0}|Y_{S}}^{\rm weak}(\cdot|w) & \coloneqq p_{X_{0}|X_{V}}(\cdot|w), \quad \forall w\in\mathbb{R}^{d}. \label{eq:weak:coupling:y2x}
\end{align}
\end{assumption}
\begin{remark}
\label{rem:gaussian_weak_coupling}
    For Gaussian white noise with variance $\sigma^2$, this coupling assumption is exact when we set $S=0$ and choose $V$ such that $\bar{\alpha}_V = \frac{\sigma_X^2}{\sigma_X^2+\sigma^2}$, thereby matching the signal-to-noise ratio of the measurement.
\end{remark}

\begin{remark} 
For the general case, the assumption relies on two approximations:
(i) {Alignment approximation}:
$p_{X_0|Y_S}(\cdot \mid w) \approx p_{X_0|X_V}(\cdot \mid w)$,
which assumes that the noisy representations of the two modalities become structurally indistinguishable at suitable noise levels;
(ii) {Perturbation approximation}:
$
p_{X_{0}|Y_{0}}(\cdot \mid y_{0}) \approx \mathbb{E}_{\xi \sim \mathcal{N}(0, I_{d})}
\left[p_{X_{0}|Y_{S}}\big(\cdot \mid \sqrt{\bar{\alpha}_{S}}\, y_{0} + \sqrt{1-\bar{\alpha}_{S}}\, \xi\big)\right],
$
which assumes that the perturbation $Y_S$ preserves sufficient information about $X_0$.
These two approximations induce a trade-off in the choice of the time steps $S$ and $V$. When $S$ is small, the perturbation approximation is nearly exact, whereas the alignment approximation is difficult to satisfy. As $S$ and $V$ increase, the alignment approximation improves, but the perturbation approximation deteriorates due to information loss. Therefore, the practical effectiveness of the weak-coupling assumption depends on selecting $S$ and $V$ to balance these competing effects.
\end{remark}

The weak-coupling distribution can be sampled using only marginal data:
\begin{enumerate}
\item Draw $(Y_{0},\xi)\sim p_{Y_{0}}\otimes\mathcal{N}(0,I_d)$;
\item Define $\bar{Y}_{S}=\sqrt{\bar{\alpha}_S}\,Y_0+\sqrt{1-\bar{\alpha}_S}\,\xi$;
\item Draw $X_{0}^{\mathrm{weak}}\sim p_{X_{0}\mid X_{V}}(\cdot\mid\bar{Y}_{S})$ via reverse process.
\end{enumerate}
By replacing the inaccessible conditional distribution $p_{X_0|Y_0}$ with the proxy conditional distribution $p_{X_0|Y_0}^{\rm weak}$ in~\eqref{eq:Loss-condition}, we define the weak conditional loss as 
\begin{equation}\label{eq:conditional-loss-weak}
L_{\mathrm{cond}}^{\mathrm{weak}}(\theta) \coloneq  \mathbb{E}\Bigg[\sum_{t=1}^{T}\frac{1}{2\sigma_{t}^{2}}\|\mu_{Y\mid X,t}^{\theta}(Y_t,X_{t-1}^{\rm weak})-\tilde{\mu}_{t}(Y_{t},Y_{0})\|_{2}^{2}\Bigg],
\end{equation}
where 
\begin{equation*}
X_{t-1}^{\mathrm{weak}}\coloneqq\sqrt{\bar{\alpha}_{t-1}}X_{0}^{\rm weak}+\sqrt{1-\bar{\alpha}_{t-1}}\varepsilon^{X}, \quad Y_{t}\coloneqq\sqrt{\bar{\alpha}_{t}}Y_{0}+\sqrt{1-\bar{\alpha}_{t}}\varepsilon^{Y},
\end{equation*}
with $\varepsilon^{X},\varepsilon^{Y}\sim\mathcal{N}(0,I_{d})$ independent of each other and independent of $X_{0}^{\mathrm{weak}}$ and $Y_{0}$. Therefore, $X_{0}^{\mathrm{weak}}$ is conditionally independent of $Y_{t}$ given $Y_{0}$. This modification avoids the need for paired data and makes the conditional loss computable. The next theorem shows the equivalence between $L_{\mathrm{cond}}^{\mathrm{weak}}(\theta)$ in~\eqref{eq:conditional-loss-weak} and $L_{\mathrm{cond}}(\theta)$ in~\eqref{eq:Loss-condition} under Assumption~\ref{assum:weak:coupling}. We define
\begin{equation}\label{eq:objective:weak}
\mathcal{L}_{\rm weak}(\theta) \coloneq L_{\text{marg}}(\theta) + L_{\text{cond}}^{\text{weak}}(\theta) + C,
\end{equation}
where $L_{\mathrm{marg}}(\theta)$ is defined in \eqref{eq:Loss-marginal} and $L_{\mathrm{cond}}^{\mathrm{weak}}(\theta)$ is defined in \eqref{eq:conditional-loss-weak}.

\begin{theorem}\label{theorem:equivalence}
Suppose that Assumption~\ref{assum:weak:coupling} holds. Let $\mathcal{L}_{\mathrm{joint}}(\theta)$ and $\mathcal{L}_{\rm weak}(\theta)$ be defined as~\eqref{eq:joint:objective} and~\eqref{eq:objective:weak}, respectively. Then we have
\begin{equation*}
L_{\mathrm{cond}}(\theta) = L_{\mathrm{cond}}^{\mathrm{weak}}(\theta).
\end{equation*}
Thus, $\mathcal{L}_{\mathrm{joint}}(\theta) = \mathcal{L}_{\rm weak}(\theta)$.
\end{theorem}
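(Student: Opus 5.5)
The plan is to show that, for every $t\in\{1,\dots,T\}$, the random triple $(X_{t-1},Y_t,Y_0)$ in $L_{\mathrm{cond}}(\theta)$ in~\eqref{eq:Loss-condition} has the same joint law as the triple $(X_{t-1}^{\mathrm{weak}},Y_t,Y_0)$ in~\eqref{eq:conditional-loss-weak}. Each summand $\frac{1}{2\sigma_t^2}\|\mu^\theta_{Y|X,t}(Y_t,X_{t-1})-\tilde\mu_t(Y_t,Y_0)\|_2^2$ is a fixed measurable function of this triple. Hence equal laws give equal expectations term by term, and summing over $t$ yields $L_{\mathrm{cond}}=L_{\mathrm{cond}}^{\mathrm{weak}}$. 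Adding the common term $L_{\mathrm{marg}}(\theta)+C$ then gives $\mathcal L_{\mathrm{joint}}=\mathcal L_{\mathrm{weak}}$ via~\eqref{eq:joint:objective} and~\eqref{eq:objective:weak}.

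First, I factor the law of the true triple. Under the forward process~\eqref{eq:forward:joint}, $X_{t-1}=\sqrt{\bar\alpha_{t-1}}X_0+\sqrt{1-\bar\alpha_{t-1}}\varepsilon$, where the injected noise is independent of $Y_t$. Likewise, $Y_t$ is generated from $Y_0$ with noise independent of $X_0$. By Proposition~\ref{prop:factorization}, $p_{Z_t|Z_0}$ factorizes, so the $X$- and $Y$-chains are conditionally independent given $Z_0$. Integrating out $x_0$ gives
\begin{equation*}
p_{X_{t-1},Y_t,Y_0}(x_{t-1},y_t,y_0)=\int p_{X_{t-1}|X_0}(x_{t-1}|x_0)\,p_{X_0|Y_0}(x_0|y_0)\,p_{Y_t|Y_0}(y_t|y_0)\,p_{Y_0}(y_0)\,\mathrm dx_0,
\end{equation*}
which is the formula displayed before Assumption~\ref{assum:weak:coupling}. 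This step only needs care in justifying that $X_{t-1}$ depends on $(X_0,Y_0,Y_t)$ only through $X_0$ plus independent noise.

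Next, I write the weak triple in the same way. By construction, $Y_0\sim p_{Y_0}$. The variable $X_0^{\mathrm{weak}}$ is drawn from $p^{\mathrm{weak}}_{X_0|Y_0}(\cdot|Y_0)$, because the sampling procedure (draw $\xi$, form $\bar Y_S$, sample from $p_{X_0|X_V}(\cdot|\bar Y_S)$) realizes exactly the mixture in~\eqref{eq:weak:coupling:denoising}--\eqref{eq:weak:coupling:y2x}. Then $Y_t$ and $X_{t-1}^{\mathrm{weak}}$ are built with independent fresh noises $\varepsilon^Y$ and $\varepsilon^X$. This gives the same integral with $p_{X_0|Y_0}$ replaced by $p^{\mathrm{weak}}_{X_0|Y_0}$, and with $p_{X_{t-1}|X_0}$ and $p_{Y_t|Y_0}$ given by~\eqref{eq:forward:X:t}. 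The key point is that the remark stating ``$X_0^{\mathrm{weak}}$ is conditionally independent of $Y_t$ given $Y_0$'' exactly mirrors the conditional independence in the true model. Assumption~\ref{assum:weak:coupling} then makes the two integrands identical for every $y_0$, so the two joint densities coincide.

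I expect the main obstacle to be bookkeeping rather than analysis: verifying rigorously that the true model has $X_0\perp Y_t\mid Y_0$ and that $X_{t-1}$ is conditionally independent of $(Y_0,Y_t)$ given $X_0$. I would derive both from the independence of the noise sequences $\{\varepsilon^X_s\}$ and $\{\varepsilon^Y_s\}$ from each other and from $Z_0$. A minor point is integrability of the loss, so that the expectations are well defined; I would assume $\mu^\theta$ has finite second moments under these laws, as is implicit in the definitions. Finally, the $t=1$ term uses $\tilde\mu_1(u_1,u_0)\coloneq u_0$, and it is covered by the same argument with $X_0$ itself in place of $X_{t-1}$.
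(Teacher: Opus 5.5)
Your proposal is correct and follows essentially the same route as the paper. The paper shows that the conditional laws of $X_{t-1}^{\mathrm{weak}}$ and $X_{t-1}$ given $(Y_t,Y_0)$ coincide, via a Chapman--Kolmogorov integral over $x_0$ that uses exactly your two conditional independences ($X_{t-1}\perp(Y_t,Y_0)\mid X_0$ and $X_0\perp Y_t\mid Y_0$) together with Assumption~\ref{assum:weak:coupling}; this is your joint-density identity divided by the common factor $p_{Y_t,Y_0}$, and your remarks on deriving the cross-time independence from the noise sequences (rather than from the single-time factorization in Proposition~\ref{prop:factorization}) and on the $t=1$ term are sound refinements the paper leaves implicit.
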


\begin{proof}
    See Appendix~\ref{proof:equivalence}.
\end{proof}
Theorem~\ref{theorem:equivalence} provides a consistency result: when the weak-coupling construction matches the true posterior coupling, the computable weak objective coincides with the ideal paired-data objective. When the assumption is relaxed, Section~\ref{sec:analysis} quantifies the resulting bias. Algorithm~\ref{train} summarizes the training procedure. Since the objective is decoupled into marginal and conditional terms, the two diffusion models can be trained separately, improving the training efficiency of the conditional model.

\begin{algorithm}[H]
\caption{LUD-DIF Algorithm: Training Process}
\label{train}
\begin{algorithmic}[1]
\STATE Initialize parameters $S, V, \alpha_t$ of the model.
\FOR{each training iteration}
\STATE Sample $X_0 \sim p_{X_0}$, $Y_0 \sim p_{Y_0}$ and $t \sim \text{Uniform}\{1, \ldots, T\}$.
\STATE Generate $X_t \sim p_{X_t|X_0}(\cdot|X_0)$ and $Y_t \sim p_{Y_t|Y_0}(\cdot|Y_0)$ via the forward process \eqref{eq:forward:joint}.
\STATE Generate proxy sample $X_0^{\rm weak} \sim p_{X_0|Y_0}^{\rm weak}(\cdot|Y_0)$ as defined in \eqref{eq:weak:coupling:denoising} and \eqref{eq:weak:coupling:y2x}.
\STATE Construct $X_{t-1} = \sqrt{\bar{\alpha}_{t-1}}sg(X_{0}^{\rm weak})+\sqrt{1-\bar{\alpha}_{t-1}}\xi$ where $\xi \sim \mathcal{N}(0, I_d)$.
\STATE Compute $\mathcal{L}_X = \frac{1}{2\sigma_t^2}\|\mu^{\theta}_{X,t}(X_t) - \tilde{\mu}_t(X_t,X_0)\|^2$.
\STATE Compute $\mathcal{L}_{Y|X} = \frac{1}{2\sigma_t^2}\|\mu^{\theta}_{Y\mid X,t}(Y_t, X_{t-1}) - \tilde{\mu}_t(Y_t,Y_0)\|^2$.
\STATE Update $\theta$ by minimizing $\mathcal{L}_{\rm weak} = \mathcal{L}_X + \mathcal{L}_{Y|X}$.
\ENDFOR
\end{algorithmic}
\end{algorithm}

\section{The Bias Analysis without Assumption~\ref{assum:weak:coupling}}\label{sec:analysis}

Theorem~\ref{theorem:equivalence} establishes the equivalence between $\mathcal{L}_{\mathrm{joint}}$ and $\mathcal{L}_{\mathrm{weak}}$ under Assumption~\ref{assum:weak:coupling}. Here, we quantify their discrepancy when this assumption is relaxed:
\begin{equation}
\Delta \coloneq |\mathcal{L}_{\mathrm{joint}}-\mathcal{L}_{\mathrm{weak}}|.
\end{equation}
Throughout this section, $p_{X_0|X_V}$ denotes the exact conditional distribution, so learning and sampling errors are excluded. Following the normalization in~\eqref{eq:data_prep}, we consider
\begin{equation}\label{eq:normalization}
Y_0=\frac{\sigma_X}{\sigma_Y}X_0+\frac{1}{\sigma_Y}\Xi,
\end{equation}
where $\Xi$ is assumed to be independent of $X_0$. A concise extension to signal-dependent conditional noise is given in Appendix~\ref{supp:sec:conditional-analysis}. The resulting bounds also guide the selection of the weak-coupling time steps $S$ and $V$. We first state the required technical assumptions.

\begin{assumption}[Polynomial growth of networks]\label{assum:network}
The expectation neural network $\mu^{\theta}_{Y\mid X,t}:\mathbb{R}^{d}\times\mathbb{R}^{d}\to\mathbb{R}^{d}$ defined in~\eqref{eq:prop-reverse} is bounded by a polynomial uniformly over the parameter set $\Theta$, i.e., there exist $C > 0$ and $m \geq 1$ such that for any $\theta\in\Theta$ and $y_{t},x\in\mathbb{R}^d$,
\begin{equation*}
\|\mu_{Y\mid X,t}^{\theta}(y_{t},x)\|_{2}^{2} \leq C(1+\|y_{t}\|_{2}^{m}+\|x\|_{2}^{m}).
\end{equation*}
\end{assumption}

\par The polynomial growth of the neural network in Assumption~\ref{assum:network} can be ensured in practice by truncation or weight clipping. Note that the expectation $\tilde{\mu}_t(y_t, y_0)$ defined in~\eqref{eq:mu:beta} is a linear combination of $y_t$ and $y_0$; thus, under Assumption~\ref{assum:network}, there exists a constant $K > 0$ such that
\begin{equation*}
\|\mu_{Y\mid X,t}^{\theta}(y_t, x) - \tilde{\mu}_{t}(y_t, y_0)\|_{2}^{2} \leq K(1 + \|y_t\|_{2}^{\max\{m,2\}} + \|x\|_{2}^m + \|y_0\|_{2}^2).
\end{equation*}
Here $K$ is a constant depending only on $C$ and the variance schedule of the diffusion model.

\par Before proceeding with the data distributions, we note the necessary variable assumptions supporting the scaling properties.

\begin{assumption}[Data and noise distribution]\label{assum:data}
The normalized clean data $X_0$ admits a probability density 
$p_{X_0}$ with respect to the Lebesgue measure on $\mathbb{R}^d$ 
and is supported on a compact set 
$\mathcal{X} \subset \mathbb{R}^d$. 
The noise $\Xi$ is independent of $X_0$ and is centered, i.e.,
\[
\mathbb{E}[\Xi] = \mathbf{0}.
\]
Moreover, $\Xi$ admits a probability density $p_{\Xi}$ satisfying
\[
\|p_{\Xi}\|_{L^\infty(\mathbb{R}^d)} \leq M_{\Xi},
\]
and has a bounded $2\max\{m,2\}$-th moment:
\[
\mathbb{E}\!\left[
    \|\Xi\|_2^{2\max\{m,2\}}
\right]
\leq M_{\Xi,m},
\]
for some finite constants $M_{\Xi}, M_{\Xi,m} > 0$.
\end{assumption}

\par Since $Y_0$ is a linear combination of the bounded $X_0$ and the noise $\Xi$, it naturally follows that the noisy observation $Y_0$ has finite $2\max(m, 2)$-th order moments.

\par The Total Variation (TV) distance between two probability density functions $p_1$ and $p_2$ is defined as $\|p_1 - p_2\|_{\mathrm{TV}} \coloneq \frac{1}{2} \int |p_1(x) - p_2(x)|\,\mathrm{d}x$. The following proposition shows that the error of the weakly coupled loss function can be controlled by the expected TV distance between the true posterior $p_{X_0|Y_0}(\cdot|Y_0)$ and the weakly coupled posterior $p_{X_0|Y_0}^{\mathrm{weak}}(\cdot|Y_0)$.

\begin{proposition}\label{proposition:delta}
Suppose Assumptions~\ref{assum:network} and~\ref{assum:data} are fulfilled. Then there exists a sequence of positive finite constants $\{M_t\}_{t=1}^T$ such that the error function is bounded by
\begin{equation*}
\Delta \leq \sum_{t=1}^{T}\frac{M_t}{\sigma_{t}^{2}}\mathbb{E}_{Y_0}^{\frac{1}{2}}\left[\|p_{X_{0}|Y_{0}}(\cdot|Y_{0})-p_{X_{0}|Y_{0}}^{\mathrm{weak}}(\cdot|Y_{0})\|_{\mathrm{TV}}\right],
\end{equation*}
where $M_{t}$ is a constant depending on $C$, $R_{\mathcal{X}} \coloneqq \sup_{x\in\mathcal{X}}\|x\|_2$, dimension $d$, moment bound $M_{\Xi,m}$, scales $\sigma_X, \sigma_Y$, and the variance schedule.
\end{proposition}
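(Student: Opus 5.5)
Since $L_{\mathrm{marg}}$ and the constant $C$ appear identically in $\mathcal{L}_{\mathrm{joint}}$ and $\mathcal{L}_{\mathrm{weak}}$, we have $\Delta=|L_{\mathrm{cond}}(\theta)-L_{\mathrm{cond}}^{\mathrm{weak}}(\theta)|$. We bound this difference term by term in $t$.

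\textbf{Step 1: a common representation.} For fixed $t$, set
\[
g_t(x_0,y_0):=\mathbb{E}\bigl[\|\mu^\theta_{Y\mid X,t}(Y_t,X_{t-1})-\tilde\mu_t(Y_t,y_0)\|_2^2\bigr],
\]
where $X_{t-1}=\sqrt{\bar\alpha_{t-1}}x_0+\sqrt{1-\bar\alpha_{t-1}}\varepsilon^X$ and $Y_t=\sqrt{\bar\alpha_t}y_0+\sqrt{1-\bar\alpha_t}\varepsilon^Y$, with independent Gaussians. In the true loss, $X_{t-1}$ depends on $(Y_0,Y_t)$ only through $X_0$, because the forward noises are independent; this is the factorization of Proposition~\ref{prop:factorization}. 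The weak loss has the same structure by construction. Hence the $t$-th summand of each loss equals
\[
\frac{1}{2\sigma_t^2}\,\mathbb{E}_{Y_0}\!\int g_t(x_0,Y_0)\,q(x_0\mid Y_0)\,\mathrm{d}x_0,
\qquad q\in\{p_{X_0|Y_0},\,p^{\mathrm{weak}}_{X_0|Y_0}\}.
\]
The $t$-th contribution to $\Delta$ is therefore bounded by
\[
\frac{1}{2\sigma_t^2}\,\mathbb{E}_{Y_0}\!\int |g_t(x_0,Y_0)|\,|p-p^{\mathrm{weak}}|(x_0\mid Y_0)\,\mathrm{d}x_0.
\]

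\textbf{Step 2: splitting the TV weight.} Write $|p-p^{\mathrm w}|=|p-p^{\mathrm w}|^{1/2}\cdot|p-p^{\mathrm w}|^{1/2}$ and apply Cauchy--Schwarz jointly over $(Y_0,x_0)$. The first factor becomes
\[
\mathbb{E}_{Y_0}\!\int g_t^2\,|p-p^{\mathrm w}|\,\mathrm{d}x_0
\;\le\;
\mathbb{E}_{Y_0}\!\int g_t^2\,(p+p^{\mathrm w})\,\mathrm{d}x_0 .
\]
The second factor is $\mathbb{E}_{Y_0}\int|p-p^{\mathrm w}|=2\,\mathbb{E}_{Y_0}\|p-p^{\mathrm w}\|_{\mathrm{TV}}$, which produces the square root in the statement. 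Alternatively, if $g_t$ can be bounded uniformly in $x_0$ by a function of $y_0$ alone, one can write
\[
\int g_t\,|p-p^{\mathrm w}|\le 2\sup_{x_0}g_t(x_0,y_0)\,\|p-p^{\mathrm w}\|_{\mathrm{TV}}
\]
and then apply Cauchy--Schwarz in $Y_0$ only, using $\|\cdot\|_{\mathrm{TV}}\le1$ so that $\mathrm{TV}^2\le\mathrm{TV}$. This route is cleaner, and I would take it if the support of $x_0$ is bounded.

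\textbf{Step 3: moment bounds for $g_t$ (main obstacle).} By the consequence of Assumption~\ref{assum:network} stated above,
\[
g_t\le K\bigl(1+\mathbb{E}\|Y_t\|^{\max\{m,2\}}+\mathbb{E}\|X_{t-1}\|^{m}+\|y_0\|^2\bigr).
\]
Gaussian moments give $\mathbb{E}\|Y_t\|^{k}\lesssim \|y_0\|^{k}+d^{k/2}$ and $\mathbb{E}\|X_{t-1}\|^{m}\lesssim\|x_0\|^{m}+d^{m/2}$. Under the true posterior, $x_0\in\mathcal X$, so $\|x_0\|\le R_{\mathcal X}$. The weak posterior $p_{X_0|X_V}(\cdot\mid w)$ is also supported in $\mathcal X$, since it is the conditional law of $X_0$. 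Thus $\sup_{x_0\in\mathcal X}g_t\le K'(1+\|y_0\|^{\max\{m,2\}})$, with $K'$ depending on $R_{\mathcal X}$, $d$ and the schedule. It remains to check that $\mathbb{E}\|Y_0\|^{2\max\{m,2\}}<\infty$. This follows from~\eqref{eq:normalization}:
\[
\|Y_0\|\le\frac{\sigma_X}{\sigma_Y}R_{\mathcal X}+\frac{1}{\sigma_Y}\|\Xi\|,
\]
together with the moment bound $M_{\Xi,m}$ of Assumption~\ref{assum:data}.

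Combining the three steps, the $t$-th term of $\Delta$ is at most
\[
\frac{1}{2\sigma_t^2}\cdot 2\,\mathbb{E}^{1/2}\!\bigl[(K')^2(1+\|Y_0\|^{\max\{m,2\}})^2\bigr]\cdot\mathbb{E}^{1/2}_{Y_0}\bigl[\|p-p^{\mathrm w}\|_{\mathrm{TV}}\bigr].
\]
Setting $M_t$ to be the product of the constant factors in front of the TV term and summing over $t$ gives the claim. Two points need care. First, $\theta$ may only be chosen so that the bound on $g_t$ is uniform over $\Theta$, which Assumption~\ref{assum:network} grants. Second, the $t=1$ term uses $\tilde\mu_1=y_0$, which fits the same bound.
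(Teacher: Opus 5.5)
Your proposal is correct and follows the paper's argument. In both, the squared-error integrand is bounded uniformly for $x_0\in\mathcal{X}$ (both posteriors are supported there), a factor $2\|p_{X_0|Y_0}-p^{\mathrm{weak}}_{X_0|Y_0}\|_{\mathrm{TV}}$ is pulled out, Cauchy--Schwarz is applied together with $\|\cdot\|_{\mathrm{TV}}\le 1$, and the envelope is controlled by the moment bound on $\Xi$. The only difference is cosmetic: you integrate out $Y_t$ and $\varepsilon^X$ first so that the envelope depends on $Y_0$ alone, whereas the paper conditions on $(Y_0,Y_t,\varepsilon^X)$ and keeps $\|Y_t\|$ and $\|\varepsilon^X\|$ in the envelope before applying Cauchy--Schwarz.
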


\begin{proof}
See Appendix~\ref{proof:delta}.
\end{proof}

\par From Proposition~\ref{proposition:delta}, in order to estimate the error $\Delta$ of the weakly coupled loss function, it is sufficient to bound the expected TV distance between the weakly coupled posterior and the true posterior:
\begin{equation}
\Delta_p \coloneq \mathbb{E}_{Y_0} \left[ \| p_{X_{0}|Y_{0}}^{\mathrm{weak}}(\cdot|Y_0) - p_{X_{0}|Y_{0}}(\cdot|Y_0) \|_{\mathrm{TV}} \right].
\end{equation}
Recall that $p_{X_0|Y_0}^{\text{weak}}(\cdot|Y_0) = \mathbb{E}_{Y_S|Y_0}[p_{X_0|X_V}(\cdot|Y_S)]$. By Jensen's inequality, the expected TV distance $\Delta_{p}$ between the weakly coupled posterior and the true posterior can be decomposed as
\begin{align}
\Delta_p 
&\leq \mathbb{E}_{Y_0}\mathbb{E}_{Y_S|Y_0} \left[ \|p_{X_0|X_V}(\cdot|Y_S) - p_{X_0|Y_0}(\cdot|Y_0)\|_{\mathrm{TV}} \right] \nonumber \\
&\leq \underbrace{\mathbb{E}_{Y_S} \left[ \|p_{X_0|X_V}(\cdot|Y_S) - p_{X_0|Y_S}(\cdot|Y_S)\|_{\mathrm{TV}} \right]}_{\text{alignment error}}+\underbrace{\mathbb{E}_{Y_0, Y_S} \left[ \|p_{X_0|Y_S}(\cdot|Y_S) - p_{X_0|Y_0}(\cdot|Y_0)\|_{\mathrm{TV}} \right]}_{\text{perturbation error}}. \label{eq:error:decomp}
\end{align}
The first summand in~\eqref{eq:error:decomp} represents the \textbf{alignment error}, arising from the structural discrepancy between the noisy signal $X_V$ and the noisy measurement $Y_S$. This corresponds to the error introduced by the alignment approximation in~\eqref{eq:weak:coupling:y2x}. The second summand represents the \textbf{perturbation error}, which stems from the information loss in $Y_S$ due to the perturbation to $Y_0$. 

\par To establish rigorous bounds for these errors, we introduce the stability assumption on the posterior distribution with respect to the change of observation.

\begin{assumption}[Posterior stability]\label{assum:compactness}
There exists a constant $L_{\mathrm{post}}>0$ such that
\[\|p_{X_0|Y_0}(\cdot|y_{0}^{1}) - p_{X_0|Y_0}(\cdot|y_{0}^{2})\|_{\mathrm{TV}} \leq L_{\mathrm{post}} \|y_{0}^{1}-y_{0}^{2}\|_2, \quad \forall \, y_{0}^{1}, y_{0}^{2} \in \mathbb{R}^d.\]
\end{assumption}

\par The stability of the posterior distribution with respect to perturbations of the observations has been extensively studied in the context of Bayesian inverse problems~\cite{stuart2010inverse}. We adopt it here as a mild regularity assumption. Appendix~\ref{supp:sec:conditional-analysis} records a sufficient condition: under Assumption~\ref{assum:data}, a positive noise density whose log-density has a bounded Hessian induces a TV-Lipschitz posterior map. This condition covers Gaussian noise and a broad class of non-Gaussian cases.

\par The following theorem provides explicit upper bounds for the perturbation error and the alignment error in terms of the diffusion time steps $S, V$ and the noise $\Xi$ under the stated assumptions.

\begin{theorem}[Error bound for the weakly coupled loss function]\label{thm:error_bound}
Suppose Assumptions~\ref{assum:network},~\ref{assum:data}, and~\ref{assum:compactness} are fulfilled. Let $\sigma_{\Xi}^{2}\coloneq \frac{1}{d}\mathbb{E}[\|\Xi\|_{2}^2]$ be the scalar variance of the noise $\Xi$. Then 
\begin{equation*}
\Delta_{p} \leq \underbrace{L_{\mathrm{post}}\sqrt{2d\frac{1-\bar{\alpha}_S}{\bar{\alpha}_S}}}_{\text{perturbation error}}+\underbrace{\zeta_{S}\frac{|\sigma_Y \sqrt{\bar{\alpha}_V} - \sigma_X \sqrt{\bar{\alpha}_S}|}{\sqrt{1-\bar{\alpha}_V}}+\eta_{S}\sqrt{D_{\mathrm{KL}}(p_{\Xi} \,\|\, \mathcal{N}(0, \sigma_{\Xi}^2 I_d))}}_{\text{alignment error}},
\end{equation*}
where the constants $\zeta_{S}$ and $\eta_{S}$ are defined as 
\begin{equation*}
\zeta_{S} \coloneqq \frac{R_{\mathcal{X}}}{\sigma_Y}+\sqrt{\frac{8d}{\sigma_Y^2 - \bar{\alpha}_S \sigma_X^2}}, \quad \eta_{S} \coloneqq \sqrt{\frac{2\bar{\alpha}_S \sigma_{\Xi}^2}{\sigma_Y^2 - \bar{\alpha}_S \sigma_X^2}}.
\end{equation*}
Moreover, the error function $\Delta$ satisfies
\begin{equation*}
\Delta \leq \sum_{t=1}^{T}\frac{M_t}{\sigma_{t}^{2}}\Biggl(L_{\mathrm{post}}\sqrt{2d\frac{1-\bar{\alpha}_S}{\bar{\alpha}_S}}+\zeta_{S}\frac{|\sigma_Y \sqrt{\bar{\alpha}_V} - \sigma_X \sqrt{\bar{\alpha}_S}|}{\sqrt{1-\bar{\alpha}_V}}+\eta_{S}\sqrt{D_{\mathrm{KL}}(p_{\Xi} \,\|\, \mathcal{N}(0, \sigma_{\Xi}^2 I_d))}\Biggr)^{\frac{1}{2}}.
\end{equation*}
\end{theorem}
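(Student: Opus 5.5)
We start from the decomposition~\eqref{eq:error:decomp} and bound its two summands separately. The final bound on $\Delta$ then follows by inserting the resulting bound on $\Delta_p$ into Proposition~\ref{proposition:delta}, since $x\mapsto x^{1/2}$ is monotone.

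\emph{Perturbation error.} We write $Y_S=\sqrt{\bar\alpha_S}\,(Y_0+\sqrt{(1-\bar\alpha_S)/\bar\alpha_S}\,\xi)$. The posterior $p_{X_0|Y_S}(\cdot|Y_S)$ is a mixture of $p_{X_0|Y_0}(\cdot|y_0')$ over $y_0'\sim p_{Y_0|Y_S}(\cdot|Y_S)$. By convexity of TV,
\[
\mathbb{E}\|p_{X_0|Y_S}(\cdot|Y_S)-p_{X_0|Y_0}(\cdot|Y_0)\|_{\mathrm{TV}}\le \mathbb{E}\,\|p_{X_0|Y_0}(\cdot|Y_0')-p_{X_0|Y_0}(\cdot|Y_0)\|_{\mathrm{TV}},
\]
where $Y_0'$ is an independent draw from $p_{Y_0|Y_S}(\cdot|Y_S)$. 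Thus $(Y_0,Y_0')$ are conditionally i.i.d.\ given $Y_S$. Assumption~\ref{assum:compactness} bounds this by $L_{\mathrm{post}}\mathbb{E}\|Y_0-Y_0'\|_2$. To control $\mathbb{E}\|Y_0-Y_0'\|_2$, we pass through $\tilde Y:=Y_S/\sqrt{\bar\alpha_S}$: conditionally on $Y_S$, the draws $Y_0$ and $Y_0'$ have the same law, so by Jensen
\[
\mathbb{E}\|Y_0-Y_0'\|_2\le\bigl(\mathbb{E}\|Y_0-Y_0'\|^2\bigr)^{1/2}\le \bigl(2\,\mathbb{E}\|Y_0-\tilde Y\|^2\bigr)^{1/2}=\sqrt{2d(1-\bar\alpha_S)/\bar\alpha_S}.
\]
Here the middle step uses $\mathbb{E}\|Y_0-Y_0'\|^2=2\mathbb{E}\|Y_0-\mathbb{E}[Y_0|Y_S]\|^2\le 2\mathbb{E}\|Y_0-\tilde Y\|^2$. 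This gives exactly the first term.

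\emph{Alignment error.} The quantity to control is $\mathbb{E}_{Y_S}\|p_{X_0|X_V}(\cdot|Y_S)-p_{X_0|Y_S}(\cdot|Y_S)\|_{\mathrm{TV}}$. First, both kernels are Bayes posteriors with the same prior $p_{X_0}$. Their likelihoods are $x_V|x_0\sim\mathcal N(\sqrt{\bar\alpha_V}x_0,(1-\bar\alpha_V)I)$ and $y_S|x_0=\sqrt{\bar\alpha_S}(\sigma_X x_0+\Xi)/\sigma_Y+\sqrt{1-\bar\alpha_S}\xi$. We bound the expected TV between the posteriors by the TV/KL between the two joint laws of $(X_0,W)$ under the two likelihoods, using a posterior-comparison (chain rule) argument that costs at most a factor~2 from the marginals.

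Next, we introduce an intermediate Gaussian surrogate in which $\Xi$ is replaced by $G\sim\mathcal N(0,\sigma_\Xi^2I)$. Under the surrogate, $Y_S|x_0$ is Gaussian with mean $\sqrt{\bar\alpha_S}\sigma_Xx_0/\sigma_Y$ and variance $(\bar\alpha_S\sigma_\Xi^2/\sigma_Y^2+1-\bar\alpha_S)I$. Using $\sigma_Y^2=\sigma_X^2+\sigma_\Xi^2$ (from normalization and independence), this variance equals $1-\bar\alpha_S\sigma_X^2/\sigma_Y^2$, which is where the quantity $\sigma_Y^2-\bar\alpha_S\sigma_X^2$ enters.

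The step from $\Xi$ to the surrogate is handled by data processing and Pinsker. Adding independent Gaussian noise of variance $1-\bar\alpha_S$ and scaling does not increase KL, giving a contribution of order $\sqrt{D_{\mathrm{KL}}(p_\Xi\|\mathcal N(0,\sigma_\Xi^2I))}$. The factor $\eta_S$ should come from a sharper route: write $\bar\alpha_S\sigma_\Xi^2/\sigma_Y^2$ relative to the total variance, or use the Gaussian smoothing, to extract the ratio.

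The Gaussian-vs-Gaussian step compares $\mathcal N(\sqrt{\bar\alpha_V}x_0,(1-\bar\alpha_V)I)$ with the surrogate Gaussian. We split it into a mean mismatch $|\sqrt{\bar\alpha_V}-\sqrt{\bar\alpha_S}\sigma_X/\sigma_Y|\,\|x_0\|\le R_{\mathcal X}|\sigma_Y\sqrt{\bar\alpha_V}-\sigma_X\sqrt{\bar\alpha_S}|/\sigma_Y$, divided by the standard deviation, plus a variance mismatch. The variance mismatch is bounded via the Gaussian KL formula and Pinsker, giving the $\sqrt{8d/(\sigma_Y^2-\bar\alpha_S\sigma_X^2)}$ piece after linearizing the log-ratio of variances in the same difference. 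Together these yield $\zeta_S$.

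The main obstacle is this alignment step. In particular, we must obtain exactly the stated constants $\zeta_S,\eta_S$ with the normalization $1/\sqrt{1-\bar\alpha_V}$, which requires a careful choice of which variance to divide by and an inequality such as $|\log(a/b)|\lesssim|a-b|/\min(a,b)$. We must also verify that the posterior-to-joint TV reduction introduces no extra dependence on the prior.
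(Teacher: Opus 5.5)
Your perturbation estimate is the paper's argument. Your alignment architecture also matches the paper: passing to joint laws at the cost of a factor $2$, introducing the Gaussian surrogate with variance $1-\bar\alpha_S\sigma_X^2/\sigma_Y^2$, and splitting into mean shift, variance mismatch and non-Gaussianity. The factor-$2$ step introduces no prior dependence. Adding and subtracting $p_{X_0|X_V}(x_0|y)p_{X_V}(y)$ produces $\|p_{Y_S}-p_{X_V}\|_{\mathrm{TV}}$ plus the joint TV $\mathbb{E}_{X_0}\|p_{X_V|X_0}-p_{Y_S|X_0}\|_{\mathrm{TV}}$, and by convexity the former is dominated by the latter.

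\textbf{The gap: the non-Gaussianity term.} You leave this step as ``a sharper route''. Data processing plus Pinsker (and the factor $2$) only gives the coefficient $\sqrt{2}$ in front of $\sqrt{D_{\mathrm{KL}}(p_\Xi\|\mathcal N(0,\sigma_\Xi^2 I_d))}$. The theorem instead has
\[
\eta_S^2=\frac{2\bar\alpha_S\sigma_\Xi^2}{\sigma_\Xi^2+(1-\bar\alpha_S)\sigma_X^2}\le 2,
\]
with equality only at $\bar\alpha_S=1$ and $\eta_S\to 0$ as $\bar\alpha_S\to 0$. This decay in $S$ is the substance of the bound, since it is what gets traded against the perturbation term when choosing $S$. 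So this is a missing idea, not bookkeeping.

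What is needed is a \emph{strict} contraction of KL under Gaussian convolution toward a Gaussian reference. Let $U$ have bounded density and $\mathbb{E}\|U\|_2^2=d\sigma_U^2$, and let $G\sim\mathcal N(0,\sigma_U^2 I_d)$. Then
\[
D_{\mathrm{KL}}\bigl(p_U\ast\gamma_{\sigma_V^2}\,\|\,p_G\ast\gamma_{\sigma_V^2}\bigr)\le\frac{\sigma_U^2}{\sigma_U^2+\sigma_V^2}\,D_{\mathrm{KL}}(p_U\|p_G).
\]
The paper proves this as follows:
\begin{enumerate}
\item Run the Ornstein--Uhlenbeck process $\mathrm{d}U_t=-U_t\,\mathrm{d}t+\sqrt{2}\sigma_U\,\mathrm{d}W_t$. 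Its invariant law is $p_G$, so Bakry--\'Emery gives entropy decay at rate $e^{-2t}$.
\item Observe that $e^tU_t\sim p_U\ast\gamma_{(e^{2t}-1)\sigma_U^2}$, while $G_t\sim p_G$ for all $t$.
\item Use invariance of KL under the map $x\mapsto e^t x$.
\item Choose $e^{2t}=(\sigma_U^2+\sigma_V^2)/\sigma_U^2$.
\end{enumerate}
Apply this with $U=\sqrt{\bar\alpha_S}\,\Xi/\sigma_Y$ (so $\sigma_U^2=\bar\alpha_S\sigma_\Xi^2/\sigma_Y^2$) and $\sigma_V^2=1-\bar\alpha_S$. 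Using $\sigma_Y^2=\sigma_X^2+\sigma_\Xi^2$, the ratio becomes $\bar\alpha_S\sigma_\Xi^2/(\sigma_Y^2-\bar\alpha_S\sigma_X^2)$. Pinsker and the factor $2$ then give exactly $\eta_S$. The bounded density $\|p_\Xi\|_\infty\le M_\Xi$ is used here to guarantee that the KL divergence is finite.

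\textbf{A smaller issue: the variance-mismatch step.} The first-order inequality you suggest, $|\log(a/b)|\lesssim|a-b|/\min(a,b)$, only yields $D_{\mathrm{KL}}\lesssim d\,|a-b|/\min(a,b)$. That gives a TV bound of order $\sqrt{|a-b|}$, i.e.\ square-root dependence on $|\sigma_Y\sqrt{\bar\alpha_V}-\sigma_X\sqrt{\bar\alpha_S}|$, not the linear dependence in the $\zeta_S$ term. You need the second-order bound $r-1-\log r\le (r-1)^2/r$, which gives $D_{\mathrm{KL}}\le\frac{d}{2}(a-b)^2/(ab)$ with $a=1-\bar\alpha_V$ and $b=(\sigma_Y^2-\bar\alpha_S\sigma_X^2)/\sigma_Y^2$. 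Then use the factorization $|\bar\alpha_S\sigma_X^2-\bar\alpha_V\sigma_Y^2|\le(\sigma_X+\sigma_Y)\,|\sigma_X\sqrt{\bar\alpha_S}-\sigma_Y\sqrt{\bar\alpha_V}|$ together with $\sigma_X\le\sigma_Y$.

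Likewise, the mean shift must be measured between two Gaussians of common variance $1-\bar\alpha_V$. The paper's intermediate $E$ has mean $\sqrt{\bar\alpha_S}\sigma_X x_0/\sigma_Y$ and variance $1-\bar\alpha_V$, and this is what produces the $R_{\mathcal X}/\sigma_Y$ part of $\zeta_S$ with the $1/\sqrt{1-\bar\alpha_V}$ normalization.
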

\begin{proof}
    See Appendix \ref{proof:error bound}.
\end{proof}

\par We define a conditional distribution in~\eqref{eq:weak:coupling:denoising}, which induces a conditional distribution of $Y_{0}$ given $X_{0}^{\mathrm{weak}}$:
\begin{equation}\label{eq:condition:yx}
p_{Y_{0}|X_{0}}^{\mathrm{weak}}(\cdot|x_{0}) \propto p_{X_{0}|Y_{0}}^{\mathrm{weak}}(x_{0}|\cdot)p_{Y_{0}}, \quad x_{0}\in\mathcal{X}.
\end{equation}
The following corollary provides an expected TV-bound.

\begin{corollary}[Conditional generative error]\label{cor:conditional_generative_error}
Under the same assumptions as in Theorem~\ref{thm:error_bound}, let $p_{Y_{0}|X_{0}}^{\mathrm{weak}}$ be the weakly coupled conditional distribution defined in~\eqref{eq:condition:yx}. Then we have
\begin{align*}
&\mathbb{E}_{X_{0}}\bigl[\|p_{Y_0|X_0}(\cdot|X_0)-p_{Y_0|X_0}^{\mathrm{weak}}(\cdot|X_0)\|_{\mathrm{TV}}\bigr] \\
&\leq 2L_{\mathrm{post}}\sqrt{2d\frac{1-\bar{\alpha}_S}{\bar{\alpha}_S}}+2\zeta_{S}\frac{|\sigma_Y \sqrt{\bar{\alpha}_V} - \sigma_X \sqrt{\bar{\alpha}_S}|}{\sqrt{1-\bar{\alpha}_V}}+2\eta_{S}\sqrt{D_{\mathrm{KL}}(p_{\Xi} \,\|\, \mathcal{N}(0, \sigma_{\Xi}^2 I_d))}.
\end{align*}
\end{corollary}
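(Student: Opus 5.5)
The plan is to reduce the corollary to the bound on $\Delta_p$ in Theorem~\ref{thm:error_bound}. We compare the two joint distributions on $(X_0,Y_0)$ and then pass to conditionals. Define the true joint $p_{X_0,Y_0}(x,y)=p_{X_0|Y_0}(x|y)p_{Y_0}(y)$ and the weak joint $p^{\mathrm{weak}}_{X_0,Y_0}(x,y)\coloneqq p^{\mathrm{weak}}_{X_0|Y_0}(x|y)p_{Y_0}(y)$. The conditional $p^{\mathrm{weak}}_{Y_0|X_0}$ in~\eqref{eq:condition:yx} is exactly the $Y$-given-$X$ conditional of this weak joint. Its $X$-marginal is $p^{\mathrm{weak}}_{X_0}(x)=\int p^{\mathrm{weak}}_{X_0|Y_0}(x|y)p_{Y_0}(y)\,\mathrm{d}y$, which in general differs from $p_{X_0}$.

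\textbf{Step 1: joint TV.} Since both joints share the $Y_0$-marginal,
\[
\|p_{X_0,Y_0}-p^{\mathrm{weak}}_{X_0,Y_0}\|_{\mathrm{TV}}=\mathbb{E}_{Y_0}\bigl[\|p_{X_0|Y_0}(\cdot|Y_0)-p^{\mathrm{weak}}_{X_0|Y_0}(\cdot|Y_0)\|_{\mathrm{TV}}\bigr]=\Delta_p .
\]
Marginalizing over $y$ is a contraction in TV, so $\|p_{X_0}-p^{\mathrm{weak}}_{X_0}\|_{\mathrm{TV}}\le\Delta_p$.

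\textbf{Step 2: conditional TV.} For each $x$, use the triangle inequality:
\[
|p_{X_0}(x)p_{Y_0|X_0}(y|x)-p_{X_0}(x)p^{\mathrm{weak}}_{Y_0|X_0}(y|x)|\le |p_{X_0,Y_0}(x,y)-p^{\mathrm{weak}}_{X_0,Y_0}(x,y)|+|p^{\mathrm{weak}}_{X_0}(x)-p_{X_0}(x)|\,p^{\mathrm{weak}}_{Y_0|X_0}(y|x).
\]
Integrate over $y$ and then over $x$, with the factor $\tfrac12$ in the TV definition. This gives
\[
\mathbb{E}_{X_0}\bigl[\|p_{Y_0|X_0}(\cdot|X_0)-p^{\mathrm{weak}}_{Y_0|X_0}(\cdot|X_0)\|_{\mathrm{TV}}\bigr]\le \|p_{X_0,Y_0}-p^{\mathrm{weak}}_{X_0,Y_0}\|_{\mathrm{TV}}+\|p_{X_0}-p^{\mathrm{weak}}_{X_0}\|_{\mathrm{TV}}\le 2\Delta_p .
\]
Inserting the bound on $\Delta_p$ from Theorem~\ref{thm:error_bound} yields exactly the stated inequality with the factor $2$.

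\textbf{Main obstacle.} The main obstacle is measure-theoretic care in Step 2. The weak conditional $p^{\mathrm{weak}}_{Y_0|X_0}(\cdot|x)$ is well defined only where $p^{\mathrm{weak}}_{X_0}(x)>0$, whereas the expectation is taken over the true $p_{X_0}$ supported on $\mathcal{X}$.

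I would first argue positivity on $\mathcal{X}$. The perturbed observation $\bar Y_S$ has a full-support Gaussian component, and $p_{X_0|X_V}(x|w)\propto p_{X_0}(x)\exp\bigl(-\|w-\sqrt{\bar\alpha_V}x\|^2/(2(1-\bar\alpha_V))\bigr)$. Hence $p^{\mathrm{weak}}_{X_0}>0$ wherever $p_{X_0}>0$. Any residual null set can be assigned an arbitrary conditional, since the corresponding term in the triangle inequality is still controlled by the marginal discrepancy. Fubini/Tonelli justifies the interchange of integrals, because all integrands are nonnegative.
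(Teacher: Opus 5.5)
Your proposal is correct and is essentially the paper's argument. The paper obtains $\mathbb{E}_{X_0}[\|p_{Y_0|X_0}(\cdot|X_0)-p^{\mathrm{weak}}_{Y_0|X_0}(\cdot|X_0)\|_{\mathrm{TV}}]\le 2\Delta_p$ by reusing the argument behind \eqref{eq:align:1}: add and subtract the joint, then bound the marginal discrepancy by the joint TV. It then inserts the bound of Theorem~\ref{thm:error_bound}. Your explicit positivity check for $p^{\mathrm{weak}}_{X_0}$ is a detail the paper leaves implicit; note that it needs only $p_{X_0|X_V}(x|w)>0$ for every $w$ whenever $p_{X_0}(x)>0$, not a Gaussian component in $\bar Y_S$, which is absent when $S=0$.
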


\begin{proof}
    See Appendix~\ref{proof:error:condition}.
\end{proof}

\begin{remark}[Selection of time steps $S$ and $V$]\label{rem:time_steps}
The bounds motivate the variance-alignment condition $\bar{\alpha}_V\sigma_Y^2=\bar{\alpha}_S\sigma_X^2$, which eliminates the mean- and variance-mismatch terms. For additive Gaussian white noise, the non-Gaussianity term vanishes, while setting $S=0$ eliminates the perturbation term, recovering the exact formulation in Remark~\ref{rem:gaussian_weak_coupling}.

For practical noise, bounding the exact discrepancy is challenging. We therefore propose a simplified heuristic objective for selecting a suitable $S$. Specifically, we use $\sigma_X \approx \sigma_Y$ to simplify $\eta_S \approx \frac{\sqrt{2\bar{\alpha}_S}\sigma_{\Xi}}{\sigma_Y \sqrt{1-\bar{\alpha}_S}}$ and approximate the KL divergence as $D_{\mathrm{KL}}(p_{\Xi} \,\|\, \mathcal{N}(0, \sigma_{\Xi}^2 I_d))\approx \big|\kappa(Y)-\kappa(X)\big|^2$, which gives
\begin{equation}
    \bar{\alpha}_S^{*} = \underset{\bar{\alpha}_S \in (0, 1]}{\arg\min} \left(
    \lambda \sqrt{\frac{1-\bar{\alpha}_S}{\bar{\alpha}_S}} + \sqrt{\frac{\bar{\alpha}_S}{1-\bar{\alpha}_S}}\,\big|\kappa(Y)-\kappa(X)\big| \right) = \frac{\lambda}{\lambda + \big|\kappa(Y)-\kappa(X)\big|},
\end{equation}
where the hyperparameter $\lambda \coloneq \frac{L_{\mathrm{post}} \sigma_Y \sqrt{d}}{\sigma_{\Xi}}$ theoretically aggregates the constants from the error bound to balance the two components,
and $\kappa(\cdot)$ denotes the standardized dimension-averaged excess kurtosis, defined by
\begin{equation*}
    \kappa(X) \coloneq \frac{1}{d\sigma_X^4}\mathbb{E}\big[\|X - \mathbb{E}[X]\|_2^4\big] - (d+2).
\end{equation*}
Subsequently, the coupling step $V$ is determined by the variance alignment constraint $\bar{\alpha}_V = (\sigma_X^2 / \sigma_Y^2) \bar{\alpha}_S^*$. In practice, $\lambda$ can be empirically calibrated using synthetic noise data.
\end{remark}

\section{Experimental Results}\label{sec:experiments}

In this section, we evaluate the performance of LUD-DIF on different tasks. First, we assess its ability to model various simulated noise distributions on natural images. Next, we evaluate the model's capability in capturing complex real-world noise using a smartphone image denoising dataset (SIDD). Finally, we demonstrate the applicability of our method to image super-resolution.

Given a clean sample \(x_0\), we first diffuse it to the aligned state \(x_V\), and then use the learned conditional reverse process to generate the corresponding noisy or degraded observation \(y_0\). The generated pair \((x_0,y_0)\) is then used either for evaluating the learned degradation model or for training a downstream restoration network.

For implementation, we report the diffusion levels by their equivalent additive noise scales \(\tau_S\) and \(\tau_V\), rather than directly listing \(\bar{\alpha}_S\) and \(\bar{\alpha}_V\). On the original data scale, these quantities are related by
\[
\tau_S=\sigma_Y\sqrt{\frac{1-\bar{\alpha}_S}{\bar{\alpha}_S}},\qquad
\tau_V=\sigma_X\sqrt{\frac{1-\bar{\alpha}_V}{\bar{\alpha}_V}}.
\]
Under the variance-alignment condition \(\bar{\alpha}_V\sigma_Y^2=\bar{\alpha}_S\sigma_X^2\), this is equivalent to \(\tau_V^2=\tau_S^2+\sigma^2\), where \(\sigma\) denotes the standard deviation of the degradation noise.

\subsection{Simulated Image Noise}
\label{subsec:simu-noise}
\begin{figure}[t]
    \centering
    \includegraphics[width=1\textwidth, keepaspectratio]{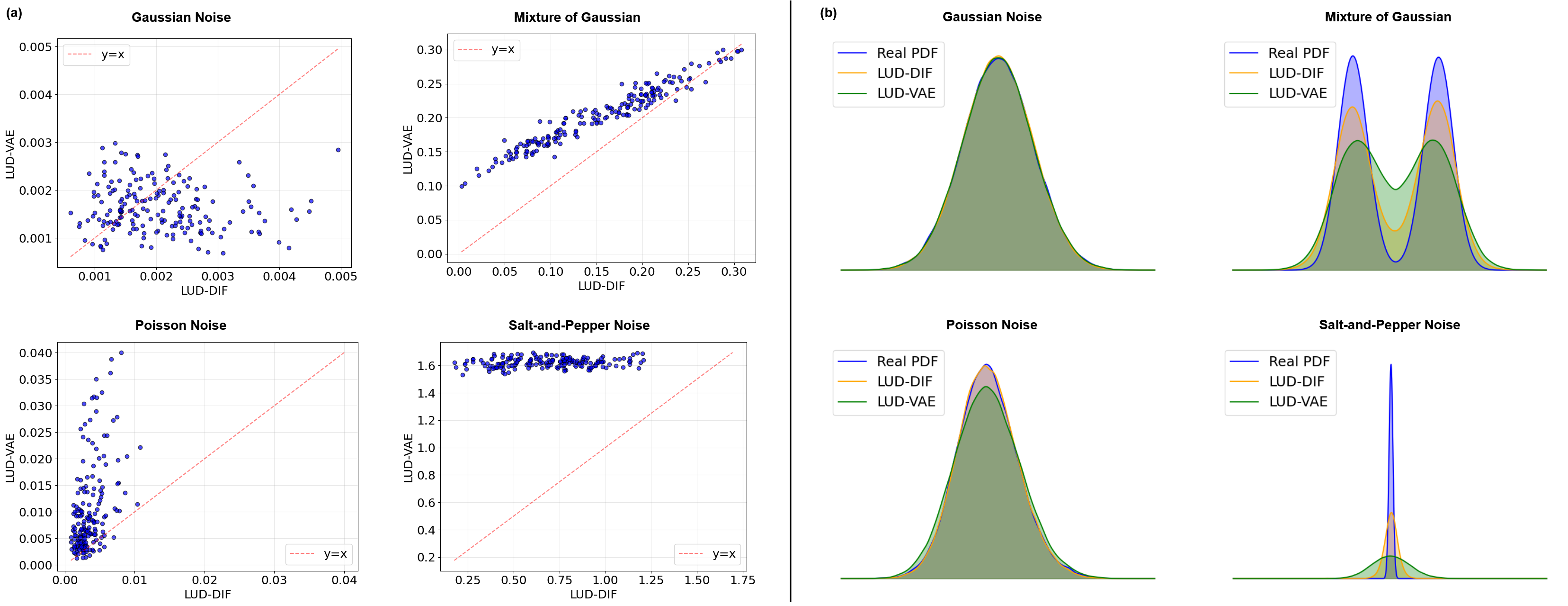}
    \caption{Comparison of the modeling capabilities of LUD-VAE and LUD-DIF for different simulated noises on the BSDS300 dataset. 
             (a) Scatter plot of the KL divergence between the generated noise and the ground-truth noise for each image in the validation set. 
             Points below the $y = x$ line indicate that LUD-DIF yields worse results on the corresponding sample. 
             (b) Comparison of the generated noise distribution curves between the two methods. 
             The selected sample points correspond to cases where both methods perform within the 25\%--50\% best range across all noise types.}
    \label{fig:simu-noise}
\end{figure}

\ 

\textbf{Dataset: } We evaluate the capability of LUD-DIF in simulating image noise distributions on the BSDS300 dataset \cite{MartinFTM01}. The BSDS300 dataset contains 300 natural images, of which 200 images are used as the training set and the remaining 100 images as the test set. For images in the training set, we conduct training on $64 \times 64$ patches, while for images in the test set, we perform evaluation on $256 \times 256$ patches. The training set is evenly split into two halves. One half is used to create four noisy datasets by adding Gaussian noise, Poisson noise, salt-and-pepper noise, and a mixture of Gaussian noises to achieve a PSNR of 20 dB. The other half serves as the unpaired clean dataset.

\textbf{Implementation Details: } Our model consists of a diffusion model for clean images and a conditional diffusion model for noisy images, both employing a UNet architecture. It is trained for 100k iterations using the Adam optimizer with an initial learning rate of $1 \times 10^{-4}$, which is gradually decayed to $1 \times 10^{-6}$ based on the loss reduction. The batch size is set to 8 during training. For sampling, we employ the DDIM sampler \cite{song_denoising_2022} with 50 steps and set the stochastic term $\eta$ to 0.0. For Gaussian noise, we set $\tau_S = 0$. For Poisson noise and mixed Gaussian noise, we set $\tau_S = 0.5\sigma$, where $\sigma$ is the standard deviation of the added noise, and choose $\tau_V = \sqrt{\tau_S^2 + \sigma^2}$ following the corresponding variance-matching rule in our implementation. For salt-and-pepper noise, we use the practical smoothing choice $\tau_V = \tau_S = \sigma$. Furthermore, when constructing the weak reconstruction \(p_{X_0|X_V}(\cdot|Y_S)\), we first smooth the input $y$ by replacing all pixels with values of 0 or 255 with the mean value of their neighboring pixels.

\textbf{Results: } We visualize the generation results of LUD-DIF under different noise distributions, as shown in Fig. \ref{fig:simu-noise} and Table \ref{tab:BSDS300}. 
It can be observed that for Gaussian noise, both methods demonstrate strong modeling capabilities. 
While LUD-VAE yields more stable generation, LUD-DIF exhibits larger errors on certain instances. 
For unimodal distributions such as Poisson noise, our method effectively captures the noise characteristics, 
generating noise images that closely approximate the real noise distribution. 
In contrast, LUD-VAE fails to capture the noise properties in a significant number of samples.

For multimodal structures like Gaussian mixture distributions, our method successfully captures the multimodal characteristics
but fails to precisely reconstruct the true noise distribution. This is attributed to the stronger non-Gaussianity and higher-order moment discrepancy of multimodal distributions, 
which is qualitatively consistent with the non-Gaussianity term in Theorem~\ref{thm:error_bound}. 
For discrete distributions such as salt-and-pepper noise, the smoothing technique described above improves the generation quality 
of the weak reconstruction \(p_{X_0|X_V}(\cdot|Y_S)\), thereby achieving better visual performance. 
However, since salt-and-pepper noise is zero-valued at most pixel locations and diffusion models struggle to guarantee identity mapping at these points,
the KL divergence remains relatively high. 
Conversely, LUD-VAE is unable to effectively model such extreme noise patterns.

\begin{table}[htbp]
\centering
\begin{tabular}{c|c|c|c|c}
\hline
Noise Type & Gaussian & Mixture Gaussian & Poisson & Salt-and-Pepper \\
\hline
LUD-VAE & \textbf{0.0017} & 0.2103 & 0.0147 & 1.6275 \\
LUD-DIF & 0.0023 & \textbf{0.1644} & \textbf{0.0043} & \textbf{0.7129} \\
\hline
\end{tabular}
\caption{Average KL divergence of LUD-VAE and LUD-DIF for different simulated noise types.}
\label{tab:BSDS300}
\end{table}
\subsection{Real-world Noise Modeling}

\ 

\textbf{Dataset:} We employ the Smartphone Image Denoising Dataset (SIDD) \cite{abdelhamed_high-quality_2018} to evaluate our method's ability to model complex real-world noise. The SIDD dataset contains noisy images and their corresponding clean images captured by different smartphones under various lighting conditions. We use the SIDD-Medium subset, which consists of 320 pairs of noisy and clean images. For each pair, we randomly crop 300 patches of size $256\times 256$, resulting in a total of 96,000 paired patches. When training the model, we use all 96,000 images but do not utilize their pairing information, meaning that each iteration uses either clean or noisy images. We refer to this training setup, in which paired data exist in the training set but the pairing relationships are unknown, as Weak-Unpaired (\textbf{WUP}). To simulate a strict unpaired setting, we split these 96,000 images into two halves: 48,000 images serve as the clean image set, and the remaining 48,000 images serve as the noisy image set. We refer to this setting, in which no paired data are available during training, as Strong-Unpaired (\textbf{SUP}). We use the SIDD validation set, which contains 1,280 images of size $256\times256$, for evaluation.

\begin{figure}[t]
    \centering
    \includegraphics[width=1\textwidth, keepaspectratio]{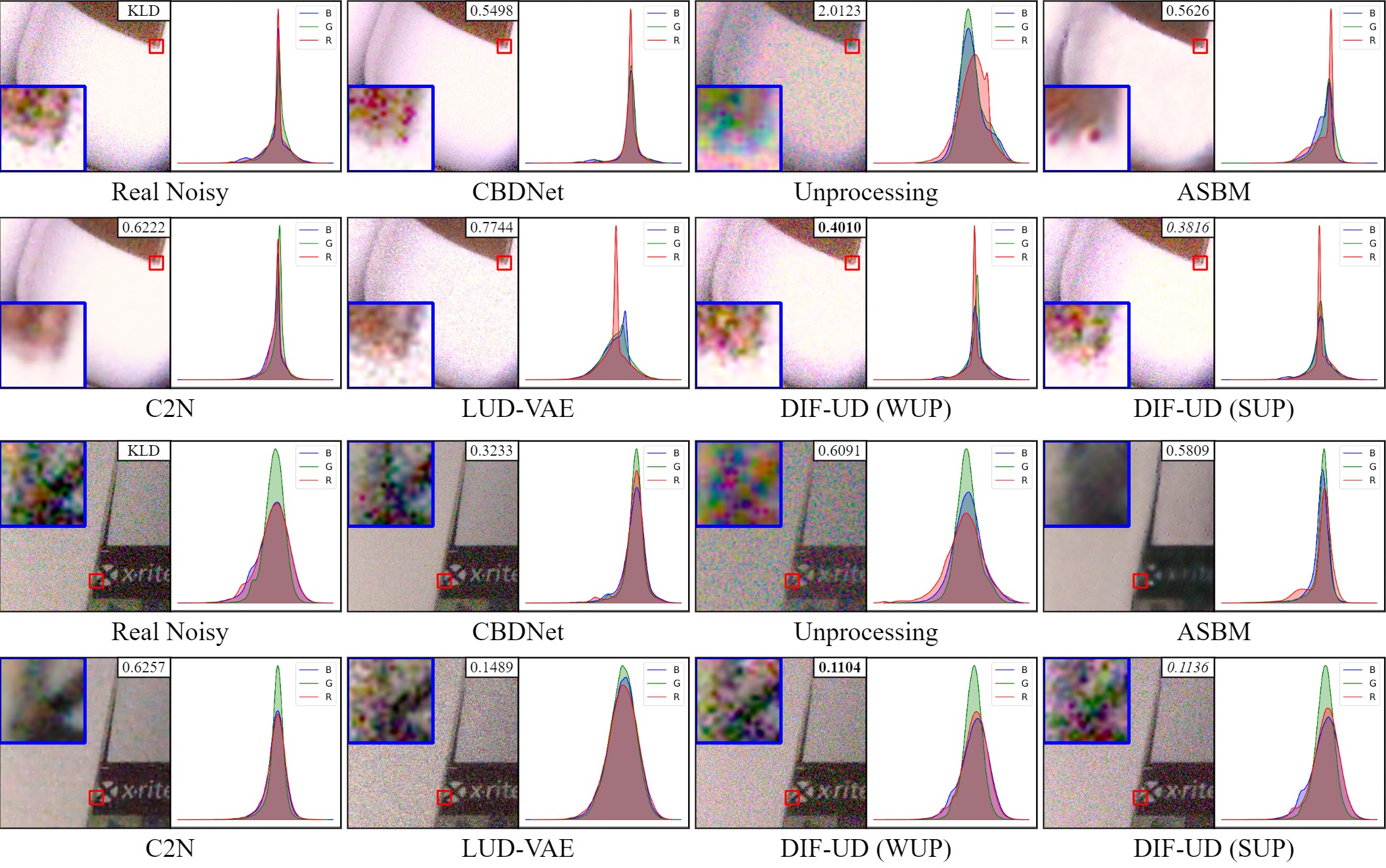}
    \caption{Comparison of different methods for unpaired noise modeling on the SIDD dataset.}
    \label{fig:real-noise}
\end{figure}
\textbf{Evaluation Metrics:} To assess the quality of the noise images generated by our method, we use the clean images from the SIDD validation set to generate their corresponding noisy images. We then evaluate the discrepancy between these generated noisy images and the real noisy images using the Fr\'echet Inception Distance (FID) \cite{heusel2017gans} and the Average KL Divergence (AKLD) between the synthesized and real noisy images. Motivated by the kurtosis-based heuristic in Remark~\ref{rem:time_steps}, we also report the Average Fourth-order Moment Difference (AFMD) to measure the difference in fourth-order statistics between the generated noise distribution and the real noise distribution:
\begin{align*}
    \text{AFMD} = \frac{1}{N}\sum_{i=1}^N \left| \kappa(n_{\text{real}}^{(i)}) - \kappa(n_{\text{fake}}^{(i)}) \right|
\end{align*}
where \(n_{\text{real}}^{(i)}\) and \(n_{\text{fake}}^{(i)}\) represent the real noise and the generated noise of the \(i\)-th sample, respectively, and \(\kappa\) denotes the standardized excess kurtosis of the noise. Additionally, we generate noisy images on a portion of the training set to obtain several pairs of noisy and clean images. These paired data are then used to train a DnCNN denoising network\footnote{More advanced denoisers often possess stronger generalization capabilities, which is not conducive to reflecting quality differences in the training set.}. The denoising performance is evaluated on the test set using Peak Signal-to-Noise Ratio (PSNR) and Structural Similarity Index Measure (SSIM) \cite{wang2004image}.

\textbf{Implementation Details:} The network architecture is the same as that described in Section~\ref{subsec:simu-noise}. The model is trained for 300k iterations using the Adam optimizer, with the learning rate set to $1 \times 10^{-4}$ and gradually decreased to $1 \times 10^{-6}$ based on the decline of the loss. During training, the batch size is set to 64, and the model is trained on randomly cropped image patches of size $64\times 64$. For sampling, the DDIM sampler is employed. To achieve better generation quality when generating noise for the validation set, we use 200 sampling steps and set the stochastic term $\eta$ to 0.5. For generating the training set for DnCNN, which prioritizes faster generation speed while ensuring noise randomness, we use 50 sampling steps and set $\eta$ to 1.0.
Due to the significant variation in noise distributions within the SIDD dataset, we select $V$ and $S$ separately for each image. Specifically, we first train a noise variance estimator on clean data using Gaussian-simulated noise. For a noisy image, this estimator is used to predict its noise variance $\hat{\sigma}^2$. The values of $V$ and $S$ are then selected by an empirically calibrated rule motivated by the heuristic objective in Remark~\ref{rem:time_steps}, with $V$ chosen to match the corresponding variance level after $S$ is fixed. Empirically, setting $\tau_S = 0.5 \sigma$ yields favorable results.
To enable the generation of noisy images with specified noise intensity, an additional channel is introduced in the input of the conditional diffusion model to feed in the noise level information. When generating noise simulations on the validation set, we directly use the given corresponding variance to generate the noisy images. Conversely, when generating noisy images on the training set for DnCNN training, we randomly sample the noise standard deviation from the interval $[5, 40]$ to produce the noisy images.
\begin{table}[t]
    \centering
    \caption{Quantitative comparison of noise generation quality on the SIDD validation set and denoising performance on the SIDD benchmark.
    The best results are highlighted in \textbf{bold}, and the second-best results are \underline{underlined}.
    Note that ``Paired'' refers to the UNet backbone performance using paired data.}
    \label{tab:sidd_results}
    \begin{tabular}{c|ccc|cc}
    \hline
    \multirow{2}{*}{Method} & \multicolumn{3}{c|}{Noise Generation Quality} & \multicolumn{2}{c}{Downstream Performance} \\
    \cline{2-6}
        & FID $\downarrow$ & AKLD $\downarrow$ & AFMD $\downarrow$ & PSNR $\uparrow$ & SSIM $\uparrow$ \\
    \hline
    CBDNet\cite{guo2019toward}                & 141.34 & 1.795 & 2.186 & 34.48 & 0.848 \\
    Unprocessing\cite{brooks2019unprocessing} & 95.10 & 0.964 & 1.338 & 21.74 & 0.440 \\
    \hline
    C2N\cite{jang2021c2n}                     & \underline{33.97} & 0.169 & 1.141 & 34.12 & 0.818 \\
    DeFlow\cite{wolf2021deflow}               & 39.45 & 0.205 & - & 33.82 & 0.846 \\
    LUD-VAE\cite{zheng_learn_2023}            & 35.31 & \underline{0.108} & \underline{0.780} & \underline{34.91} & \underline{0.892} \\
    ASBM\cite{gushchin_adversarial_2024}      & 68.93 & 0.552 & 0.848 & 34.32 & 0.866 \\
    \textbf{LUD-DIF (Ours)}                   & \textbf{16.99} & \textbf{0.046} & \textbf{0.420} & \textbf{35.47} & \textbf{0.896} \\
    \hline
    LUD-DIF(WUP)                              & 16.30 & 0.026 & 0.340 & 35.60 & 0.898 \\
    Paired                                   & 13.86 & 0.014 & 0.206 & 37.89 & 0.906 \\
    \hline
    \end{tabular}
\end{table}

\begin{table}[t]
    \centering
    \caption{Quantitative comparison on the SIDD Validation and SIDD Benchmark datasets.}
    \label{tab:sidd_down}
    \begin{tabular}{l|cc|cc}
    \hline
    \multirow{2}{*}{Method} & \multicolumn{2}{c|}{SIDD Validation} & \multicolumn{2}{c}{SIDD Benchmark} \\
    \cline{2-5}
    & PSNR $\uparrow$ & SSIM $\uparrow$ & PSNR $\uparrow$ & SSIM $\uparrow$ \\
    \hline
    N2V \cite{krull2019noise2void}            & 29.35 & 0.651 & 27.68 & 0.668 \\
    N2S \cite{batson2019noise2self}             & 30.72 & 0.787 & 29.56 & 0.808 \\
    CVF-SID \cite{neshatavar2022cvf}        & 34.17 & 0.872 & 34.71 & 0.917 \\
    AP-BSN + R$^3$ \cite{lee2022ap} & 35.91 & 0.882 & 35.97 & \underline{0.925} \\
    SCPGabNet \cite{lin2023unsupervised}      & 36.53 & 0.886 & 36.53 & 0.925 \\
    SDAP(S)(E) \cite{pan2023random}     & \textbf{37.55} & \underline{0.894} & \underline{37.53} & \textbf{0.936} \\
    \textbf{LUD-DIF (Ours)}        & \underline{37.37} & \textbf{0.906} & \textbf{37.59} & 0.898 \\
    \hline
    \end{tabular}
\end{table}

\textbf{Compared Methods:} We compare the LUD-DIF method with several unpaired noise modeling approaches, including C2N \cite{jang2021c2n}, DeFlow \cite{wolf2021deflow}, and LUD-VAE \cite{zheng_learn_2023}. These models are all trained using the hyperparameters specified in \cite{zheng_learn_2023} and employ the SUP training data. We also compare against two model-based noise modeling methods: CBDNet \cite{guo2019toward} and Unprocessing \cite{brooks2019unprocessing}. For EOT-based methods, we utilize ASBM \cite{gushchin_adversarial_2024}, which achieves state-of-the-art generation speed and quality. The diffusion term parameter for ASBM is set to 0.3 to match the overall noise intensity of the dataset. Furthermore, we evaluate the noise generation capability of the UNet backbone used in our method under a supervised setting labeled ``Paired''. We also compare our method in the WUP case to several unsupervised denoising methods, including N2V~\cite{krull2019noise2void}, N2S~\cite{batson2019noise2self},
CVF-SID~\cite{neshatavar2022cvf}, AP-BSN + R3~\cite{lee2022ap}, SCPGabNet~\cite{lin2023unsupervised}, and SDAP(S)(E)~\cite{pan2023random}. Here, we use DRUNet~\cite{zhang2021plug} as the downstream denoising network to get better results.

\textbf{Results:} The results are presented in Table \ref{tab:sidd_results}. Compared to other unpaired noise modeling methods, LUD-DIF achieves the best performance on most metrics. Notably, it shows substantial improvement across all three metrics for noisy image generation quality. The two model-based, non-learning noise modeling methods perform poorly in the noisy domain due to their inability to learn the underlying distributional characteristics of noise from data. The EOT-based ASBM method also yields subpar generation quality, which is attributed to error accumulation across different iteration steps. It is worth noting that the model trained on WUP data slightly outperforms the one trained on SUP data, indicating that implicit pairing within the dataset enhances the model's ability to capture data coupling. Furthermore, the performance gap between unpaired and paired training is qualitatively consistent with the weak-coupling analysis, since more accurate coupling information reduces the approximation error. Visual results, as shown in Figure \ref{fig:real-noise}, demonstrate that our method can more effectively characterize the spatial variation characteristics of noise.
\subsection{Image Super-Resolution}
\begin{table}[t]
 \centering
 \caption{Quantitative comparison of pseudo-degraded image generation quality in the noisy domain on the AIM19 and NTIRE20 datasets.}
 \label{tab:SR_noisy}
 \begin{tabular}{c|cc|cc}
 \hline
 \multirow{2}{*}{Method} & \multicolumn{2}{c|}{AIM19} & \multicolumn{2}{c}{NTIRE20} \\
 \cline{2-5}
 & AKLD $\downarrow$ & FID $\downarrow$ & AKLD $\downarrow$ & FID $\downarrow$ \\
 \hline
 Bicubic                 & 0.701 & 112.9 & 0.701 & 56.3 \\
 FSSR \cite{fritsche2019frequency}              & 0.379 & 67.6 & 0.224 & 40.2 \\
 Impressionism \cite{ji2020real}      & 0.549 & 87.3 & 0.275 & 26.5 \\
 DASR \cite{wei2021unsupervised}              & \underline{0.328} & 72.4 & 0.346 & 48.6 \\
 DeFlow \cite{wolf2021deflow}              & 0.356 & 119.8 & 0.156 & 34.7 \\
 LUD-VAE\cite{zheng_learn_2023} & 0.329 & \underline{57.2} & \underline{0.120} & \underline{24.9} \\
\textbf{LUD-DIF (Ours)}  & \textbf{0.271} & \textbf{56.6} & \textbf{0.106} & \textbf{24.2} \\
 \hline
 \end{tabular}
\end{table}

\ 

\textbf{Dataset:} We employ two unpaired super-resolution datasets, AIM19 and NTIRE20, to evaluate the applicability of LUD-DIF for image super-resolution tasks. Track 2 of AIM19 and Track 1 of NTIRE20 both contain several sets of unpaired high-resolution images and degraded images. We crop these into \(128 \times 128\) patches for training. Both datasets include a test set consisting of 100 paired low-resolution and high-resolution images, which we use to evaluate our method.

\textbf{Implementation Details:} The network architecture is the same as described in Section~\ref{subsec:simu-noise}. Since the exact forward downsampling operator for the super-resolution problem is unknown, we approximate it using bicubic interpolation. Specifically,
\begin{align*}
    y = D_{\text{bicubic}}(x) + n + D_{\text{real}}(x) - D_{\text{bicubic}}(x),
\end{align*}
where \(D_{\text{bicubic}}\) denotes the bicubic interpolation downsampling operator, \(D_{\text{real}}\) represents the true downsampling operator, and \(n\) is the real noise. We combine the real noise with the approximation error of the downsampling operator as an effective composite noise to be learned by the noise model; this is a practical extension of the additive formulation used in the analysis. Due to the low noise intensity in the super-resolution task, we directly model the diffusion process for \(x\), \(p_{X_0|X_V}\), as a single-step denoising model \(\mathbb{E}[X_0|X_V]\). In practice, for the AIM19 data, we set \(\tau_V = 15, \tau_S = 10\); for the NTIRE20 data, we set \(\tau_V = 8, \tau_S = 3\), consistent with the settings in \cite{zheng_learn_2023}.

\textbf{Results:} Table~\ref{tab:SR_noisy} compares the generated degradations with real degraded images using AKLD and FID. LUD-DIF achieves the best results on both metrics for the AIM19 and NTIRE20 datasets. For downstream evaluation, we train ESRGAN~\cite{wang2018esrgan} using the generated degraded images paired with their corresponding high-resolution images, and report PSNR, SSIM, and LPIPS in Table~\ref{tab:SR_clean}. LUD-DIF achieves competitive restoration performance, including the best LPIPS on both datasets and the best SSIM on NTIRE20.
\begin{table}[t]
    \centering
    \caption{Quantitative comparison on the AIM19 and NTIRE20 datasets in terms of PSNR, SSIM, and LPIPS.}
    \label{tab:SR_clean}
    \resizebox{\textwidth}{!}{
    \begin{tabular}{l|ccc|ccc}
    \hline
    \multirow{2}{*}{Method} & \multicolumn{3}{c|}{AIM19} & \multicolumn{3}{c}{NTIRE20} \\
    \cline{2-7}
    & PSNR $\uparrow$ & SSIM $\uparrow$ & LPIPS $\downarrow$ & PSNR $\uparrow$ & SSIM $\uparrow$ & LPIPS $\downarrow$ \\
    \hline
    Bicubic       & 21.69 & 0.5517 & 0.517 & 20.45 & 0.3241 & 0.675 \\
    FSSR \cite{fritsche2019frequency}            & 20.81 & 0.5242 & 0.387 & 21.07 & 0.4356 & 0.414 \\
    Impressionism \cite{ji2020real}   & 21.99 & \underline{0.6060} & 0.420 & 25.27 & 0.6731 & 0.229 \\
    DASR \cite{wei2021unsupervised}            & 21.06 & 0.5658 & 0.375 & 23.70 & 0.5748 & 0.328 \\
    DeFlow \cite{wolf2021deflow}        & 21.06 & 0.5842 & 0.346 & 24.81 & 0.6777 & 0.225 \\
    LUD-VAE\cite{zheng_learn_2023} & \textbf{22.32} & \textbf{0.6197} & \underline{0.341} & \textbf{25.79} & \underline{0.7178} & \underline{0.219} \\
    \textbf{LUD-DIF (Ours)} & \underline{22.09} & 0.6046 & \textbf{0.332} & \underline{25.77} & \textbf{0.7200} & \textbf{0.214} \\
    \hline
    \end{tabular}
    }
\end{table}

\subsection{Ablation Study}
\label{subsec:ablation}
Appendix~\ref{supp:subsec:independence-test} tests the independence of generated noise samples. Appendix~\ref{supp:subsec:hyperparameter-selection} studies training and sampling hyperparameters. Appendix~\ref{supp:subsec:degradation-level} examines the effect of the degradation level \(\tau_S\), while Appendix~\ref{supp:subsec:denoising-steps} analyzes the number of denoising steps used in the weak reconstruction. Finally, Appendix~\ref{supp:subsec:two-path-noise-modeling} studies the role of the two-path diffusion architecture in noise modeling. Overall, these results support the practical choices used in the main experiments and are consistent with the weak-coupling trade-off described in Section~\ref{sec:analysis}.

\section{Conclusion and Future Work}\label{sec5}
This paper proposes LUD-DIF, a diffusion-based method for unpaired data learning in inverse problems. Under the weak-coupling assumption, the joint ELBO is decoupled into two independently trainable diffusion processes; without this assumption, we quantitatively analyze the error bound and provide a theorem-motivated heuristic for hyperparameter selection. Experiments on simulated and real-world image noise and image super-resolution demonstrate the effectiveness of the method.

Despite these results, LUD-DIF has two main limitations. First, it requires a known or approximate forward operator, limiting its use in blind inverse problems and semantic modality translation. Second, although we bound the error introduced by the weak-coupling assumption, how to reduce this error remains open. Future work may investigate transformed or latent representations, such as wavelet spaces and neural-network embeddings, to tighten the bound, together with efficient sampling and model distillation to improve scalability.

\bibliographystyle{plain}
\bibliography{LUD-DIF}

\begin{thebibliography}{10}

\bibitem{abdelhamed_high-quality_2018}
Abdelrahman Abdelhamed, Stephen Lin, and Michael~S. Brown.
\newblock A high-quality denoising dataset for smartphone cameras.
\newblock In {\em 2018 IEEE/CVF Conference on Computer Vision and Pattern
  Recognition}, pages 1692--1700, 2018.

\bibitem{adler2021deep}
Amir Adler, Mauricio Araya-Polo, and Tomaso Poggio.
\newblock Deep learning for seismic inverse problems: {T}oward the acceleration
  of geophysical analysis workflows.
\newblock {\em IEEE signal processing magazine}, 38(2):89--119, 2021.

\bibitem{arhire2025learned}
Andrei Arhire and Radu Timofte.
\newblock Learned lightweight smartphone {ISP} with unpaired data.
\newblock In {\em Proceedings of the Computer Vision and Pattern Recognition
  Conference ({CVPR}) Workshops}, pages 1878--1887, 2025.

\bibitem{Bakry1985Diffusions}
Dominique Bakry and Michael {\'E}mery.
\newblock Diffusions hypercontractives.
\newblock In Jacques Az{\'e}ma and Marc Yor, editors, {\em S{\'e}minaire de
  Probabilit{\'e}s XIX 1983/84}, pages 177--206. Springer Berlin Heidelberg,
  1985.

\bibitem{batson2019noise2self}
Joshua Batson and Loic Royer.
\newblock Noise2self: {B}lind denoising by self-supervision.
\newblock In {\em International conference on machine learning}, pages
  524--533. PMLR, 2019.

\bibitem{bertero2006inverse}
Mario Bertero and Michele Piana.
\newblock Inverse problems in biomedical imaging: Modeling and methods of
  solution.
\newblock In Alfio Quarteroni, Luca Formaggia, and Alessandro Veneziani,
  editors, {\em Complex Systems in Biomedicine}, pages 1--33. Springer Milan,
  2006.

\bibitem{brooks2019unprocessing}
Tim Brooks, Ben Mildenhall, Tianfan Xue, Jiawen Chen, Dillon Sharlet, and
  Jonathan~T Barron.
\newblock Unprocessing images for learned raw denoising.
\newblock In {\em Proceedings of the IEEE/CVF conference on computer vision and
  pattern recognition}, pages 11036--11045, 2019.

\bibitem{daras2024survey}
Giannis Daras, Hyungjin Chung, Chieh-Hsin Lai, Yuki Mitsufuji, Jong~Chul Ye,
  Peyman Milanfar, Alexandros~G Dimakis, and Mauricio Delbracio.
\newblock A survey on diffusion models for inverse problems.
\newblock {\em arXiv:2410.00083}, 2024.

\bibitem{de_bortoli_schrodinger_2024}
Valentin De~Bortoli, Iryna Korshunova, Andriy Mnih, and Arnaud Doucet.
\newblock Schr{\"o}dinger bridge flow for unpaired data translation.
\newblock In {\em Advances in Neural Information Processing Systems},
  volume~37, pages 103384--103441, 2024.

\bibitem{fritsche2019frequency}
Manuel Fritsche, Shuhang Gu, and Radu Timofte.
\newblock Frequency separation for real-world super-resolution.
\newblock In {\em 2019 IEEE/CVF International Conference on Computer Vision
  Workshop (ICCVW)}, pages 3599--3608. IEEE, 2019.

\bibitem{genzel2022near}
Martin Genzel, Ingo G{\"u}hring, Jan Macdonald, and Maximilian M{\"a}rz.
\newblock Near-exact recovery for tomographic inverse problems via deep
  learning.
\newblock In {\em International Conference on Machine Learning}, pages
  7368--7381. PMLR, 2022.

\bibitem{gu_optimal_2023}
Xiang Gu, Liwei Yang, Jian Sun, and Zongben Xu.
\newblock Optimal transport-guided conditional score-based diffusion model.
\newblock In {\em Advances in Neural Information Processing Systems},
  volume~36, 2023.

\bibitem{guo2019toward}
Shi Guo, Zifei Yan, Kai Zhang, Wangmeng Zuo, and Lei Zhang.
\newblock Toward convolutional blind denoising of real photographs.
\newblock In {\em Proceedings of the IEEE/CVF conference on computer vision and
  pattern recognition}, pages 1712--1722, 2019.

\bibitem{gushchin_adversarial_2024}
Nikita Gushchin, Daniil Selikhanovych, Sergei Kholkin, Evgeny Burnaev, and
  Alexander Korotin.
\newblock Adversarial {S}chr{\"o}dinger bridge matching.
\newblock {\em Advances in Neural Information Processing Systems},
  37:89612--89651, 2024.

\bibitem{heusel2017gans}
Martin Heusel, Hubert Ramsauer, Thomas Unterthiner, Bernhard Nessler, and Sepp
  Hochreiter.
\newblock {GANs} trained by a two time-scale update rule converge to a local
  {Nash} equilibrium.
\newblock {\em Advances in neural information processing systems}, 30, 2017.

\bibitem{ho_denoising_2020}
Jonathan Ho, Ajay Jain, and Pieter Abbeel.
\newblock Denoising diffusion probabilistic models.
\newblock In {\em Advances in Neural Information Processing Systems},
  volume~33, pages 6840--6851. Curran Associates, Inc., 2020.

\bibitem{jang2021c2n}
Geonwoon Jang, Wooseok Lee, Sanghyun Son, and Kyoung~Mu Lee.
\newblock {C2N}: {P}ractical generative noise modeling for real-world
  denoising.
\newblock In {\em Proceedings of the IEEE/CVF International Conference on
  Computer Vision}, pages 2350--2359, 2021.

\bibitem{ji2020real}
Xiaozhong Ji, Yun Cao, Ying Tai, Chengjie Wang, Jilin Li, and Feiyue Huang.
\newblock Real-world super-resolution via kernel estimation and noise
  injection.
\newblock In {\em proceedings of the IEEE/CVF conference on computer vision and
  pattern recognition workshops}, pages 466--467, 2020.

\bibitem{kabanikhin2011inverse}
Sergey~I Kabanikhin.
\newblock {\em Inverse and ill-posed problems: theory and applications}.
\newblock de Gruyter, 2011.

\bibitem{kamyab2022deep}
Shima Kamyab, Zohreh Azimifar, Rasool Sabzi, and Paul Fieguth.
\newblock Deep learning methods for inverse problems.
\newblock {\em PeerJ Computer Science}, 8:e951, 2022.

\bibitem{kim2023unpaired}
Beomsu Kim, Gihyun Kwon, Kwanyoung Kim, and Jong~Chul Ye.
\newblock Unpaired image-to-image translation via neural {S}chr\"odinger
  bridge.
\newblock In {\em The Twelfth International Conference on Learning
  Representations}, 2024.

\bibitem{krull2019noise2void}
Alexander Krull, Tim-Oliver Buchholz, and Florian Jug.
\newblock Noise2void-learning denoising from single noisy images.
\newblock In {\em Proceedings of the IEEE/CVF conference on computer vision and
  pattern recognition}, pages 2129--2137, 2019.

\bibitem{lee2022ap}
Wooseok Lee, Sanghyun Son, and Kyoung~Mu Lee.
\newblock {AP-BSN}: {S}elf-supervised denoising for real-world images via
  asymmetric pd and blind-spot network.
\newblock In {\em Proceedings of the IEEE/CVF Conference on Computer Vision and
  Pattern Recognition}, pages 17725--17734, 2022.

\bibitem{lehtinen2018noise2noise}
Jaakko Lehtinen, Jacob Munkberg, Jon Hasselgren, Samuli Laine, Tero Karras,
  Miika Aittala, and Timo Aila.
\newblock {N}oise2{N}oise: Learning image restoration without clean data.
\newblock In Jennifer Dy and Andreas Krause, editors, {\em Proceedings of the
  35th International Conference on Machine Learning}, volume~80 of {\em
  Proceedings of Machine Learning Research}, pages 2965--2974. PMLR, 2018.

\bibitem{li2019asymmetric}
Yu~Li, Sheng Tang, Rui Zhang, Yongdong Zhang, Jintao Li, and Shuicheng Yan.
\newblock Asymmetric {GAN} for unpaired image-to-image translation.
\newblock {\em IEEE Transactions on Image Processing}, 28(12):5881--5896, 2019.

\bibitem{lin2023unsupervised}
Xin Lin, Chao Ren, Xiao Liu, Jie Huang, and Yinjie Lei.
\newblock Unsupervised image denoising in real-world scenarios via
  self-collaboration parallel generative adversarial branches.
\newblock In {\em Proceedings of the IEEE/CVF International Conference on
  Computer Vision}, pages 12642--12652, 2023.

\bibitem{lin_decoupled_2026}
Ziyue Lin, Jiahe Hou, Hongyu Xia, Xinrui Xie, Feifei Wang, Yuyin Zhou, Wei
  Wang, Jiawei Liu, and Liangqiong Qu.
\newblock Decoupled residual denoising diffusion models for unified and data
  efficient image-to-image translation.
\newblock In {\em Proceedings of the IEEE/CVF Conference on Computer Vision and
  Pattern Recognition}, pages 35967--35977, 2026.

\bibitem{liu_generalized_2024}
Guan-Horng Liu, Yaron Lipman, Maximilian Nickel, Brian Karrer, Evangelos
  Theodorou, and Ricky T.~Q. Chen.
\newblock Generalized {S}chr\"odinger bridge matching.
\newblock In {\em The Twelfth International Conference on Learning
  Representations}, 2024.

\bibitem{MartinFTM01}
D.~Martin, C.~Fowlkes, D.~Tal, and J.~Malik.
\newblock A database of human segmented natural images and its application to
  evaluating segmentation algorithms and measuring ecological statistics.
\newblock In {\em Proc. 8th Int'l Conf. Computer Vision}, volume~2, pages
  416--423, July 2001.

\bibitem{meanti_unsupervised_2025}
Giacomo Meanti, Thomas Ryckeboer, Michael Arbel, and Julien Mairal.
\newblock Unsupervised imaging inverse problems with diffusion distribution
  matching.
\newblock In {\em Proceedings of the IEEE/CVF International Conference on
  Computer Vision}, pages 28364--28374, 2025.

\bibitem{neshatavar2022cvf}
Reyhaneh Neshatavar, Mohsen Yavartanoo, Sanghyun Son, and Kyoung~Mu Lee.
\newblock {CVF-SID}: {C}yclic multi-variate function for self-supervised image
  denoising by disentangling noise from image.
\newblock In {\em Proceedings of the IEEE/CVF Conference on Computer Vision and
  Pattern Recognition}, pages 17583--17591, 2022.

\bibitem{ongie2020deep}
Gregory Ongie, Ajil Jalal, Christopher~A Metzler, Richard~G Baraniuk,
  Alexandros~G Dimakis, and Rebecca Willett.
\newblock Deep learning techniques for inverse problems in imaging.
\newblock {\em IEEE Journal on Selected Areas in Information Theory},
  1(1):39--56, 2020.

\bibitem{pan2023random}
Yizhong Pan, Xiao Liu, Xiangyu Liao, Yuanzhouhan Cao, and Chao Ren.
\newblock Random sub-samples generation for self-supervised real image
  denoising.
\newblock In {\em Proceedings of the IEEE/CVF international conference on
  computer vision}, pages 12150--12159, 2023.

\bibitem{peng_towards_2025}
Long Peng, Wenbo Li, Renjing Pei, Jingjing Ren, Jiaqi Xu, Yang Wang, Yang Cao,
  and Zheng-Jun Zha.
\newblock Towards realistic data generation for real-world super-resolution.
\newblock In {\em The Thirteenth International Conference on Learning
  Representations}, 2025.

\bibitem{ribes2008linear}
Alejandro Ribes and Francis Schmitt.
\newblock Linear inverse problems in imaging.
\newblock {\em IEEE Signal Processing Magazine}, 25(4):84--99, 2008.

\bibitem{song_denoising_2022}
Jiaming Song, Chenlin Meng, and Stefano Ermon.
\newblock Denoising diffusion implicit models.
\newblock In {\em International Conference on Learning Representations}, 2021.

\bibitem{song2020score}
Yang Song, Jascha Sohl-Dickstein, Diederik~P Kingma, Abhishek Kumar, Stefano
  Ermon, and Ben Poole.
\newblock Score-based generative modeling through stochastic differential
  equations.
\newblock In {\em International Conference on Learning Representations}, 2021.

\bibitem{spantini2015optimal}
Alessio Spantini, Antti Solonen, Tiangang Cui, James Martin, Luis Tenorio, and
  Youssef Marzouk.
\newblock Optimal low-rank approximations of {B}ayesian linear inverse
  problems.
\newblock {\em SIAM Journal on Scientific Computing}, 37(6):A2451--A2487, 2015.

\bibitem{stuart2010inverse}
Andrew~M Stuart.
\newblock Inverse problems: a {B}ayesian perspective.
\newblock {\em Acta numerica}, 19:451--559, 2010.

\bibitem{tropp2010computational}
Joel~A Tropp and Stephen~J Wright.
\newblock Computational methods for sparse solution of linear inverse problems.
\newblock {\em Proceedings of the IEEE}, 98(6):948--958, 2010.

\bibitem{wang2018esrgan}
Xintao Wang, Ke~Yu, Shixiang Wu, Jinjin Gu, Yihao Liu, Chao Dong, Yu~Qiao, and
  Chen~Change Loy.
\newblock {ESRGAN}: {E}nhanced super-resolution generative adversarial
  networks.
\newblock In {\em Computer Vision -- {ECCV} 2018 Workshops}, pages 63--79.
  Springer, 2019.

\bibitem{wang2004image}
Zhou Wang, Alan~C Bovik, Hamid~R Sheikh, and Eero~P Simoncelli.
\newblock Image quality assessment: from error visibility to structural
  similarity.
\newblock {\em IEEE transactions on image processing}, 13(4):600--612, 2004.

\bibitem{wei2020semi}
Yanyan Wei, Zhao Zhang, Yang Wang, Haijun Zhang, Mingbo Zhao, Mingliang Xu, and
  Meng Wang.
\newblock Semi-deraingan: A new semi-supervised single image deraining.
\newblock In {\em 2021 IEEE International Conference on Multimedia and Expo
  (ICME)}, pages 1--6, 2021.

\bibitem{wei2021unsupervised}
Yunxuan Wei, Shuhang Gu, Yawei Li, Radu Timofte, Longcun Jin, and Hengjie Song.
\newblock Unsupervised real-world image super resolution via domain-distance
  aware training.
\newblock In {\em Proceedings of the IEEE/CVF conference on computer vision and
  pattern recognition}, pages 13385--13394, 2021.

\bibitem{wiggins1972general}
Ralph~A Wiggins.
\newblock The general linear inverse problem: {I}mplication of surface waves
  and free oscillations for earth structure.
\newblock {\em Reviews of Geophysics}, 10(1):251--285, 1972.

\bibitem{wolf2021deflow}
Valentin Wolf, Andreas Lugmayr, Martin Danelljan, Luc Van~Gool, and Radu
  Timofte.
\newblock {DeFlow}: {L}earning complex image degradations from unpaired data
  with conditional flows.
\newblock In {\em Proceedings of the IEEE/CVF Conference on Computer Vision and
  Pattern Recognition}, pages 94--103, 2021.

\bibitem{zhang2021plug}
Kai Zhang, Yawei Li, Wangmeng Zuo, Lei Zhang, Luc Van~Gool, and Radu Timofte.
\newblock Plug-and-play image restoration with deep denoiser prior.
\newblock {\em IEEE Transactions on Pattern Analysis and Machine Intelligence},
  44(10):6360--6376, 2021.

\bibitem{zheng_learn_2023}
Dihan Zheng, Xiaowen Zhang, Kaisheng Ma, and Chenglong Bao.
\newblock Learn from unpaired data for image restoration: {A} variational
  {B}ayes approach.
\newblock {\em IEEE Transactions on Pattern Analysis and Machine Intelligence},
  45(5):5889--5903, 2023.

\end{thebibliography}

\appendix
\section{Theoretical details in LUD-DIF}\label{secA1}
\subsection{Proof of Proposition \ref{prop:factorization}}
\label{proof:factorization}
Since the noises in the forward processes are independent, $X_{t}$ and $Y_{t}$ are conditionally independent given $(X_{0},Y_{0})$ for any $t$. Hence, \eqref{eq:prop-forward} directly holds. Similarly, the transition probability factorizes as
\begin{equation}
p_{Z_{t}|Z_{t-1}}(z_{t}|z_{t-1}) = p_{X_{t}|X_{t-1}}(x_{t}|x_{t-1})p_{Y_{t}|Y_{t-1}}(y_{t}|y_{t-1}). \label{eq:forward:decoupling:2}
\end{equation}
Then it follows that
\begin{align*}
p_{Z_{t-1}|Z_{t},Z_{0}}(z_{t-1}|z_{t},z_{0})
&=\frac{p_{Z_{t}|Z_{t-1},Z_{0}}(z_{t}|z_{t-1},z_{0})p_{Z_{t-1}|Z_{0}}(z_{t-1}|z_{0})}{p_{Z_{t}|Z_{0}}(z_{t}|z_{0})} \\
&=\frac{p_{Z_{t}|Z_{t-1}}(z_{t}|z_{t-1})p_{Z_{t-1}|Z_{0}}(z_{t-1}|z_{0})}{p_{Z_{t}|Z_{0}}(z_{t}|z_{0})} \\
&=\frac{p_{X_{t}|X_{t-1}}(x_{t}|x_{t-1})p_{X_{t-1}|X_{0}}(x_{t-1}|x_{0})}{p_{X_{t}|X_{0}}(x_{t}|x_{0})}\frac{p_{Y_{t}|Y_{t-1}}(y_{t}|y_{t-1})p_{Y_{t-1}|Y_{0}}(y_{t-1}|y_{0})}{p_{Y_{t}|Y_{0}}(y_{t}|y_{0})} \\
&=p_{X_{t-1}|X_{t},X_{0}}(x_{t-1}|x_{t},x_{0})p_{Y_{t-1}|Y_{t},Y_{0}}(y_{t-1}|y_{t},y_{0}),
\end{align*}
where the first equality is due to Bayes' rule, the second equality invokes the Markov property of $Z_{t}$, the third equality follows from~\eqref{eq:prop-forward} and~\eqref{eq:forward:decoupling:2}, and the last equality follows from Bayes' rule.

\subsection{Proof of Proposition \ref{proposition:objective:groundtruth}}
\label{proof:objective:groundtruth}
Using arguments similar to those in the marginal derivation in~\eqref{eq:elbo:diffusion}, we expand the joint ELBO:
\begin{equation}\label{eq:proposition:objective:groundtruth:1}
\begin{aligned}
\mathrm{ELBO}_{Z}(\theta;z_{0}) =
& -\overbrace{D_{\mathrm{KL}}(p_{Z_{T}|Z_{0}}(\cdot|z_{0})\|p_{Z_{T}})}^{A_{1}}+\mathbb{E}\Big[\overbrace{\log p_{Z_{0}|Z_{1}}^{\theta}(z_{0}|Z_{1})}^{A_{2}}|Z_{0}=z_{0}\Big] \\
&- \sum_{t=2}^{T}\mathbb{E}\Big[\underbrace{D_{\mathrm{KL}}\big(p_{Z_{t-1}|Z_{t},Z_{0}}(\cdot|Z_{t},z_{0})\|p_{Z_{t-1}|Z_{t}}^{\theta}(\cdot|Z_{t})\big)}_{A_{3}}|Z_{0}=z_{0}\Big].
\end{aligned}
\end{equation}
The term $A_{1}$ in~\eqref{eq:proposition:objective:groundtruth:1} is a constant independent of $\theta$. For the term $A_{2}$, it follows directly from the reverse factorization~\eqref{eq:prop-reverse} that 
\begin{equation}\label{eq:proposition:objective:groundtruth:2}
\log p_{Z_{0}|Z_{1}}^{\theta}(z_0|Z_{1})=\log p_{X_{0}|X_{1}}^{\theta}(x_{0}|X_{1}) + \log p_{Y_{0} \mid X_{0},Y_{1}}^{\theta}(y_{0} \mid x_{0},Y_{1}).
\end{equation}
For the KL divergence term $A_{3}$ in~\eqref{eq:proposition:objective:groundtruth:1}, we apply both the forward posterior factorization~\eqref{eq:prop-posterior} and the reverse factorization~\eqref{eq:prop-reverse}:
\begin{align}
&D_{\rm KL}(p_{Z_{t-1}|Z_{t},Z_{0}}(\cdot|Z_{t},z_{0})\|p_{Z_{t-1}|Z_{t}}^{\theta}(\cdot|Z_{t})) \nonumber \\
&= \mathbb{E}\Bigg[\log\frac{p_{Z_{t-1}|Z_{t},Z_{0}}(Z_{t-1}|Z_{t},z_{0})}{p_{Z_{t-1}|Z_{t}}^{\theta}(Z_{t-1}|Z_{t})} \Bigg| Z_{t},Z_{0}=z_{0}\Bigg] \nonumber \\
&= \mathbb{E}\Bigg[\log\frac{p_{X_{t-1}|X_{t},X_{0}}(X_{t-1}|X_{t},x_{0})p_{Y_{t-1}|Y_{t},Y_{0}}(Y_{t-1}|Y_{t},y_{0})}{p_{X_{t-1} \mid X_{t}}^{\theta}(X_{t-1} \mid X_{t})p_{Y_{t-1} \mid X_{t-1},Y_{t}}^{\theta}(Y_{t-1} \mid X_{t-1},Y_{t})}\Bigg| Z_{t},Z_{0}=z_{0}\Bigg] \nonumber \\
&= \mathbb{E}\Bigg[\log\frac{p_{X_{t-1}|X_{t},X_{0}}(X_{t-1}|X_{t},x_{0})}{p_{X_{t-1}|X_{t}}^{\theta}(X_{t-1}|X_{t})}\Bigg| X_{t},X_{0}=x_{0}\Bigg] \nonumber \\
&\quad +\mathbb{E}\Bigg[\log\frac{p_{Y_{t-1}|Y_{t},Y_{0}}(Y_{t-1}|Y_{t},y_{0})}{p_{Y_{t-1} \mid X_{t-1},Y_{t}}^{\theta}(Y_{t-1} \mid X_{t-1},Y_{t})}\Bigg| Z_{t},Z_{0}=z_{0}\Bigg] \nonumber \\
&= D_{\rm KL}(p_{X_{t-1}|X_{t},X_{0}}(\cdot|X_{t},x_{0}) \| p_{X_{t-1}|X_{t}}^{\theta}(\cdot|X_{t})) \nonumber \\
&\quad + \mathbb{E}\left[D_{\rm KL}\bigl(p_{Y_{t-1}|Y_{t},Y_{0}}(\cdot|Y_{t},y_{0}) \big\| p_{Y_{t-1} \mid X_{t-1},Y_{t}}^{\theta}(\cdot \mid X_{t-1},Y_{t})\bigr) | Z_{t},Z_{0}=z_{0}\right], \label{eq:proposition:objective:groundtruth:3}
\end{align}
where the second equality follows from~\eqref{eq:prop-posterior} and~\eqref{eq:prop-reverse}. Substituting~\eqref{eq:proposition:objective:groundtruth:2} and~\eqref{eq:proposition:objective:groundtruth:3} into~\eqref{eq:proposition:objective:groundtruth:1} yields
\begin{align*}
\mathrm{ELBO}_{Z}(\theta;z_{0})
&= \mathrm{ELBO}_{X}(\theta;x_{0}) + \mathrm{ELBO}_{Y \mid X}(\theta;z_{0}) + C(z_{0}),
\end{align*}
where $C(z_{0})\coloneq -D_{\mathrm{KL}}(p_{Z_{T}|Z_{0}}(\cdot|z_{0})\|p_{Z_{T}})$ is independent of $\theta$ and can be evaluated using the standard Gaussian KL formula. This completes the proof.

\subsection{Proof of Theorem~\ref{theorem:equivalence}}
\label{proof:equivalence}

\begin{proof}[Proof of Theorem~\ref{theorem:equivalence}]
Comparing the objectives in~\eqref{eq:joint:objective} and~\eqref{eq:objective:weak}, it suffices to show
\begin{equation*}
(X_{t-1}^{\rm weak}|Y_{t},Y_{0}) \stackrel{\mathrm{d}}{=} (X_{t-1}|Y_{t},Y_{0}),
\end{equation*}
where the weakly coupled variable is defined as
\begin{equation}\label{eq:theorem:equivalence:1}
X_{t-1}^{\rm weak}\coloneq\sqrt{\bar{\alpha}_{t-1}}X_{0}^{\rm weak}+\sqrt{1-\bar{\alpha}_{t-1}}\xi.
\end{equation}
According to the Chapman--Kolmogorov equation, we have
\begin{align}
&p_{X_{t-1}^{\rm weak}|Y_{t},Y_{0}}(x_{t-1}|y_{t},y_{0}) \nonumber \\
&=\int \underbrace{p_{X_{t-1}^{\rm weak}|X_{0}^{\rm weak},Y_{t},Y_{0}}(x_{t-1}|x_{0},y_{t},y_{0})}_{A_{1}}\underbrace{p_{X_{0}^{\rm weak}|Y_{t},Y_{0}}(x_{0}|y_{t},y_{0})}_{A_{2}}\,\mathrm{d}x_{0}. \label{eq:theorem:equivalence:2}
\end{align} 
We first focus on the term $A_{1}$ in~\eqref{eq:theorem:equivalence:2}. By the definition of the weakly coupled variable in~\eqref{eq:theorem:equivalence:1}, $X_{t-1}^{\rm weak}$ is determined by $X_{0}^{\rm weak}$ and an independent variable $\xi$. Therefore, $X_{t-1}^{\rm weak}$ is conditionally independent of $(Y_{t},Y_{0})$ given $X_{0}^{\rm weak}$. Moreover, the definition in~\eqref{eq:theorem:equivalence:1} shows that the process $X_{0:T}^{\rm weak}$ shares the same forward process as $X_{0:T}$ in~\eqref{eq:forward:X:t}, i.e., $p_{X_{t-1}^{\rm weak}|X_{0}^{\rm weak}}=p_{X_{t-1}|X_{0}}$. Consequently, we have
\begin{equation}\label{eq:theorem:equivalence:3}
p_{X_{t-1}^{\rm weak}|X_{0}^{\rm weak},Y_{t},Y_{0}}(x_{t-1}|x_{0},y_{t},y_{0}) = p_{X_{t-1}^{\rm weak}|X_{0}^{\rm weak}}(x_{t-1}|x_{0}) = p_{X_{t-1}|X_{0}}(x_{t-1}|x_{0}).
\end{equation}
We next consider the term $A_{2}$ in~\eqref{eq:theorem:equivalence:2}. From~\eqref{eq:forward:joint}, conditioning on a fixed $Y_{0}=y_{0}$, the variable $Y_{t}$ is determined by $\varepsilon_{1:t-1}^{Y}$, which is independent of $X_{0}^{\rm weak}$. This implies
\begin{equation}\label{eq:theorem:equivalence:4}
p_{X_{0}^{\rm weak}|Y_{t},Y_{0}}(x_{0}|y_{t},y_{0})=p_{X_{0}^{\rm weak}|Y_{0}}(x_{0}|y_{0})=p_{X_{0}|Y_{0}}^{\rm weak}(x_{0}|y_{0})=p_{X_{0}|Y_{0}}(x_{0}|y_{0}),
\end{equation}
where the second equality is owing to the definition of $X_{0}^{\rm weak}$ in~\eqref{eq:weak:coupling:denoising}, and the last equality invokes Assumption~\ref{assum:weak:coupling}. Substituting~\eqref{eq:theorem:equivalence:3} and~\eqref{eq:theorem:equivalence:4} into~\eqref{eq:theorem:equivalence:2} yields
\begin{align*}
p_{X_{t-1}^{\rm weak}|Y_{t},Y_{0}}(x_{t-1}|y_{t},y_{0})
&=\int p_{X_{t-1}|X_{0}}(x_{t-1}|x_{0})p_{X_{0}|Y_{0}}(x_{0}|y_{0})\,\mathrm{d}x_{0} \\
&=\int p_{X_{t-1}|X_{0},Y_{t},Y_{0}}(x_{t-1}|x_{0},y_{t},y_{0})p_{X_{0}|Y_{t},Y_{0}}(x_{0}|y_{t},y_{0})\,\mathrm{d}x_{0} \\
&=p_{X_{t-1}|Y_{t},Y_{0}}(x_{t-1}|y_{t},y_{0}),
\end{align*}
where the second equality uses the facts that $X_{t-1}$ is conditionally independent of $(Y_{t},Y_{0})$ given $X_{0}$ and that $X_{0}$ is conditionally independent of $Y_{t}$ given $Y_{0}$, while the last equality follows from the Chapman--Kolmogorov equation. This completes the proof.
\end{proof}

\subsection{Proof of Proposition~\ref{proposition:delta}}
\label{proof:delta}
\begin{proof}[Proof of Proposition~\ref{proposition:delta}]
Define an auxiliary function 
\begin{equation*}
D_t(x_0, Y_t, Y_0, \varepsilon^X) \coloneq \|\mu_{Y\mid X,t}^{\theta}(Y_{t}, \sqrt{\bar{\alpha}_{t-1}}x_{0}+\sqrt{1-\bar{\alpha}_{t-1}}\varepsilon^{X})-\tilde{\mu}_{t}(Y_{t},Y_{0})\|_{2}^{2}.
\end{equation*}
Using~\eqref{eq:joint:objective} and~\eqref{eq:objective:weak}, the triangle inequality yields
\begin{equation}\label{eq:proposition:delta:1}
\Delta \leq \sum_{t=1}^{T}\frac{1}{2\sigma_{t}^{2}}\left|\mathbb{E}\Bigl[D_t(X_{0}, Y_t, Y_0, \varepsilon^X)\Bigr] - \mathbb{E}\Bigl[D_t(X_{0}^{\mathrm{weak}}, Y_t, Y_0, \varepsilon^X) \Bigr]\right|, 
\end{equation}
where $(X_{0}^{\mathrm{weak}}|Y_{0}=y_{0})\sim p_{X_{0}|Y_{0}}^{\mathrm{weak}}(\cdot|y_{0})$ and $\varepsilon^{X}\sim\mathcal{N}(0,I_{d})$. 
Using Jensen's inequality, we have
\begin{align*}
&\left|\mathbb{E}\left[D_t(X_{0}, Y_t, Y_0, \varepsilon^X) \,\Big| Y_{0}, Y_{t}, \varepsilon^{X}\right] - \mathbb{E}\left[D_t(X_{0}^{\mathrm{weak}}, Y_t, Y_0, \varepsilon^X) \,\Big| Y_{0}, Y_{t}, \varepsilon^{X}\right]\right| \\
&\leq \int_{\mathcal{X}} D_t(x_0, Y_t, Y_0, \varepsilon^X) | p_{X_{0}|Y_{0}}(x_{0}|Y_{0}) - p_{X_{0}|Y_{0}}^{\mathrm{weak}}(x_{0}|Y_{0})| \mathrm{d}x_{0}.
\end{align*}
It follows from Assumptions~\ref{assum:data} and~\ref{assum:network} that 
\begin{equation*}
D_t(x_0, Y_t, Y_0, \varepsilon^X) \leq K^{\prime}(1+\|Y_t\|_{2}^{\max(m,2)}+\|\varepsilon^{X}\|_{2}^{\max(m,2)}+\|Y_0\|_{2}^2), \quad x_{0}\in\mathcal{X},
\end{equation*}
where $K^{\prime}$ is a constant depending on $C$, $R_{\mathcal{X}}$, $\sigma_X, \sigma_Y$, and the variance schedule of the diffusion model. As a consequence, 
\begin{align*}
&\int_{\mathcal{X}} D_t(x_0, Y_t, Y_0, \varepsilon^X) | p_{X_{0}|Y_{0}}(x_{0}|Y_{0}) - p_{X_{0}|Y_{0}}^{\mathrm{weak}}(x_{0}|Y_{0})| \mathrm{d}x_{0} \\
&\leq 2K^{\prime}(1+\|Y_t\|_{2}^{\max(m,2)}+\|\varepsilon^{X}\|_{2}^{\max(m,2)}+\|Y_0\|_{2}^2)\|p_{X_{0}|Y_{0}}(\cdot|Y_{0})-p_{X_{0}|Y_{0}}^{\mathrm{weak}}(\cdot|Y_{0})\|_{\mathrm{TV}}.
\end{align*}
Taking expectations on both sides of the inequality implies
\begin{align}
&\left|\mathbb{E}\Bigl[D_t(X_{0}, Y_t, Y_0, \varepsilon^X)\Bigr] - \mathbb{E}\Bigl[D_t(X_{0}^{\mathrm{weak}}, Y_t, Y_0, \varepsilon^X) \Bigr]\right| \nonumber \\
&\leq \mathbb{E}\Bigl[\left|\mathbb{E}\left[D_t(X_{0}, Y_t, Y_0, \varepsilon^X) \,\Big| Y_{0}, Y_{t}, \varepsilon^{X}\right] - \mathbb{E}\left[D_t(X_{0}^{\mathrm{weak}}, Y_t, Y_0, \varepsilon^X) \,\Big| Y_{0}, Y_{t}, \varepsilon^{X}\right]\right|\Bigr] \nonumber \\
&\leq \mathbb{E}\Bigl[2K^{\prime}(1+\|Y_t\|_{2}^{\max(m,2)}+\|\varepsilon^{X}\|_{2}^{\max(m,2)}+\|Y_0\|_{2}^2)\|p_{X_{0}|Y_{0}}(\cdot|Y_{0})-p_{X_{0}|Y_{0}}^{\mathrm{weak}}(\cdot|Y_{0})\|_{\mathrm{TV}}\Bigr] \nonumber \\
&\leq 2K^{\prime}\mathbb{E}^{\frac{1}{2}}\Bigl[(1+\|Y_t\|_{2}^{\max(m,2)}+\|\varepsilon^{X}\|_{2}^{\max(m,2)}+\|Y_0\|_{2}^2)^{2}\Bigr]\mathbb{E}^{\frac{1}{2}}\Bigl[\|p_{X_{0}|Y_{0}}(\cdot|Y_{0})-p_{X_{0}|Y_{0}}^{\mathrm{weak}}(\cdot|Y_{0})\|_{\mathrm{TV}}^{2}\Bigr] \nonumber \\
&\leq M_{t}\mathbb{E}^{\frac{1}{2}}\Bigl[\|p_{X_{0}|Y_{0}}(\cdot|Y_{0})-p_{X_{0}|Y_{0}}^{\mathrm{weak}}(\cdot|Y_{0})\|_{\mathrm{TV}}\Bigr], \label{eq:proposition:delta:2}
\end{align}
where the third inequality is due to the Cauchy--Schwarz inequality, and the last inequality uses Assumption~\ref{assum:data} and the property $\|p\|_{\mathrm{TV}} \le 1$. Here $M_{t}$ is a constant depending on $C$, $R_{\mathcal{X}}$, $M_{\Xi,m}$, $d$, $\sigma_X, \sigma_Y$, and the variance schedule. Substituting~\eqref{eq:proposition:delta:2} into~\eqref{eq:proposition:delta:1} completes the proof.
\end{proof}

\subsection{Proof of Theorem~\ref{thm:error_bound}}
\label{proof:error bound}

Before presenting the main proof of the theorem, we first establish the following lemmas.

\begin{lemma}[KL divergence between Gaussians]\label{lem:kl:gaussian}
Let $\mu_{1},\mu_{2}\in\mathbb{R}^{d}$, and let $\sigma_{1},\sigma_{2}>0$. Then 
\begin{equation*}
D_{\mathrm{KL}}(\mathcal{N}(\mu_1,\sigma_1^2 I_d) \,\|\, \mathcal{N}(\mu_2, \sigma_2^2 I_d)) = \frac{d}{2}\!\left(\frac{\sigma_1^2}{\sigma_2^2} - 1 - \log\frac{\sigma_1^2}{\sigma_2^2}\right) + \frac{\|\mu_1 - \mu_2\|_2^2}{2\sigma_2^2}.
\end{equation*}
\end{lemma}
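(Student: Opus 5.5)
The plan is to compute the divergence directly from the definition, using the explicit Gaussian densities. Write $P=\mathcal{N}(\mu_1,\sigma_1^2 I_d)$ and $Q=\mathcal{N}(\mu_2,\sigma_2^2 I_d)$ with densities $p$ and $q$. Since both measures are equivalent to Lebesgue measure, $D_{\mathrm{KL}}(P\|Q)=\mathbb{E}_{U\sim P}[\log p(U)-\log q(U)]$ is well defined. Writing out the log-densities gives
\begin{equation*}
\log p(u)-\log q(u) = -\frac{d}{2}\log\frac{\sigma_1^2}{\sigma_2^2} - \frac{\|u-\mu_1\|_2^2}{2\sigma_1^2} + \frac{\|u-\mu_2\|_2^2}{2\sigma_2^2},
\end{equation*}
because the normalizing factors $(2\pi)^{-d/2}$ cancel.

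The remaining work is to take expectations under $U\sim P$, term by term.
\begin{itemize}
\item The first term is a constant.
\item For the second term, $\mathbb{E}\|U-\mu_1\|_2^2=d\sigma_1^2$, so it contributes $-d/2$.
\item For the third term, decompose $U-\mu_2=(U-\mu_1)+(\mu_1-\mu_2)$ and expand the square. The cross term has zero mean, which leaves $\mathbb{E}\|U-\mu_2\|_2^2=d\sigma_1^2+\|\mu_1-\mu_2\|_2^2$. Dividing by $2\sigma_2^2$ gives $\frac{d}{2}\frac{\sigma_1^2}{\sigma_2^2}+\frac{\|\mu_1-\mu_2\|_2^2}{2\sigma_2^2}$.
\end{itemize}
Collecting the three contributions yields exactly the stated formula.

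There is no real obstacle here. The only step requiring care is the expansion of the third term, specifically checking that the cross term vanishes and that the trace of the covariance is $d\sigma_1^2$. As a sanity check, the expression is zero when $\mu_1=\mu_2$ and $\sigma_1=\sigma_2$, and it is nonnegative because $x-1-\log x\ge 0$ for $x>0$.
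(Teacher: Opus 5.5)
Your proof is correct: the normalizing constants cancel, and the identities $\mathbb{E}\|U-\mu_1\|_2^2=d\sigma_1^2$ and $\mathbb{E}\|U-\mu_2\|_2^2=d\sigma_1^2+\|\mu_1-\mu_2\|_2^2$ combine into exactly the stated formula. The paper states this lemma without proof as a standard fact, and your direct evaluation of $\mathbb{E}_{P}[\log p-\log q]$ is the standard derivation it implicitly relies on.
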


\begin{lemma}[Gaussian-smoothed KL-divergence bound]\label{lem:smoothed:kl}
Let $U\in\mathbb{R}^d$ be a random variable with a bounded density function $p_U \le M_U$ on $\mathbb{R}^{d}$ and a finite second moment $\mathbb{E}[\|U\|_{2}^2] = d\sigma_U^2$. Let $G \sim \mathcal{N}(0, \sigma_U^2 I_d)$ be an independent Gaussian random variable. Then we have the following properties:
\begin{enumerate}[label=(\roman*)]
\item The KL divergence of $p_{U}$ with respect to $p_{G}$ is finite, i.e., $D_{\mathrm{KL}}(p_{U} \,\|\, p_{G}) < \infty$.
\item For any $\sigma_{V}>0$, the KL divergence of the Gaussian-smoothed distributions satisfies the bound
\begin{equation*}
D_{\mathrm{KL}}\bigl(p_{U}\ast\gamma_{\sigma_{V}^{2}} \,\|\, p_{G}\ast\gamma_{\sigma_{V}^{2}}\bigr) \leq \left( \frac{\sigma_U^2}{\sigma_U^2 + \sigma_V^2} \right) D_{\mathrm{KL}}(p_{U} \,\|\, p_{G}),
\end{equation*}
where $\ast$ denotes the convolution, and $\gamma_{\sigma_{V}^{2}}$ represents the density of $\mathcal{N}(0,\sigma_{V}^{2}I_{d})$.
\end{enumerate}
\end{lemma}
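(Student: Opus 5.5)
The plan is to treat (i) by a direct entropy/cross-entropy split, and (ii) by a heat-flow (de Bruijn-type) argument combined with the Gaussian logarithmic Sobolev inequality, in the spirit of Bakry--\'Emery~\cite{Bakry1985Diffusions}.

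\emph{Part (i).} Write
\[
D_{\mathrm{KL}}(p_U\|p_G)=\int p_U\log p_U\,\mathrm{d}u-\int p_U\log p_G\,\mathrm{d}u .
\]
The first term is at most $\log M_U$, since $\log p_U\le \log M_U$ on $\{p_U>0\}$. For the second term, $-\log p_G(u)=\frac d2\log(2\pi\sigma_U^2)+\|u\|_2^2/(2\sigma_U^2)$. Its expectation under $p_U$ therefore equals $\frac d2\log(2\pi\sigma_U^2)+\frac d2$, using $\mathbb{E}\|U\|_2^2=d\sigma_U^2$. Since the KL divergence is nonnegative, this yields
\[
0\le D_{\mathrm{KL}}(p_U\|p_G)\le \log M_U+\frac d2\log(2\pi e\sigma_U^2)<\infty .
\]

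\emph{Part (ii).} For $s\ge0$ set $p_s\coloneqq p_U\ast\gamma_s$ and $q_s\coloneqq p_G\ast\gamma_s=\mathcal N(0,(\sigma_U^2+s)I_d)$, and let $K(s)\coloneqq D_{\mathrm{KL}}(p_s\|q_s)$. Both densities solve the heat equation $\partial_s f=\frac12\Delta f$. For $s>0$ they are smooth and positive with Gaussian-type tails, so one can differentiate under the integral and integrate by parts. This gives the standard identity
\[
K'(s)=-\tfrac12\int p_s\,\bigl\|\nabla\log(p_s/q_s)\bigr\|_2^2\,\mathrm{d}u=-\tfrac12 I(p_s\|q_s).
\]
Because $q_s$ is Gaussian with variance $\sigma_U^2+s$, it satisfies the log-Sobolev inequality $K(s)\le\frac{\sigma_U^2+s}{2}I(p_s\|q_s)$. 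Combining the two gives the differential inequality $K'(s)\le -K(s)/(\sigma_U^2+s)$. Hence $s\mapsto(\sigma_U^2+s)K(s)$ is nonincreasing on $(0,\infty)$.

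To pass to $s=0$ without proving continuity there, fix $0<\epsilon<\sigma_V^2$. Monotonicity on $[\epsilon,\sigma_V^2]$ gives
\[
K(\sigma_V^2)\le\frac{\sigma_U^2+\epsilon}{\sigma_U^2+\sigma_V^2}\,K(\epsilon).
\]
By the data-processing inequality (convolution with $\gamma_\epsilon$ is a common Markov kernel), $K(\epsilon)\le K(0)=D_{\mathrm{KL}}(p_U\|p_G)$, which is finite by (i). Letting $\epsilon\to0$ yields the claimed factor $\sigma_U^2/(\sigma_U^2+\sigma_V^2)$.

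The main obstacle is justifying the derivative formula for $K$ for $s>0$. This requires the differentiation under the integral and the vanishing of boundary terms in the integration by parts. I would handle it using that $p_s$ is a Gaussian mixture with mixing law having finite second moment. That fact gives polynomial bounds on $\nabla\log p_s$ and $\nabla^2\log p_s$ locally uniformly in $s$, while $\log q_s$ is an explicit quadratic, so dominated convergence applies. If this becomes too technical, a fallback is to first prove the bound for compactly supported $U$ and then argue by truncation, using lower semicontinuity of KL.
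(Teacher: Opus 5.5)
Your proposal is correct and is essentially the paper's argument. Part (i) is the same entropy/cross-entropy split with the same bound, and in part (ii) your monotonicity of $(\sigma_U^2+s)K(s)$ along the heat flow is exactly the paper's Bakry--\'Emery decay $D_{\mathrm{KL}}(p_{U_t}\,\|\,\gamma_{\sigma_U^2})\le e^{-2t}D_{\mathrm{KL}}(p_U\,\|\,p_G)$ for the Ornstein--Uhlenbeck process with stationary law $\mathcal N(0,\sigma_U^2I_d)$, rewritten through the time change $\sigma_U^2+s=e^{2t}\sigma_U^2$ and the rescaling $e^tU_t\sim p_U\ast\gamma_s$; the differences are only that you derive the dissipation inequality yourself from the de Bruijn-type identity and the Gaussian log-Sobolev inequality (hence the regularity work you flag, for which your truncation fallback is sound) where the paper cites it, and that you handle $s\to0$ explicitly by data processing.
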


\begin{proof}
\textbf{Part (i):} We first prove that $D_{\mathrm{KL}}(p_{U} \,\|\, p_{G}) < \infty$. By the definition of KL divergence, we decompose it into negative differential entropy and cross-entropy:
\[
D_{\mathrm{KL}}(U \,\|\, G) = \int_{\mathbb{R}^d} p_U(x) \log p_U(x) \mathrm{d}x - \int_{\mathbb{R}^d} p_U(x) \log p_G(x) \mathrm{d}x.
\]
Since $p_U\le M_U$, the negative differential entropy is strictly bounded above:
\[
\int_{\mathbb{R}^d} p_U(x) \log p_U(x) \mathrm{d}x \leq \int_{\mathbb{R}^d} p_U(x) \log(M_U) \mathrm{d}x = \log M_U.
\]
For the cross-entropy term, substituting the Gaussian density yields
\[
- \int_{\mathbb{R}^d} p_U(x) \log p_G(x) \mathrm{d}x = \frac{d}{2}\log(2\pi\sigma_U^2) + \frac{1}{2\sigma_U^2} \int_{\mathbb{R}^d} \|x\|_{2}^2 p_U(x) \mathrm{d}x = \frac{d}{2}\log(2\pi e \sigma_U^2) < \infty.
\]
Summing these two bounds guarantees $D_{\mathrm{KL}}(p_{U} \,\|\, p_{G}) \le \log M_U + \frac{d}{2} \log(2\pi e \sigma_U^2) < \infty$.

\textbf{Part (ii):} Consider the Ornstein--Uhlenbeck (OU) processes 
\begin{equation}\label{eq:lem:smoothed:kl:0}
\begin{aligned}
\mathrm{d}U_{t} &= -U_{t}\,\mathrm{d}t+\sqrt{2}\sigma_{U}\,\mathrm{d}W_{t}^{U}, \quad U_{0}\sim p_{U}, \\
\mathrm{d}G_{t} &= -G_{t}\,\mathrm{d}t+\sqrt{2}\sigma_{U}\,\mathrm{d}W_{t}^{G}, \quad G_{0}\sim p_{G},
\end{aligned}
\end{equation}
where $W_{t}^{U}$ and $W_{t}^{G}$ are $d$-dimensional independent Wiener processes. The stationary distribution of these OU processes is $\mathcal{N}(0,\sigma_{U}^{2}I_{d})$, which satisfies a log-Sobolev inequality. From~\cite{Bakry1985Diffusions}, $p_{U_{t}}$ converges to the stationary distribution $\mathcal{N}(0,\sigma_{U}^{2}I_{d})$ exponentially:
\begin{equation}\label{eq:lem:smoothed:kl:1}
D_{\mathrm{KL}}\bigl(p_{U_t} \,\|\, \gamma_{\sigma_{U}^{2}}\bigr) \leq e^{-2t} D_{\mathrm{KL}}\bigl(p_{U} \,\|\, \gamma_{\sigma_{U}^{2}}\bigr) = e^{-2t} D_{\mathrm{KL}}(p_{U} \,\|\, p_{G}).
\end{equation}
On the other hand, since $G\sim\mathcal{N}(0,\sigma_{U}^{2}I_{d})$, we have $G_{t}\sim\mathcal{N}(0,\sigma_{U}^{2}I_{d})$ for any $t>0$. Combining this with~\eqref{eq:lem:smoothed:kl:1} yields 
\begin{equation}\label{eq:lem:smoothed:kl:2}
D_{\mathrm{KL}}\bigl(p_{U_t} \,\|\, p_{G_t}\bigr) \leq e^{-2t} D_{\mathrm{KL}}(p_{U} \,\|\, p_{G}).
\end{equation}
Note that the two linear SDEs in~\eqref{eq:lem:smoothed:kl:0} admit explicit solutions
\begin{equation*}
U_t = e^{-t}U+\sqrt{1-e^{-2t}}\sigma_{U}\varepsilon_{U}, \quad G_t = e^{-t}G+\sqrt{1-e^{-2t}}\sigma_{U}\varepsilon_{G},
\end{equation*}
where $(\varepsilon_{U},\varepsilon_{G})\sim \mathcal{N}(0,I_d)\otimes\mathcal{N}(0,I_d)$ is independent of $U$ and $G$. As a result, $e^{t}U_t \sim p_{U}\ast\gamma_{\sigma_{t}^{2}}$ and $e^{t}G_t \sim p_{G}\ast\gamma_{\sigma_{t}^{2}}$, where $\sigma_{t} \coloneqq \sqrt{e^{2t}-1}\sigma_{U}$. Then for any $t>0$, since the KL divergence is invariant under the affine map, we have
\begin{equation}\label{eq:lem:smoothed:kl:3}
D_{\mathrm{KL}}\bigl(p_{U}\ast\gamma_{\sigma_{t}^{2}} \,\|\, p_{G}\ast\gamma_{\sigma_{t}^{2}}\bigr) = D_{\mathrm{KL}}\bigl(p_{e^{t}U_t} \,\|\, p_{e^{t}G_t}\bigr) = D_{\mathrm{KL}}\bigl(p_{U_t} \,\|\, p_{G_t}\bigr).
\end{equation}
By combining~\eqref{eq:lem:smoothed:kl:2} and~\eqref{eq:lem:smoothed:kl:3}, we have $D_{\mathrm{KL}}\bigl(p_{U}\ast\gamma_{\sigma_{t}^{2}} \,\|\, p_{G}\ast\gamma_{\sigma_{t}^{2}}\bigr) \leq e^{-2t} D_{\mathrm{KL}}(p_{U} \,\|\, p_{G})$. Setting $t=\frac{1}{2}\log(\frac{\sigma_U^2 + \sigma_V^2}{\sigma_U^2})$ completes the proof.
\end{proof}

\begin{proof}[Proof of Theorem~\ref{thm:error_bound}]
Recall the forward processes~\eqref{eq:forward:joint} and the normalization~\eqref{eq:normalization}:
\begin{equation}\label{eq:thm:forward}
\begin{aligned}
X_V &= \sqrt{\bar{\alpha}_V} X_0 + \sqrt{1-\bar{\alpha}_V} \varepsilon_{X}, \\
Y_S &= \sqrt{\bar{\alpha}_S} Y_0 + \sqrt{1-\bar{\alpha}_S} \varepsilon_{Y} = \sqrt{\bar{\alpha}_S} \frac{\sigma_X}{\sigma_Y} X_0 + \sqrt{\bar{\alpha}_S} \frac{1}{\sigma_Y}\Xi + \sqrt{1-\bar{\alpha}_S} \varepsilon_{Y},
\end{aligned}
\end{equation}
where $\varepsilon_X,\varepsilon_Y \sim \mathcal{N}(0, I_d)$. The proof is divided into two steps.

\noindent\textbf{Step 1. Perturbation error estimate.}
Since $X_0$ and $Y_{S}$ are conditionally independent given $Y_0$, for any $(x_{0},y_{S})\in\mathcal{X}\times\mathbb{R}^{d}$, we have
\begin{equation*}
p_{X_0|Y_S}(x_0|y_S) = \int p_{X_0|Y_0}(x_0|y_0^{\prime}) \, p_{Y_0|Y_S}(y_0^{\prime}|y_S) \,\mathrm{d}y_0^{\prime} = \mathbb{E}_{Y_0^{\prime}} \left[ p_{X_0|Y_0}(x_0|Y_0^{\prime}) \mid Y_{S}=y_{S}\right].
\end{equation*}
As a result, using the law of total expectation and Jensen's inequality, we obtain
\begin{align*}
\mathcal{E}_{\mathrm{pert}}
&\coloneqq \mathbb{E}_{Y_0, Y_S} \left[ \|p_{X_0|Y_S}(\cdot|Y_S) - p_{X_0|Y_0}(\cdot|Y_0)\|_{\mathrm{TV}} \right] \\
&\leq \mathbb{E}_{Y_S}\bigl[ \mathbb{E}_{Y_{0},Y_{0}^{\prime}}\bigl[\| p_{X_0|Y_0}(\cdot|Y_0^{\prime}) - p_{X_0|Y_0}(\cdot|Y_0) \|_{\mathrm{TV}} \mid Y_{S}\bigr]\bigr],
\end{align*}
where $Y_0$ and $Y_0'$ are conditionally independent and identically distributed given $Y_S$. Further, under Assumption~\ref{assum:compactness}, we have
\begin{equation}\label{eq:pert:1}
\mathcal{E}_{\mathrm{pert}}
\leq L_{\mathrm{post}}\mathbb{E}_{Y_S}\bigl[\mathbb{E}_{Y_{0},Y_{0}^{\prime}}\bigl[\|Y_0^{\prime}-Y_0\|_{2} \mid Y_{S}\bigr]\bigr],
\end{equation}
where $Y_0, Y_0'$ are conditionally independent and identically distributed given $Y_S$. Then
\begin{align}
\mathbb{E}_{Y_{0},Y_{0}^{\prime}}\bigl[\|Y_0^{\prime}-Y_0\|_{2}^{2} \mid Y_{S}\bigr]
&=\mathbb{E}_{Y_{0},Y_{0}^{\prime}}\bigl[\|Y_0^{\prime}-\mathbb{E}[Y_0^{\prime}\mid Y_{S}]+\mathbb{E}[Y_0\mid Y_{S}]-Y_0\|_{2}^{2} \mid Y_{S}\bigr] \nonumber \\
&=\mathbb{E}_{Y_{0}^{\prime}}\bigl[\|Y_0^{\prime}-\mathbb{E}[Y_0^{\prime}\mid Y_{S}]\|_{2}^{2} \mid Y_{S}\bigr]+\mathbb{E}_{Y_{0}}\bigl[\|\mathbb{E}[Y_0\mid Y_{S}]-Y_0\|_{2}^{2} \mid Y_{S}\bigr] \nonumber \\
&=2\mathbb{E}_{Y_{0}}\bigl[\|\mathbb{E}[Y_0\mid Y_{S}]-Y_0\|_{2}^{2} \mid Y_{S}\bigr]. \label{eq:pert:2}
\end{align}
For any $z\in\mathbb{R}^{d}$, we have
\begin{align*}
\mathbb{E}_{Y_{0}}\bigl[\|z-Y_0\|_{2}^{2} \mid Y_{S}\bigr]
&=\mathbb{E}_{Y_{0}}\bigl[\|z-\mathbb{E}[Y_0\mid Y_{S}]+\mathbb{E}[Y_0\mid Y_{S}]-Y_0\|_{2}^{2} \mid Y_{S}\bigr] \\
&=\mathbb{E}_{Y_{0}}\bigl[\|z-\mathbb{E}[Y_0\mid Y_{S}]\|_{2}^{2} \mid Y_{S}\bigr]+\mathbb{E}_{Y_{0}}\bigl[\|\mathbb{E}[Y_0\mid Y_{S}]-Y_0\|_{2}^{2} \mid Y_{S}\bigr] \\
&\quad +2\mathbb{E}_{Y_{0}}\bigl[\langle z-\mathbb{E}[Y_0\mid Y_{S}],\mathbb{E}[Y_0\mid Y_{S}]-Y_0\rangle \mid Y_{S}\bigr] \\
&=\mathbb{E}_{Y_{0}}\bigl[\|z-\mathbb{E}[Y_0\mid Y_{S}]\|_{2}^{2} \mid Y_{S}\bigr]+\mathbb{E}_{Y_{0}}\bigl[\|\mathbb{E}[Y_0\mid Y_{S}]-Y_0\|_{2}^{2} \mid Y_{S}\bigr] \\
&\geq \mathbb{E}_{Y_{0}}\bigl[\|\mathbb{E}[Y_0\mid Y_{S}]-Y_0\|_{2}^{2} \mid Y_{S}\bigr].
\end{align*}
Setting $z=\frac{1}{\sqrt{\bar{\alpha}_S}} Y_S$ in this inequality and using~\eqref{eq:thm:forward}, we have  
\begin{equation}\label{eq:pert:3}
\mathbb{E}_{Y_{0}}\bigl[\|\mathbb{E}[Y_0\mid Y_{S}]-Y_0\|_{2}^{2} \mid Y_{S}\bigr] \leq \mathbb{E}_{Y_{0}}\Bigl[\|\sqrt{\frac{1-\bar{\alpha}_S}{\bar{\alpha}_S}} \varepsilon_{Y}\|_{2}^{2} \,\Big|\, Y_{S}\Bigr],
\end{equation}
where $\varepsilon_{Y}\sim\mathcal{N}(0,I_{d})$ is independent of $Y_{0}$. Combining~\eqref{eq:pert:2} and~\eqref{eq:pert:3} and taking the expectation with respect to $Y_{S}$ yields
\begin{equation}\label{eq:pert:4}
\mathbb{E}_{Y_S}\bigl[\mathbb{E}_{Y_{0},Y_{0}^{\prime}}\bigl[\|Y_0^{\prime}-Y_0\|_{2}^{2} \mid Y_{S}\bigr]\bigr] \leq 2\mathbb{E}_{\varepsilon_{Y}\sim\mathcal{N}(0,I_{d})}\Bigl[\|\sqrt{\frac{1-\bar{\alpha}_S}{\bar{\alpha}_S}} \varepsilon_{Y}\|_{2}^{2}\Bigr] = 2d\frac{1-\bar{\alpha}_S}{\bar{\alpha}_S}.
\end{equation}
Substituting~\eqref{eq:pert:4} into~\eqref{eq:pert:1} and using Jensen's inequality, we obtain 
\begin{equation}\label{eq:pert:5}
\mathcal{E}_{\mathrm{pert}}
\leq L_{\mathrm{post}}\sqrt{\mathbb{E}_{Y_S}\bigl[\mathbb{E}_{Y_{0},Y_{0}^{\prime}}\bigl[\|Y_0^{\prime}-Y_0\|_{2}^{2} \mid Y_{S}\bigr]\bigr]} \leq L_{\mathrm{post}}\sqrt{2d\frac{1-\bar{\alpha}_S}{\bar{\alpha}_S}}.
\end{equation}

\noindent\textbf{Step 2. Alignment error estimate.}
Applying the triangle inequality, the alignment error can be decomposed as
\begin{align*}
\mathcal{E}_{\mathrm{align}}
&\coloneqq \mathbb{E}_{Y_S} \left[ \|p_{X_0|X_V}(\cdot|Y_S) - p_{X_0|Y_S}(\cdot|Y_S)\|_{\mathrm{TV}} \right] \\
&= \frac{1}{2}\iint |p_{X_0|X_V}(x_{0}|y_{S})-p_{X_0|Y_S}(x_{0}|y_{S})| \, p_{Y_{S}}(y_{S}) \,\mathrm{d}x_{0}\,\mathrm{d}y_{S} \\
&\leq \frac{1}{2}\iint p_{X_0|X_V}(x_{0}|y_{S}) |p_{Y_{S}}(y_{S})-p_{X_{V}}(y_{S})| \,\mathrm{d}x_{0}\,\mathrm{d}y_{S} \\
&\quad +\frac{1}{2}\iint |p_{X_0|X_V}(x_{0}|y_{S})p_{X_{V}}(y_{S})-p_{X_0|Y_S}(x_{0}|y_{S})p_{Y_{S}}(y_{S})| \,\mathrm{d}x_{0}\,\mathrm{d}y_{S} \\
&= \|p_{Y_{S}}-p_{X_{V}}\|_{\mathrm{TV}}+\mathbb{E}_{X_{0}}\bigl[\|p_{X_V|X_0}(\cdot|X_{0})-p_{Y_S|X_{0}}(\cdot|X_{0})\|_{\mathrm{TV}}\bigr],
\end{align*}
where the last equality is due to Bayes' rule. For the first summand, it follows from Jensen's inequality that 
\begin{align*}
\|p_{Y_{S}}-p_{X_{V}}\|_{\mathrm{TV}}
&=\|\int p_{Y_{S}|X_{0}}(\cdot|x_{0})p_{X_{0}}(x_{0})\,\mathrm{d}x_{0}-\int p_{X_{V}|X_{0}}(\cdot|x_{0})p_{X_{0}}(x_{0})\,\mathrm{d}x_{0}\|_{\mathrm{TV}} \\
&\leq \mathbb{E}_{X_{0}}\bigl[\|p_{X_V|X_0}(\cdot|X_{0})-p_{Y_S|X_{0}}(\cdot|X_{0})\|_{\mathrm{TV}}\bigr].
\end{align*}
As a consequence, we have 
\begin{equation}\label{eq:align:1}
\mathcal{E}_{\mathrm{align}} \leq 2\mathbb{E}_{X_{0}}\bigl[\|p_{X_V|X_0}(\cdot|X_{0})-p_{Y_S|X_{0}}(\cdot|X_{0})\|_{\mathrm{TV}}\bigr].
\end{equation}
From~\eqref{eq:thm:forward}, we have 
\begin{align*}
(X_{V} \mid X_{0}=x_{0}) &\stackrel{\mathrm{d}}{=} \sqrt{\bar{\alpha}_V} x_0 + \sqrt{1-\bar{\alpha}_V} \varepsilon_{X}, \\
(Y_S \mid X_{0}=x_{0}) &\stackrel{\mathrm{d}}{=} \sqrt{\bar{\alpha}_S} \frac{\sigma_X}{\sigma_Y} x_0 + \sqrt{\bar{\alpha}_S} \frac{1}{\sigma_Y}\Xi + \sqrt{1-\bar{\alpha}_S} \varepsilon_{Y}.
\end{align*}
Then we construct two auxiliary random variables 
\begin{align*}
(E \mid X_{0}=x_{0}) &\stackrel{\mathrm{d}}{=} \sqrt{\bar{\alpha}_S} \frac{\sigma_X}{\sigma_Y} x_0 + \sqrt{1-\bar{\alpha}_V} \varepsilon_{X}, \\
(F \mid X_{0}=x_{0}) &\stackrel{\mathrm{d}}{=} \sqrt{\bar{\alpha}_S} \frac{\sigma_X}{\sigma_Y} x_0 + \sqrt{\bar{\alpha}_S} \frac{1}{\sigma_Y}\varepsilon_{\Xi} + \sqrt{1-\bar{\alpha}_S} \varepsilon_{Y}, 
\end{align*}
where $\varepsilon_{\Xi}\sim\mathcal{N}(0,\sigma_{\Xi}^{2}I_{d})$. Applying the triangle inequality gives, for any $x_{0}\in\mathcal{X}$,
\begin{align}
&\|p_{X_V|X_0}(\cdot|x_{0})-p_{Y_S|X_{0}}(\cdot|x_{0})\|_{\mathrm{TV}} \nonumber \\
&\leq \underbrace{\|p_{X_V|X_0}(\cdot|x_{0})-p_{E|X_0}(\cdot|x_{0})\|_{\mathrm{TV}}}_{\text{mean shift}}+\underbrace{\|p_{E|X_0}(\cdot|x_{0})-p_{F|X_0}(\cdot|x_{0})\|_{\mathrm{TV}}}_{\text{variance mismatch}} \nonumber \\
&\quad +\underbrace{\|p_{F|X_0}(\cdot|x_{0})-p_{Y_S|X_{0}}(\cdot|x_{0})\|_{\mathrm{TV}}}_{\text{error of Gaussian approximation}}, \label{eq:align:2}
\end{align}
For the first summand in~\eqref{eq:align:2}, using Pinsker's inequality and Lemma~\ref{lem:kl:gaussian}, we have 
\begin{align}
\|p_{X_V|X_0}(\cdot|x_{0})-p_{E|X_0}(\cdot|x_{0})\|_{\mathrm{TV}} 
&\leq \sqrt{\frac{1}{2}D_{\mathrm{KL}}(p_{X_V|X_0}(\cdot|x_{0}) \| p_{E|X_0}(\cdot|x_{0}))} \nonumber \\
&\leq \frac{|\sigma_Y \sqrt{\bar{\alpha}_V} - \sigma_X \sqrt{\bar{\alpha}_S}| \|x_{0}\|_2}{2\sigma_Y \sqrt{1-\bar{\alpha}_V}}. \label{eq:align:3}
\end{align}
For the second summand in~\eqref{eq:align:2}, Pinsker's inequality and Lemma~\ref{lem:kl:gaussian} imply 
\begin{align}
\|p_{E|X_0}(\cdot|x_{0})-p_{F|X_0}(\cdot|x_{0})\|_{\mathrm{TV}}
&\leq \sqrt{\frac{1}{2} D_{\mathrm{KL}}(p_{E|X_0}(\cdot|x_{0}) \,\|\, p_{F|X_0}(\cdot|x_{0}))} \nonumber \\
&\leq  \sqrt{\frac{d}{2}} \frac{|\sigma_X^2\bar{\alpha}_S -\sigma_Y^2\bar{\alpha}_V|}{\sigma_Y \sqrt{1-\bar{\alpha}_V} \sqrt{\sigma_Y^2 - \bar{\alpha}_S \sigma_X^2}} \nonumber \\
&\leq  \sqrt{\frac{d}{2}} \frac{(\sigma_{X}+\sigma_{Y})|\sigma_X\sqrt{\bar{\alpha}_S}-\sigma_Y\sqrt{\bar{\alpha}_V}|}{\sigma_Y \sqrt{1-\bar{\alpha}_V} \sqrt{\sigma_Y^2 - \bar{\alpha}_S \sigma_X^2}} \nonumber \\
&\leq \sqrt{\frac{2d}{\sigma_Y^2 - \bar{\alpha}_S \sigma_X^2}}\frac{|\sigma_X\sqrt{\bar{\alpha}_S}-\sigma_Y\sqrt{\bar{\alpha}_V}|}{\sqrt{1-\bar{\alpha}_V}}, \label{eq:align:4}
\end{align}
For the third summand in~\eqref{eq:align:2}, it follows from Lemma~\ref{lem:smoothed:kl} that 
\begin{align}
\|p_{F|X_0}(\cdot|x_{0})-p_{Y_S|X_{0}}(\cdot|x_{0})\|_{\mathrm{TV}}
&\leq \sqrt{\frac{1}{2} D_{\mathrm{KL}}( p_{Y_S|X_{0}}(\cdot|x_{0})\,\|\, p_{F|X_0}(\cdot|x_{0}))} \nonumber \\
&\leq \sqrt{\frac{\bar{\alpha}_S \sigma_{\Xi}^2}{2(\sigma_Y^2 - \bar{\alpha}_S \sigma_X^2)}} \sqrt{D_{\mathrm{KL}}(p_{\Xi} \,\|\, \mathcal{N}(0, \sigma_{\Xi}^2 I_d))}. \label{eq:align:5}
\end{align}
Substituting~\eqref{eq:align:3},~\eqref{eq:align:4} and~\eqref{eq:align:5} into~\eqref{eq:align:2} yields, for any $x_{0}\in\mathcal{X}$, 
\begin{align*}
&\|p_{X_V|X_0}(\cdot|x_{0})-p_{Y_S|X_{0}}(\cdot|x_{0})\|_{\mathrm{TV}} \\
&\leq \Biggl(\frac{R_{\mathcal{X}}}{2\sigma_Y}+\sqrt{\frac{2d}{\sigma_Y^2 - \bar{\alpha}_S \sigma_X^2}}\Biggr)\frac{|\sigma_Y \sqrt{\bar{\alpha}_V} - \sigma_X \sqrt{\bar{\alpha}_S}|}{\sqrt{1-\bar{\alpha}_V}}  \\
&\quad +\sqrt{\frac{\bar{\alpha}_S \sigma_{\Xi}^2}{2(\sigma_Y^2 - \bar{\alpha}_S \sigma_X^2)}} \sqrt{D_{\mathrm{KL}}(p_{\Xi} \,\|\, \mathcal{N}(0, \sigma_{\Xi}^2 I_d))}.
\end{align*}
Substituting this into~\eqref{eq:align:1} implies
\begin{align}
\mathcal{E}_{\mathrm{align}}
&\leq \Biggl(\frac{R_{\mathcal{X}}}{\sigma_Y}+\sqrt{\frac{8d}{\sigma_Y^2 - \bar{\alpha}_S \sigma_X^2}}\Biggr)\frac{|\sigma_Y \sqrt{\bar{\alpha}_V} - \sigma_X \sqrt{\bar{\alpha}_S}|}{\sqrt{1-\bar{\alpha}_V}} \nonumber \\
&\quad +\sqrt{\frac{2\bar{\alpha}_S \sigma_{\Xi}^2}{\sigma_Y^2 - \bar{\alpha}_S \sigma_X^2}} \sqrt{D_{\mathrm{KL}}(p_{\Xi} \,\|\, \mathcal{N}(0, \sigma_{\Xi}^2 I_d))}. \label{eq:align:6}
\end{align}
Finally, combining~\eqref{eq:error:decomp},~\eqref{eq:pert:5} and~\eqref{eq:align:6} completes the proof.
\end{proof}

\subsection{Proof of Corollary~\ref{cor:conditional_generative_error}}
\label{proof:error:condition}

\begin{proof}[Proof of Corollary~\ref{cor:conditional_generative_error}]
By the same argument used to obtain~\eqref{eq:align:1}, we have
\begin{equation*}
\mathbb{E}_{X_{0}}\bigl[\|p_{Y_0|X_0}(\cdot|X_0)-p_{Y_0|X_0}^{\mathrm{weak}}(\cdot|X_0)\|_{\mathrm{TV}}\bigr] \leq 2\mathbb{E}_{Y_{0}}\bigl[\|p_{X_0|Y_0}(\cdot|Y_0)-p_{X_0|Y_0}^{\mathrm{weak}}(\cdot|Y_0)\|_{\mathrm{TV}}\bigr].
\end{equation*}
Combining this with Theorem~\ref{thm:error_bound} completes the proof.
\end{proof}

\subsection{Extension to signal-dependent noise}
\label{supp:sec:conditional-analysis}

The analysis in Section~\ref{sec:analysis} assumes that the additive noise is
independent of the clean signal. Here we summarize how the same argument
extends to signal-dependent noise. Consider the normalized observation model
\begin{equation}
    Y_0=\frac{\sigma_X}{\sigma_Y}X_0
    +\frac{1}{\sigma_Y}\Xi(X_0),
    \label{eq:conditional-noise-model}
\end{equation}
where the conditional law of \(\Xi(X_0)\) may depend on \(X_0\). We assume
that \(X_0\) is supported on the compact set \(\mathcal X\), and, for every
\(x\in\mathcal X\),
\[
    \mathbb E[\Xi(X_0)\mid X_0=x]=0,
    \qquad
    \operatorname{Cov}(\Xi(X_0)\mid X_0=x)=\Sigma_x.
\]
The conditional densities are uniformly bounded, the traces
\(\operatorname{Tr}(\Sigma_x)\) and the moments required by
Assumption~\ref{assum:network} are uniformly bounded, and the conditional
noise field is independent of the Gaussian diffusion noises. Define
\[
    \sigma_\Xi^2
    \coloneqq \frac{1}{d}\mathbb E_{X_0}[\operatorname{Tr}(\Sigma_{X_0})].
\]
Conditional centering then gives the averaged variance identity
\(\sigma_X^2+\sigma_\Xi^2=\sigma_Y^2\).

Let
\[
    \mathfrak D_{\Xi\mid X}
    \coloneqq
    \mathbb E_{X_0}\!\left[
      \sqrt{D_{\mathrm{KL}}\!\left(
      p_{\Xi\mid X_0}(\cdot\mid X_0)
      \,\big\|\,
      \mathcal N(0,\sigma_\Xi^2 I_d)
      \right)}
    \right].
\]
This quantity captures anisotropy, spatially varying conditional variance,
and higher-order non-Gaussian structure through a single trace-matched
Gaussian reference. Assume additionally that the posterior map is
\(L_{\mathrm{post}}\)-Lipschitz in total variation, as in
Assumption~\ref{assum:compactness}.

\begin{proposition}[Conditional-noise extension]
\label{prop:conditional-noise-extension}
Under the conditions above and Assumption~\ref{assum:network}, the loss gap is
controlled by the posterior mismatch exactly as in
Proposition~\ref{proposition:delta}. If the diffusion times satisfy the
trace-averaged alignment condition
\[
    \bar\alpha_V\sigma_Y^2=\bar\alpha_S\sigma_X^2,
\]
then
\begin{equation}
\begin{split}
    \Delta_p
    \le{}&
    L_{\mathrm{post}}
    \sqrt{\frac{2d(1-\bar\alpha_S)}{\bar\alpha_S}}
    \\
    &+\sqrt{2}
    \sqrt{\frac{\bar\alpha_S\sigma_\Xi^2}
    {\sigma_Y^2-\bar\alpha_S\sigma_X^2}}
    \,\mathfrak D_{\Xi\mid X}.
\end{split}
\label{eq:conditional-noise-bound}
\end{equation}
Consequently,
\[
    \Delta
    \le
    \sum_{t=1}^T\frac{M_t}{\sigma_t^2}\sqrt{\Delta_p},
\]
for finite constants \(M_t\) depending on the diffusion schedule, the network
growth bound, and the stated moment bounds.
\end{proposition}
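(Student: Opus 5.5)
The proof follows the proof of Theorem~\ref{thm:error_bound}, with two changes: the noise reference is made conditional on $x$, and the mean-shift and variance-mismatch terms vanish under the alignment condition.

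\textbf{Loss gap.} The first claim, that $\Delta$ is controlled by the posterior mismatch, follows verbatim from the proof of Proposition~\ref{proposition:delta}. That argument uses only four ingredients:
\begin{itemize}
\item compactness of $\mathcal X$;
\item the polynomial growth in Assumption~\ref{assum:network};
\item finiteness of the relevant moments of $Y_0$, which hold here by the uniform moment bounds on $\Xi(X_0)$;
\item conditional independence of $Y_t$ and $X_0^{\mathrm{weak}}$ given $Y_0$.
\end{itemize}
None of these uses independence of $\Xi$ from $X_0$. Hence $\Delta\le\sum_t \frac{M_t}{\sigma_t^2}\sqrt{\Delta_p}$.

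\textbf{Decomposition of $\Delta_p$.} Apply~\eqref{eq:error:decomp} to split $\Delta_p$ into perturbation and alignment errors.

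\textbf{Perturbation error.} Step~1 of the proof of Theorem~\ref{thm:error_bound} carries over unchanged. It uses only three facts: $X_0\perp Y_S\mid Y_0$, the Lipschitz posterior assumption, and the bias-variance inequality with $z=Y_S/\sqrt{\bar\alpha_S}$. None depends on the noise model, so it again gives $L_{\mathrm{post}}\sqrt{2d(1-\bar\alpha_S)/\bar\alpha_S}$.

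\textbf{Alignment error.} The reduction~\eqref{eq:align:1} holds as before, since it is a pure Bayes/Jensen argument:
\[
\mathcal E_{\mathrm{align}}\le 2\,\mathbb E_{X_0}\|p_{X_V|X_0}-p_{Y_S|X_0}\|_{\mathrm{TV}}.
\]
For fixed $x$, define
\[
F \stackrel{\mathrm d}{=} \sqrt{\bar\alpha_S}\tfrac{\sigma_X}{\sigma_Y}x+\sqrt{\bar\alpha_S}\tfrac{1}{\sigma_Y}\varepsilon_\Xi+\sqrt{1-\bar\alpha_S}\,\varepsilon_Y,\qquad \varepsilon_\Xi\sim\mathcal N(0,\sigma_\Xi^2 I_d).
\]
Then $F$ is Gaussian with mean $\sqrt{\bar\alpha_S}\sigma_X x/\sigma_Y$ and covariance $\bigl(1-\bar\alpha_S+\bar\alpha_S\sigma_\Xi^2/\sigma_Y^2\bigr)I_d$. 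The averaged identity $\sigma_Y^2=\sigma_X^2+\sigma_\Xi^2$ rewrites this covariance as $\bigl(1-\bar\alpha_S\sigma_X^2/\sigma_Y^2\bigr)I_d$.

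Under $\bar\alpha_V\sigma_Y^2=\bar\alpha_S\sigma_X^2$, the law of $X_V\mid X_0=x$ has the same mean and the same covariance $(1-\bar\alpha_V)I_d$. Hence $p_{X_V|X_0}(\cdot|x)=p_{F|X_0}(\cdot|x)$ exactly, and the mean-shift and variance-mismatch terms of~\eqref{eq:align:2} drop out.

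The remaining term is $\|p_{F|X_0}(\cdot|x)-p_{Y_S|X_0}(\cdot|x)\|_{\mathrm{TV}}$. I would handle it by applying Pinsker's inequality and Lemma~\ref{lem:smoothed:kl}(ii) to $U=\sqrt{\bar\alpha_S}\,\Xi(x)/\sigma_Y$ smoothed by $\mathcal N(0,(1-\bar\alpha_S)I_d)$. The law of $\Xi(x)$ is now the conditional law $p_{\Xi|X_0}(\cdot|x)$, and KL is invariant under the common shift and the scaling. The contraction factor is
\[
\frac{\bar\alpha_S\sigma_\Xi^2/\sigma_Y^2}{\bar\alpha_S\sigma_\Xi^2/\sigma_Y^2+1-\bar\alpha_S}=\frac{\bar\alpha_S\sigma_\Xi^2}{\sigma_Y^2-\bar\alpha_S\sigma_X^2}.
\]
Taking $\mathbb E_{X_0}$ of the square root and multiplying by the factor $2$ from~\eqref{eq:align:1} and $1/\sqrt2$ from Pinsker gives exactly $\sqrt2\sqrt{\bar\alpha_S\sigma_\Xi^2/(\sigma_Y^2-\bar\alpha_S\sigma_X^2)}\,\mathfrak D_{\Xi|X}$.

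\textbf{Main obstacle: the smoothing lemma with a mismatched reference.} Lemma~\ref{lem:smoothed:kl} is stated with the reference Gaussian variance equal to the second moment of $U$. Here the reference variance is the averaged $\sigma_\Xi^2$, whereas the conditional second moment is $\operatorname{Tr}(\Sigma_x)/d$, which may differ.

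I would first re-run the Ornstein--Uhlenbeck argument with the stationary law $\mathcal N(0,\sigma_\Xi^2\bar\alpha_S/\sigma_Y^2\, I_d)$. The log-Sobolev contraction $D_{\mathrm{KL}}(p_{U_t}\|\gamma)\le e^{-2t}D_{\mathrm{KL}}(p_U\|\gamma)$ holds for any initial law with finite KL relative to the stationary law; it never uses moment matching. The moment condition enters only in part~(i), to guarantee finite KL, and here that is supplied by the bounded densities and bounded traces.

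The contraction factor therefore depends only on the reference variance, so the same constant applies pointwise in $x$. Together with the two reductions above, this completes the plan.
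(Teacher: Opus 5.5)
Your proposal is correct and follows the same route as the paper's proof sketch. You reuse Proposition~\ref{proposition:delta} and Step~1 of Theorem~\ref{thm:error_bound} unchanged, compare $X_V\mid X_0=x$ and $Y_S\mid X_0=x$ through the trace-matched Gaussian $F$ (which coincides with $X_V\mid X_0=x$ under $\bar\alpha_V\sigma_Y^2=\bar\alpha_S\sigma_X^2$), and are left with only the non-Gaussianity term. Your remark that the Ornstein--Uhlenbeck contraction in Lemma~\ref{lem:smoothed:kl}(ii) depends only on the reference variance correctly fills the one detail the sketch leaves implicit: the conditional second moment $\operatorname{Tr}(\Sigma_x)/d$ need not equal $\sigma_\Xi^2$.
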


\begin{proof}[Proof sketch]
Conditioning on \(X_0=x\), compare the forward laws of \(X_V\) and \(Y_S\)
with a Gaussian having the global trace-matched variance. Pinsker's inequality
splits their total-variation discrepancy into mean, variance, and conditional
non-Gaussianity terms. The alignment relation cancels the first two terms,
leaving the second term in~\eqref{eq:conditional-noise-bound}. The posterior
Lipschitz property bounds the information loss incurred by diffusing \(Y_0\)
to \(Y_S\), which gives the first term. The loss-gap estimate then follows by
the same conditional-expectation and Cauchy--Schwarz argument used in
Proposition~\ref{proposition:delta}.
\end{proof}

For completeness, the posterior stability assumption has a simple sufficient
condition in the independent-noise special case. If \(p_\Xi>0\),
\(\log p_\Xi\in C^2(\mathbb R^d)\), and
\(\sup_\xi\|\nabla^2\log p_\Xi(\xi)\|_{\mathrm{op}}\le H_\Xi\), then
\begin{equation}
    \left\|p_{X_0\mid Y_0}(\cdot\mid y_1)
    -p_{X_0\mid Y_0}(\cdot\mid y_2)\right\|_{\mathrm{TV}}
    \le
    \frac{\sigma_X\sigma_Y H_\Xi}{4}
    \operatorname{diam}(\mathcal X)\,\|y_1-y_2\|_2.
    \label{eq:posterior-stability-sufficient}
\end{equation}
For Gaussian noise \(\mathcal N(0,\Sigma)\), one may take
\(H_\Xi=\|\Sigma^{-1}\|_{\mathrm{op}}\).

\section{Additional Experiments}\label{supp:sec:extra-experiments}

\subsection{Independence Test}
\label{supp:subsec:independence-test}
\begin{figure}[t]
    \centering
    \includegraphics[width=1\textwidth, keepaspectratio]{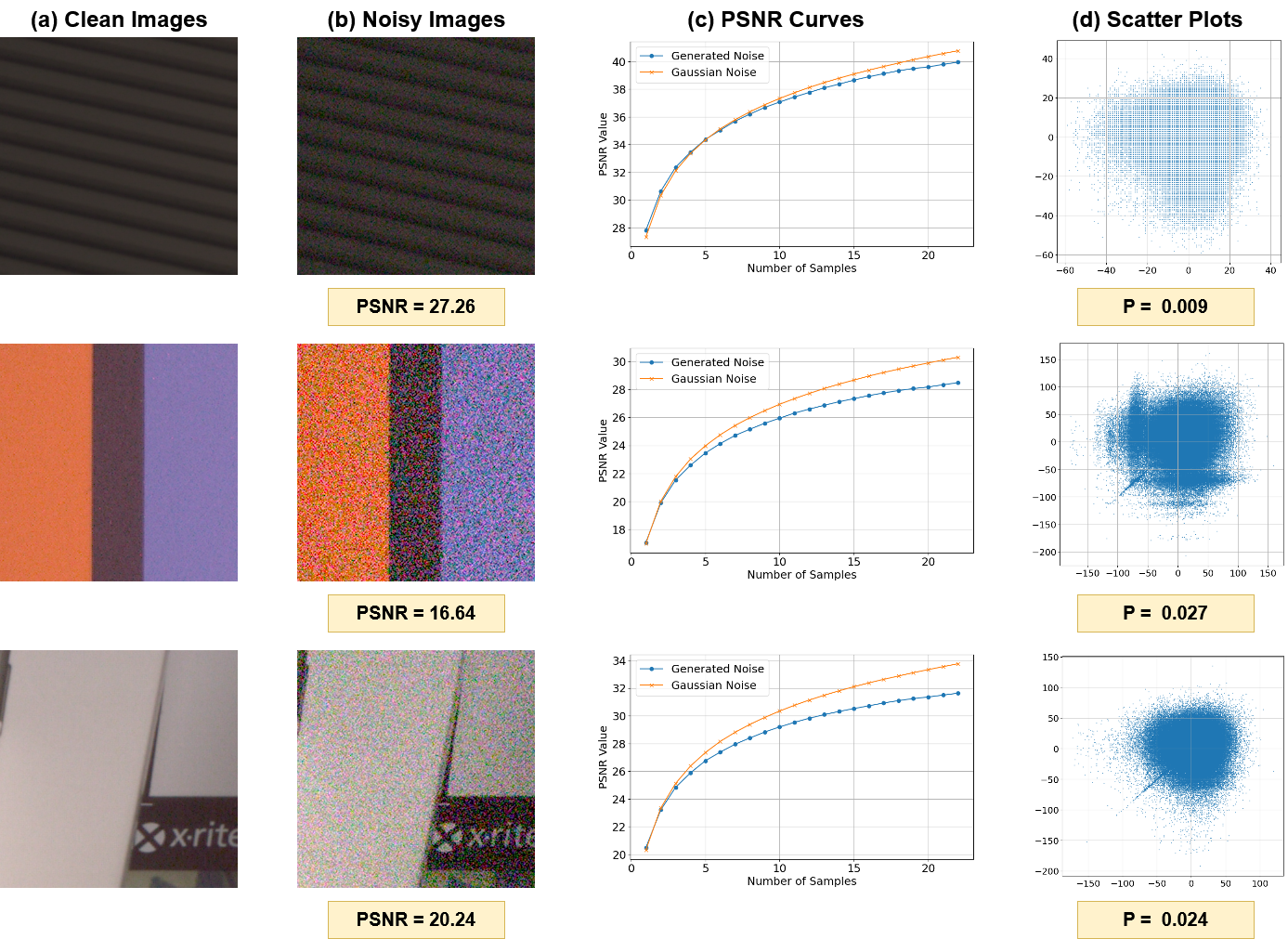}
    \caption{Independence test of the generated noise. (c) shows the change in PSNR of the averaged image as the number of noise samples increases. The curve closely follows the PSNR variation of simulated independent Gaussian noise, asymptotically approaching a theoretical logarithmic curve. (d) presents a scatter plot of the noise distribution between two generated noise images. The approximately circular shape of the distribution indicates that the two noise images exhibit weak correlation. The displayed \(P\) values in (d) denote Pearson correlation coefficients.}
    \label{supp:fig:independence-test}
\end{figure}
To verify that our model can generate random noise rather than learning a fixed noise pattern for each image, we generate multiple noise samples for the same image on the SIDD validation set and analyze the independence among these noise samples. First, according to the law of large numbers, if the noise is independent, the average of multiple noise images should converge to the clean image. We compute the change in PSNR between the averaged image and the clean image as the number of noise samples used for averaging increases. For comparison, we also average independent Gaussian noise added to the clean image. The results are shown in Figure~\ref{supp:fig:independence-test}(c). It can be observed that the PSNR curve of the averaged generated noise closely aligns with that of the independent Gaussian noise, indicating strong independence in the generated noise.

Secondly, we randomly select two generated noise images and plot a scatter diagram of their pixel values, as shown in Figure~\ref{supp:fig:independence-test}(d). If the noise were highly correlated, the scatter plot would exhibit a distinct diagonal distribution; whereas if the noise is independent, the scatter plot should approximate a circular distribution. As shown in the figure, the scatter plot is nearly circular, with a few strongly correlated regions observed in high-noise images. This may be attributed to the increased difficulty in distinguishing between signal and noise under high-intensity noise conditions, leading the model to misinterpret certain noise as signal and thereby introducing some correlation. We also calculate the Pearson correlation coefficient between these two noise samples, which is consistently below 0.05, further confirming that the generated noise samples are independent of each other.

\subsection{Training and Sampling Hyperparameters}
\label{supp:subsec:hyperparameter-selection}
\begin{figure}[t]
    \centering
    \includegraphics[width=1\textwidth, keepaspectratio]{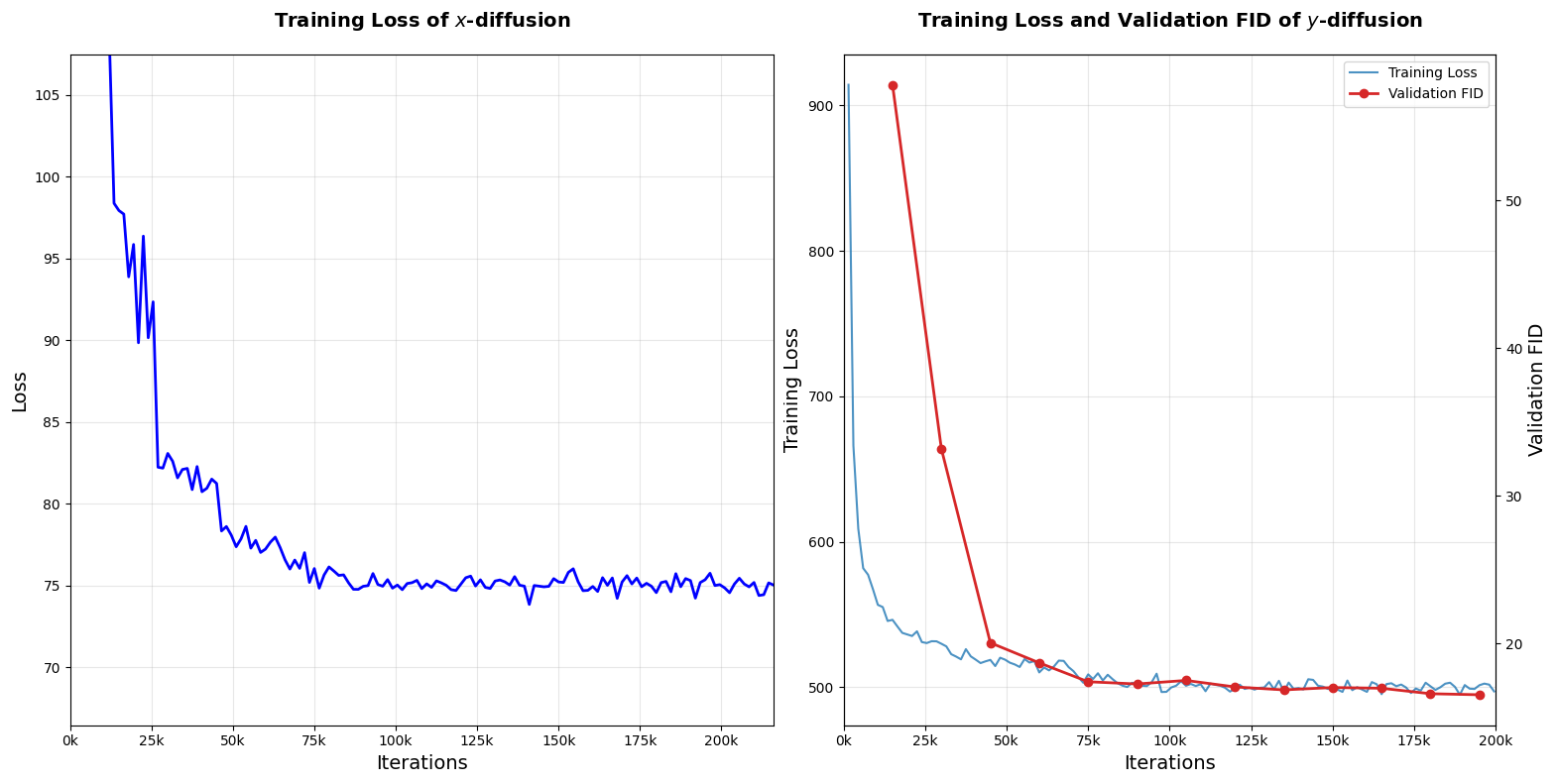}
    \caption{Curves of loss and FID versus the number of iteration steps. (a) shows the decreasing loss curve for the diffusion network regarding $x$; (b) shows the loss and FID variation curves for the conditional diffusion network regarding $y$.}
    \label{supp:fig:step-selection}
\end{figure}
To select the optimal number of iteration steps for training the two diffusion networks, we recorded the variation curves of their losses with increasing iteration steps on the WUP-SIDD data. For the conditional diffusion network, we also tracked the FID variation curve for validation set sampling at every 50 steps. The results are shown in Figure~\ref{supp:fig:step-selection}. It can be observed that the losses of both models stop decreasing after 100k iterations, and the FID on the validation set for the conditional diffusion model no longer shows significant improvement. Therefore, 100k iteration steps already provide a stable validation trend in this setting. In the main experiments, we use task-dependent training budgets: 100k iterations for simulated image noise and a more conservative 300k iterations for SIDD, as reported in the main text.

To determine the optimal sampling steps and the random term $\eta$, we evaluated the generation quality on the SIDD validation set across different sampling steps and different values of $\eta$. The results are shown in Figure~\ref{supp:fig:sampling-selection}. From the line graph, it can be seen that selecting 200 sampling steps with $\eta=0.5$ yields the best generation quality. However, when sampling from the training set to generate simulated paired data, we prioritize faster sampling while preserving stochasticity; in the SIDD experiments, we use 50 sampling steps and set $\eta=1.0$, as stated in the main text.
\begin{figure}[t]
    \centering
    \includegraphics[width=0.8\textwidth, keepaspectratio]{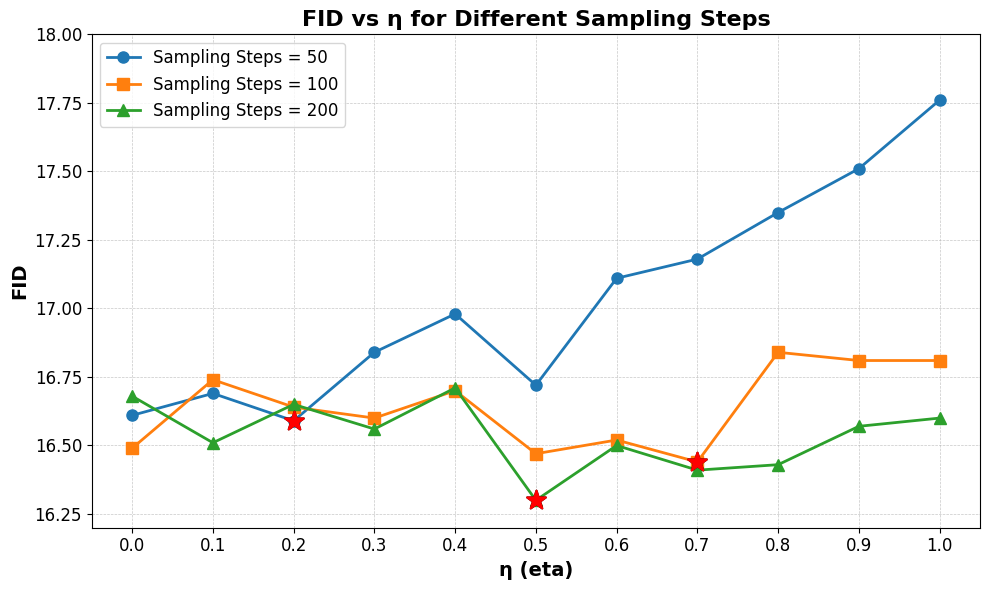}
    \caption{FID variation curves with respect to the number of sampling steps and the random term $\eta$.}
    \label{supp:fig:sampling-selection}
\end{figure}

\subsection{Analysis of Degradation Level}
\label{supp:subsec:degradation-level}
\begin{figure}[t]
    \centering
    \includegraphics[width=1\textwidth, keepaspectratio]{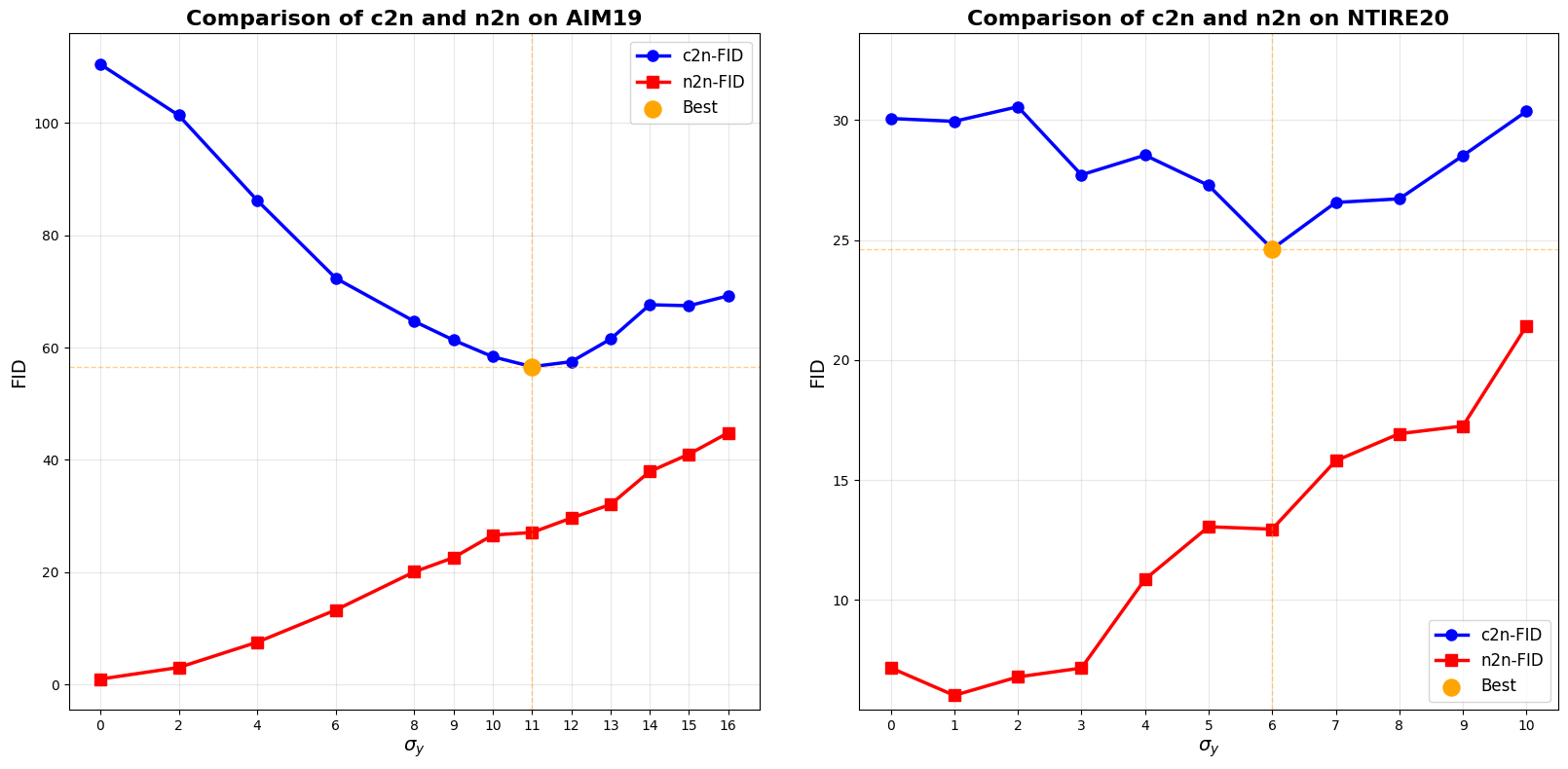}
    \caption{Impact of different $\tau_S$ values on the generated results for the AIM19 and NTIRE20 datasets.}
    \label{supp:fig:degradation-level}
\end{figure}
We investigate the influence of different effective diffusion noise levels $\tau_S$ on noise generation outcomes using the AIM19 and NTIRE20 datasets. For $\tau_V$, we consistently set
\begin{align*}
    \tau_V^2 = \tau_S^2 + \hat{\sigma}_n^2,
\end{align*}
where $\hat{\sigma}_n$ denotes the average noise standard deviation in the dataset.
In this ablation, the labels ``c2n'' and ``n2n'' indicate the source of the condition used by the conditional diffusion model. The ``c2n'' curve uses a clean image as the condition to generate a noisy image, whereas ``n2n'' first obtains a weak reconstruction from a noisy image and then uses this reconstruction as the condition to generate a noisy image.
The results are shown in Figure~\ref{supp:fig:degradation-level}. It can be observed that on both datasets, the FID curve for ``n2n'' is nearly monotonically increasing. The ``n2n'' condition is already close to the weak-reconstruction condition used during training when \(\tau_S\) is small. Increasing \(\tau_S\) injects additional Gaussian perturbations into this condition and therefore disrupts structural details, leading to worse FID. The ``c2n'' curves initially decline rapidly and then rise slowly. The decline corresponds to improved alignment between the corrupted clean and noisy states, while the subsequent rise reflects the increasing information loss caused by excessive perturbation. This illustrates the weak-coupling trade-off described in Section~\ref{sec:analysis}. Furthermore, note that the $\tau_S$ required for the FID to reach its optimal value on AIM19 is significantly larger than that on NTIRE20, which is qualitatively consistent with the dataset-dependent noise statistics in the two test sets.

\subsection{Optimal Denoising Steps}
\label{supp:subsec:denoising-steps}
Because the diffusion model for $x$ is applied only from the aligned state $x_{V}$, the reconstruction distribution $p_{X_0|X_V}$ is sharply concentrated around the clean image for datasets with relatively small noise variance, such as AIM19 and NTIRE20. Therefore, we choose one-step sampling to improve the efficiency of subsequent training for the conditional diffusion model.

For the SIDD dataset, where the noise variance is larger, we investigated the impact of different sampling steps on both the weak reconstruction \(p_{X_0|X_V}(\cdot|Y_S)\) and the final generation results.
The results are shown in Figure~\ref{supp:fig:denoising-steps}. It can be observed that the final generation FID improves rapidly within the first few steps and reaches a near-optimal range around 4--6 steps.
Using more steps does not further improve generation quality and may even degrade it. We therefore set the number of sampling steps to 5 as an efficiency-quality trade-off.
\begin{figure}[H]
    \centering
    \includegraphics[width=1\textwidth, keepaspectratio]{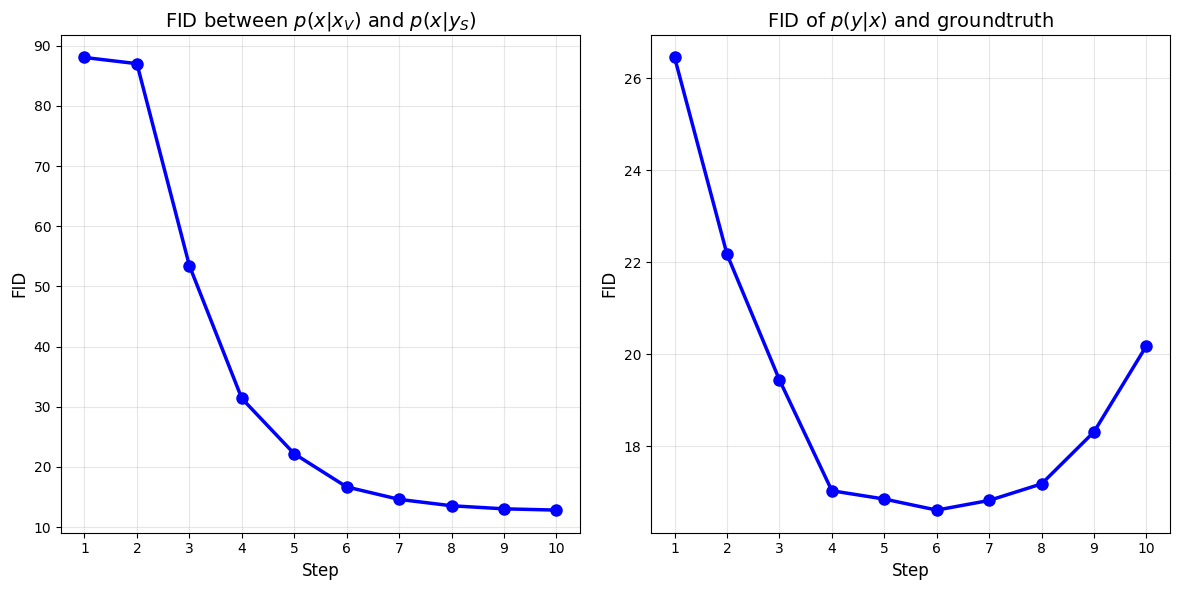}
    \caption{Impact of different sampling steps on the generation results for the SIDD dataset.}
    \label{supp:fig:denoising-steps}
\end{figure}

\clearpage
\subsection{Role of the Two-Path Diffusion Architecture}
\label{supp:subsec:two-path-noise-modeling}
\label{supp:subsec:noisy-data-usage}
Note that the clean-domain diffusion model in LUD-DIF can be used by itself to map a noisy observation to a clean reconstruction. Given a noisy observation \(y_0\), one may first diffuse it to \(y_S\), identify the aligned state with \(x_V\) through the weak-coupling approximation \(Y_S \equiv X_V\), and then apply the reverse clean-domain process \(p_{X_0|X_V}(\cdot|y_S)\) to obtain \(x_0\). This direct weak-reconstruction baseline can be summarized as \(y_0 \to y_S \equiv x_V \to x_0\), and we refer to it as the noise-to-clean (N2C) baseline in Table~\ref{supp:tab:c2n-n2c}.

In contrast, the full two-path diffusion architecture uses both the clean-domain diffusion model and the learned conditional diffusion model for \(Y|X\). In the clean-to-noisy (C2N) generation mode, we synthesize noisy observations from clean images, and the generated clean--noisy pairs are used to train the same downstream DnCNN denoiser. The comparison in Table~\ref{supp:tab:c2n-n2c} evaluates whether explicitly modeling the noisy-domain conditional path provides useful information beyond direct weak reconstruction. The results show that the two-path C2N scheme achieves better downstream denoising performance.
\begin{table}[H]
\centering
\caption{Effect of the Two-Path Diffusion Architecture for Noise Modeling}
\label{supp:tab:c2n-n2c}
\begin{tabular}{c|cc}
\hline
Method & PSNR $\uparrow$ & SSIM $\uparrow$ \\
\hline
Direct weak reconstruction (N2C) & 32.17 & 0.847 \\
\textbf{Two-path C2N (Ours)} & \textbf{35.46} & \textbf{0.896} \\
\hline
\end{tabular}
\end{table}

\end{document}